\documentclass[letterpaper]{article}
\usepackage[margin=1in,dvips]{geometry}
\usepackage{graphicx,psfrag,amsmath,amsthm,amssymb}
\usepackage{natbib}
\usepackage{url,color,booktabs}
\usepackage{hyperref}



\def \ifempty#1{\def\temp{#1} \ifx\temp\empty }

\newcommand{\A}{\ensuremath{\mathbf{A}}}

\newcommand{\W}{\ensuremath{\mathbf{W}}}

\newcommand{\Z}{\ensuremath{\mathbf{Z}}}
\renewcommand{\aa}{\ensuremath{\mathbf{a}}}
\renewcommand{\b}{\ensuremath{\mathbf{b}}}
\renewcommand{\c}{\ensuremath{\mathbf{c}}}


\newcommand{\w}{\ensuremath{\mathbf{w}}}
\newcommand{\x}{\ensuremath{\mathbf{x}}}
\newcommand{\y}{\ensuremath{\mathbf{y}}}
\newcommand{\z}{\ensuremath{\mathbf{z}}}
\newcommand{\0}{\ensuremath{\mathbf{0}}}


\newcommand{\blambda}{\ensuremath{\boldsymbol{\lambda}}}

\newcommand{\btheta}{\ensuremath{\boldsymbol{\theta}}}

\newcommand{\bDelta}{\ensuremath{\boldsymbol{\Delta}}}

\newcommand{\bPi}{\ensuremath{\boldsymbol{\Pi}}}

\newcommand{\bTheta}{\ensuremath{\boldsymbol{\Theta}}}


\newcommand{\bbR}{\ensuremath{\mathbb{R}}}


\newcommand{\calC}{\ensuremath{\mathcal{C}}}

\newcommand{\calF}{\ensuremath{\mathcal{F}}}

\newcommand{\calL}{\ensuremath{\mathcal{L}}}

\newcommand{\calO}{\ensuremath{\mathcal{O}}}

\newcommand{\calS}{\ensuremath{\mathcal{S}}}



\newcommand{\ceil}[1]{\lceil#1\rceil}
\newcommand{\floor}[1]{\lfloor#1\rfloor}
\newcommand{\abs}[2][]{%
  \ifempty{#1} {\left\lvert#2\right\rvert} \else {#1\lvert#2#1\rvert} \fi}
\newcommand{\norm}[2][]{%
  \ifempty{#1} {\left\lVert#2\right\rVert} \else {#1\lVert#2#1\rVert} \fi}


\newcommand{\caja}[4][1]{{%
    \renewcommand{\arraystretch}{#1}%
    \begin{tabular}[#2]{@{}#3@{}}%
      #4%
    \end{tabular}%
    }}



%
%
%
{%
\begin{list}{#1}{
\vspace{-\topsep}
\vspace{-\partopsep}
\setlength{\itemindent}{0cm}
\setlength{\rightmargin}{0cm}
\setlength{\listparindent}{0cm}
\settowidth{\labelwidth}{#1}
\setlength{\leftmargin}{\labelwidth}
\addtolength{\leftmargin}{\labelsep}
\setlength{\itemsep}{0cm}
}%
}%
{%
\end{list}
\vspace{-\topsep}
\vspace{-\partopsep}
}

%
%
{\begin{enumerate}%
}%
{\end{enumerate}}

\DeclareMathOperator*{\argmin}{arg\,min}
\DeclareMathOperator*{\argmax}{arg\,max}



\newcommand{\sgnop}{\operatorname{sgn}}
\newcommand{\sgn}[1]{\ensuremath{\sgnop\left(#1\right)}}








\theoremstyle{plain}
\newtheorem{thm}{Theorem}[section]

\newtheorem*{lemma*}{Lemma}

\newtheorem*{prop*}{Proposition}

\theoremstyle{definition}

\newtheorem*{defn*}{Definition}

\newtheorem*{exmp*}{Example}

\newtheorem*{conj*}{Conjecture}

\theoremstyle{remark}

\newtheorem*{rmk*}{Remark}

\graphicspath{{grf/},{grf-miguel/}}

\AtBeginDvi{}

\bibpunct[, ]{(}{)}{;}{a}{,}{,} 

\title{Model compression as constrained optimization, \\ with application to neural nets. \\ Part II: quantization.}
\author{
  Miguel \'A.\ Carreira-Perpi\~n\'an\hspace{5ex} Yerlan Idelbayev \\
  Electrical Engineering and Computer Science, University of California, Merced \\
  {\url{http://eecs.ucmerced.edu}}
}
\date{July 13, 2017}

\begin{document}

\maketitle

\begin{abstract}
  
  We consider the problem of deep neural net compression by quantization: given a large, reference net, we want to quantize its real-valued weights using a codebook with $K$ entries so that the training loss of the quantized net is minimal. The codebook can be optimally learned jointly with the net, or fixed, as for binarization or ternarization approaches. Previous work has quantized the weights of the reference net, or incorporated rounding operations in the backpropagation algorithm, but this has no guarantee of converging to a loss-optimal, quantized net. We describe a new approach based on the recently proposed framework of \emph{model compression as constrained optimization} \citep{Carreir17a}. This results in a simple iterative ``learning-compression'' algorithm, which alternates a step that learns a net of continuous weights with a step that quantizes (or binarizes/ternarizes) the weights, and is guaranteed to converge to local optimum of the loss for quantized nets. We develop algorithms for an adaptive codebook or a (partially) fixed codebook. The latter includes binarization, ternarization, powers-of-two and other important particular cases. We show experimentally that we can achieve much higher compression rates than previous quantization work (even using just 1 bit per weight) with negligible loss degradation.

\end{abstract}

\section{Introduction}
\label{s:intro}

The widespread application of deep neural nets in recent years has seen an explosive growth in the size of the training sets, the number of parameters of the nets, and the amount of computing power needed to train them. At present, very deep neural nets with upwards of many million weights are common in applications in computer vision and speech. Many of these applications are particularly useful in small devices, such as mobile phones, cameras or other sensors, which have limited computation, memory and communication bandwidth, and short battery life. It then becomes desirable to compress a neural net so that its memory storage is smaller and/or its runtime is faster and consumes less energy.

Neural net compression was a problem of interest already in the early days of neural nets, driven for example by the desire to implement neural nets in VLSI circuits. However, the current wave of deep learning work has resulted in a flurry of papers by many academic and particularly industrial labs proposing various ways to compress deep nets, some new and some not so new (see related work). Various standard forms of compression have been used in one way or another, such as low-rank decomposition, quantization, binarization, pruning and others. In this paper we focus on quantization, where the ordinarily unconstrained, real-valued weights of the neural net are forced to take values within a codebook with a finite number of entries. This codebook can be adaptive, so that its entries are learned together with the quantized weights, or (partially) fixed, which includes specific approaches such as binarization, ternarization or powers-of-two approaches.

Among compression approaches, quantization is of great interest because even crudely quantizing the weights of a trained net (for example, reducing the precision from double to single) produces considerable compression with little degradation of the loss of the task at hand (say, classification). However, this ignores the fact that the quantization is not independent of the loss, and indeed achieving a really low number of bits per weight (even just 1 bit, i.e., binary weights) would incur a large loss and make the quantized net unsuitable for practical use. Previous work has applied a quantization algorithm to a previously trained, reference net, or incorporated ad-hoc modifications to the basic backpropagation algorithm during training of the net. However, none of these approaches are guaranteed to produce upon convergence (if convergence occurs at all) a net that has quantized weights and has optimal loss among all possible quantized nets.

In this paper, our primary objectives are: 1) to provide a mathematically principled statement of the quantization problem that involves the loss of the resulting net, and 2) to provide an algorithm that can solve that problem up to local optima in an efficient and convenient way. Our starting point is a recently proposed formulation of the general problem of model compression as a constrained optimization problem \citep{Carreir17a}. We develop this for the case where the constraints represent the optimal weights as coming from a codebook. This results in a ``learning-compression'' (LC) algorithm that alternates SGD optimization of the loss over real-valued weights but with a quadratic regularization term, and quantization of the current real-valued weights. The quantization step takes a form that follows necessarily from the problem definition without ad-hoc decisions: $k$-means for adaptive codebooks, and an optimal assignment for fixed codebooks such as binarization, ternarization or powers-of-two (with possibly an optimal global scale). We then show experimentally that we can compress deep nets considerably more than previous quantization algorithms---often, all the way to the maximum possible compression, a single bit per weight, without significant error degradation.

\section{Related work on quantization of neural nets}
\label{s:related}

Much work exists on compressing neural nets, using quantization, low-rank decomposition, pruning and other techniques, see \citet{Carreir17a} and references therein. Here we focus exclusively on work based on quantization. Quantization of neural net weights was recognized as an important problem early in the neural net literature, often with the goal of efficient hardware implementation, and has received much attention recently. The main approaches are of two types. The first one consists of using low-precision, fixed-point or other weight representations through some form of rounding, even single-bit (binary) values. This can be seen as quantization using a fixed codebook (i.e., with predetermined values). The second approach learns the codebook itself as a form of soft or hard adaptive quantization. There is also work on using low-precision arithmetic directly during training (see \citealp{Gupta_15a} and references therein) but we focus here on work whose goal is to quantize a neural net of real-valued, non-quantized weights.

\subsection{Quantization with a fixed codebook}
\label{s:related:quant-fixed}

Work in the 1980s and 1990s explored binarization, ternarization and general powers-of-two quantization \citep{Fiesler_90a,Marches_93a,TangKwan93a}. These same quantization forms have been revisited in recent years \citep{HwangSung14a,Courbar_15a,Rasteg_16a,Hubara_16b,Li_16b,Zhou_16b,Zhu_17a}, with impressive results on large neural nets trained on GPUs, but not much innovation algorithmically. The basic idea in all these papers is essentially the same: to modify backpropagation so that it encourages binarization, ternarization or some other form of quantization of the neural net weights. The modification involves evaluating the gradient of the loss $L(\w)$ at the quantized weights (using a specific quantization or ``rounding'' operator that maps a continuous weight to a quantized one) but applying the update (gradient or SGD step) to the continuous (non-quantized) weights. Specific details vary, such as the quantization operator or the type of codebook. The latter has recently seen a plethora of minor variations: $\{-1,0,+1\}$ \citep{HwangSung14a}, $\{-1,+1\}$ \citep{Courbar_15a}, $\{-a,+a\}$ \citep{Rasteg_16a,Zhou_16b}, $\{-a,0,+a\}$ \citep{Li_16b} or $\{-a,0,+b\}$ \citep{Zhu_17a}.

One important problem with these approaches is that their modification of backpropagation is ad-hoc, without guarantees of converging to a net with quantized weights and low loss, or of converging at all. Consider binarization to $\{-1,+1\}$ for simplicity. The gradient is computed at a binarized weight vector $\w \in \{-1,+1\}^P$, of which there are a finite number ($2^P$, corresponding to the hypercube corners), and none of these will in general have gradient zero. Hence training will never stop, and the iterates will oscillate indefinitely. Practically, this is stopped after a certain number of iterations, at which time the weight distribution is far from binarized (see fig.~2 in \citealp{Courbar_15a}), so a drastic binarization must still be done. Given these problems, it is surprising that these techniques do seem to be somewhat effective empirically in quantizing the weights and still achieve little loss degradation, as reported in the papers above. Exactly how effective they are, on what type of nets and why this is so is an open research question.

In our LC algorithm, the optimization essentially happens in the continuous weight space by minimizing a well-defined objective (the penalized function in the L step), but this is regularly corrected by a quantization operator (C step), so that the algorithm gradually converges to a truly quantized weight vector while achieving a low loss (up to local optima). The form of both L and C steps, in particular of the quantization operator (our compression function $\bPi(\w)$), follows in a principled, optimal way from the constrained form of the problem~\eqref{e:compression-problem}. That is, given a desired form of quantization (e.g.\ binarization), the form of the C step is determined, and the overall algorithm is guaranteed to converge to a valid (binary) solution.

Also, we emphasize that there is little practical reason to use certain fixed codebooks, such as $\{-1,+1\}$ or $\{-a,+a\}$, instead of an adaptive codebook such as $\{c_1,c_2\}$ with $c_1,c_2 \in \bbR$. The latter is obviously less restrictive, so it will incur a lower loss. And its hardware implementation is about as efficient: to compute a scalar product of an activation vector with a quantized weight vector, all we require is to sum activation values for each centroid and to do two floating-point multiplications (with $c_1$ and $c_2$). Indeed, our experiments in section~\ref{s:expts:LeNet} show that using an adaptive codebook with $K=2$ clearly beats using $\{-1,+1\}$.

\subsection{Quantization with an adaptive codebook}
\label{s:related:quant-adaptive}

Quantization with an adaptive codebook is, obviously, more powerful than with a fixed codebook, even though it has to store the codebook itself. Quantization using an adaptive codebook has also been explored in the neural nets literature, using approaches based on soft quantization \citep{NowlanHinton92a,Ullric_17a} or hard quantization \citep{Fiesler_90a,Marches_93a,TangKwan93a,Gong_15a,Han_15a}, and we discuss this briefly.

Given a set of real-valued elements (scalars or vectors), in adaptive quantization we represent (``quantize'') each element by exactly one entry in a codebook. The codebook and the assignment of values to codebook entries should minimize a certain distortion measure, such as the squared error. Learning the codebook and assignment is done by an algorithm, possibly approximate (such as $k$-means for the squared error). Quantization is related to clustering and often one can use the same algorithm for both (e.g.\ $k$-means), but the goal is different: quantization seeks to minimize the distortion rather than to model the data as clusters. For example, a set of values uniformly distributed in $[-1,1]$ shows no clusters but may be subject to quantization for compression purposes. In our case of neural net compression, we have an additional peculiarity that complicates the optimization: the quantization and the weight values themselves should be jointly learned to minimize the loss of the net on the task.

Two types of clustering exist, hard and soft clustering. In hard clustering, each data point is assigned to exactly one cluster (e.g.\ $k$-means clustering). In soft clustering, we have a probability distribution over points and clusters (e.g.\ Gaussian mixture clustering). Likewise, two basic approaches exist for neural net quantization, based on hard and soft quantization. We review each next.

In hard quantization, each weight is assigned to exactly one codebook value. This is the usual meaning of quantization. This is a difficult problem because, even if the loss is differentiable over the weights, the assignment makes the problem inherently combinatorial. Previous work \citep{Gong_15a,Han_15a} has run a quantization step ($k$-means) as a postprocessing step on a reference net (which was trained to minimize the loss). This is suboptimal in that it does not learn the weights, codebook and assignment jointly. We call this ``direct compression'' and discuss it in more detail in section~\ref{s:DC}. Our LC algorithm does learn the weights, codebook and assignment jointly, and converges to a local optimum of problem~\eqref{e:compression-problem}.

In soft quantization, the assignment of values to codebook entries is based on a probability distribution. This was originally proposed by \citet{NowlanHinton92a} as a way to share weights softly in a neural net with the goal of improving generalization, and has been recently revisited with the goal of compression \citep{Ullric_17a}. The idea is to penalize the loss with the negative log-likelihood of a Gaussian mixture (GM) model on the scalar weights of the net. This has the advantage of being differentiable and of coadapting the weights and the GM parameters (proportions, means, variances). However, it does not uniquely assign each weight to one mean, in fact the resulting distribution of weights is far from quantized; it simply encourages the creation of Gaussian clusters of weights, and one has to assign weights to means as a postprocessing step, which is suboptimal. The basic problem is that a GM is a good model (better than $k$-means) for noisy or uncertain data, but that is not what we have here. Quantizing the weights for compression implies a constraint that certain weights must take exactly the same value, without noise or uncertainty, and optimize the loss. We seek an optimal assignment that is truly hard, not soft. Indeed, a GM prior is to quantization what a quadratic prior (i.e., weight decay) is to sparsity: a quadratic prior encourages all weights to be small but does not encourage some weights to be exactly zero, just as a GM prior encourages weights to form Gaussian clusters but not to become groups of identical weights.

\section{Neural net quantization as constrained optimization and the ``learning-compression'' (LC) algorithm}
\label{s:LC}

As noted in the introduction, compressing a neural net optimally means finding the compressed net that has (locally) lowest loss. Our first goal is to formulate this mathematically in a way that is amenable to nonconvex optimization techniques. Following \citet{Carreir17a}, we define the following \emph{model compression as constrained optimization} problem:
\begin{equation}
  \label{e:compression-problem}
  \textcolor{blue}{\min_{\w,\bTheta}{ L(\w) } \quad \text{s.t.} \quad \w = \bDelta(\bTheta)}
\end{equation}
where $\w \in \bbR^P$ are the real-valued weights of the neural net, $L(\w)$ is the loss to be minimized (e.g.\ cross-entropy for a classification task on some training set), and the constraint $\w = \bDelta(\bTheta)$ indicates that the weights must be the result of decompressing a low-dimensional parameter vector \bTheta. This corresponds to quantization and will be described in section~\ref{s:quant}. Problem~\eqref{e:compression-problem} is equivalent to the unconstrained problem ``$\min_{\bTheta}{ L(\bDelta(\bTheta)) }$'', but this is nondifferentiable with quantization (where \bDelta\ is a discrete mapping), and introducing the auxiliary variable \w\ will lead to a convenient algorithm.

Our second goal is to solve this problem via an efficient algorithm. Although this might be done in different ways, a particularly simple one was proposed by \citet{Carreir17a} that achieves separability between the data-dependent part of the problem (the loss) and the data-independent part (the weight quantization). First, we apply a penalty method to solve~\eqref{e:compression-problem}. We consider here the augmented Lagrangian method \citep{NocedalWright06a}, where $\blambda \in \bbR^P$ are the Lagrange multiplier estimates%
\footnote{All norms are $\norm{\cdot}_2$ throughout the paper unless indicated otherwise.}:
\begin{align}
  \label{e:augLag}
  \calL_A(\w,\bTheta,\blambda;\mu) &= L(\w) - \blambda^T (\w - \bDelta(\bTheta)) + \frac{\mu}{2} \norm{\w - \bDelta(\bTheta)}^2 \\
  \label{e:augLag2}
  &= L(\w) + \frac{\mu}{2} \norm[\Big]{\w - \bDelta(\bTheta) - \frac{1}{\mu} \blambda}^2 - \frac{1}{2\mu} \norm{\blambda}^2.
\end{align}
The augmented Lagrangian method works as follows. For fixed $\mu \ge 0$, we optimize $\calL_A(\w,\bTheta,\blambda;\mu)$ over $(\w,\bTheta)$ accurately enough. Then, we update the Lagrange multiplier estimates as $\blambda \leftarrow \blambda - \mu (\w - \bDelta(\bTheta))$. Finally, we increase $\mu$. We repeat this process and, in the limit as $\mu \rightarrow \infty$, the iterates $(\w,\bTheta)$ tend to a local KKT point (typically, a local minimizer) of the constrained problem~\eqref{e:compression-problem}. A simpler but less effective penalty method, the quadratic penalty method, results from setting $\blambda = \0$ throughout; we do not describe it explicitly, see \citet{Carreir17a}.

Finally, in order to optimize $\calL_A(\w,\bTheta,\blambda;\mu)$ over $(\w,\bTheta)$, we use alternating optimization. This gives rise to the following two steps:
\begin{itemize}
\item \textbf{L step: learning}
  \begin{equation}
    \label{e:Lstep}
    \textcolor{blue}{\min_{\w}{ L(\w) + \frac{\mu}{2} \norm[\Big]{\w - \bDelta(\bTheta) - \frac{1}{\mu} \blambda}^2 }.}
  \end{equation}
  This involves optimizing a regularized version of the loss, which pulls the optimizer towards the currently quantized weights. For neural nets, it can be solved with stochastic gradient descent (SGD).
\item \textbf{C step: compression (here, quantization)}
  \begin{equation}
    \label{e:Cstep}
    \textcolor{blue}{\min_{\bTheta}{ \norm[\Big]{\w - \frac{1}{\mu} \blambda - \bDelta(\bTheta)}^2 } \quad \Longleftrightarrow \quad \bTheta = \bPi \Big(\w - \frac{1}{\mu} \blambda \Big)}.
  \end{equation}
  We describe this in section~\ref{s:quant}. Solving this problem is equivalent to quantizing optimally the current real-valued weights $\w - \frac{1}{\mu} \blambda$, and can be seen as finding their orthogonal projection $\bPi \big(\w - \frac{1}{\mu} \blambda \big)$ on the feasible set of quantized nets.
\end{itemize}
This algorithm was called the ``learning-compression'' (LC) algorithm by \citet{Carreir17a}.

We note that, throughout the optimization, there are two weight vectors that evolve simultaneously and coincide in the limit as $\mu \rightarrow \infty$: \w\ (or, more precisely, $\w - \frac{1}{\mu} \blambda$) contains real-valued, non-quantized weights (and this is what the L step optimizes over); and $\bDelta(\bTheta)$ contains quantized weights (and this is what the C step optimizes over). In the C step, $\bDelta(\bTheta)$ is the projection of the current \w\ on the feasible set of quantized vectors. In the L step, \w\ optimizes the loss while being pulled towards the current $\bDelta(\bTheta)$.

The formulation~\eqref{e:compression-problem} and the LC algorithm have two crucial advantages. The first one is that we get a convenient separation between learning and quantization which allows one to solve each step by reusing existing code. The data-dependent part of the optimization is confined within the L step. This part is the more computationally costly, requiring access to the training set and the neural net, and usually implemented in a GPU using SGD. The data-independent part of the optimization, i.e., the compression of the weights (here, quantization), is confined within the C step. This needs access only to the vector of current, real-valued weights (not to the training set or the actual neural net).

The second advantage is that the form of the C step is determined by the choice of quantization form (defined by $\bDelta(\bTheta)$), and the algorithm designer need not worry about modifying backpropagation or SGD in any way for convergence to a valid solution to occur. For example, if a new form of quantization were discovered and we wished to use it, all we have to do is put it in the decompression mapping form $\bDelta(\bTheta)$ and solve the compression mapping problem~\eqref{e:Cstep} (which depends only on the quantization technique, and for which a known algorithm may exist). This is unlike much work in neural net quantization, where various, somewhat arbitrary quantization or rounding operations are incorporated in the usual backpropagation training (see section~\ref{s:related}), which makes it unclear what problem the overall algorithm is optimizing, if it does optimize anything at all.

In section~\ref{s:quant}, we solve the compression mapping problem~\eqref{e:Cstep} for the adaptive and fixed codebook cases. For now, it suffices to know that it will involve running $k$-means with an adaptive codebook and a form of rounding with a fixed codebook.

\subsection{Geometry of the neural net quantization problem}
\label{s:geom}

\begin{figure}[p]
  \centering
  \begin{tabular}{@{}c@{\hspace{0.04\linewidth}}c@{\hspace{0.04\linewidth}}c@{}}
    \psfrag{XX}[][]{~~\caja{c}{c}{$\overline{\w}$ \\ (reference)}}
    \psfrag{FF}[l][Bl]{\caja{c}{c}{$\w^*$ (optimal \\ compressed)}}
    \psfrag{Fd}[l][Bl]{\caja{c}{c}{$\bDelta(\bTheta^{\text{DC}})$ \\ (direct \\ compression)}}
    \psfrag{R}[l][l]{\w-space}
    \psfrag{mappings}[][]{\caja{c}{c}{feasible nets $\calF_{\w}$ \\ (decompressible \\ by \bDelta)}}
    \includegraphics[height=0.35\linewidth,bb=210 580 435 838,clip]{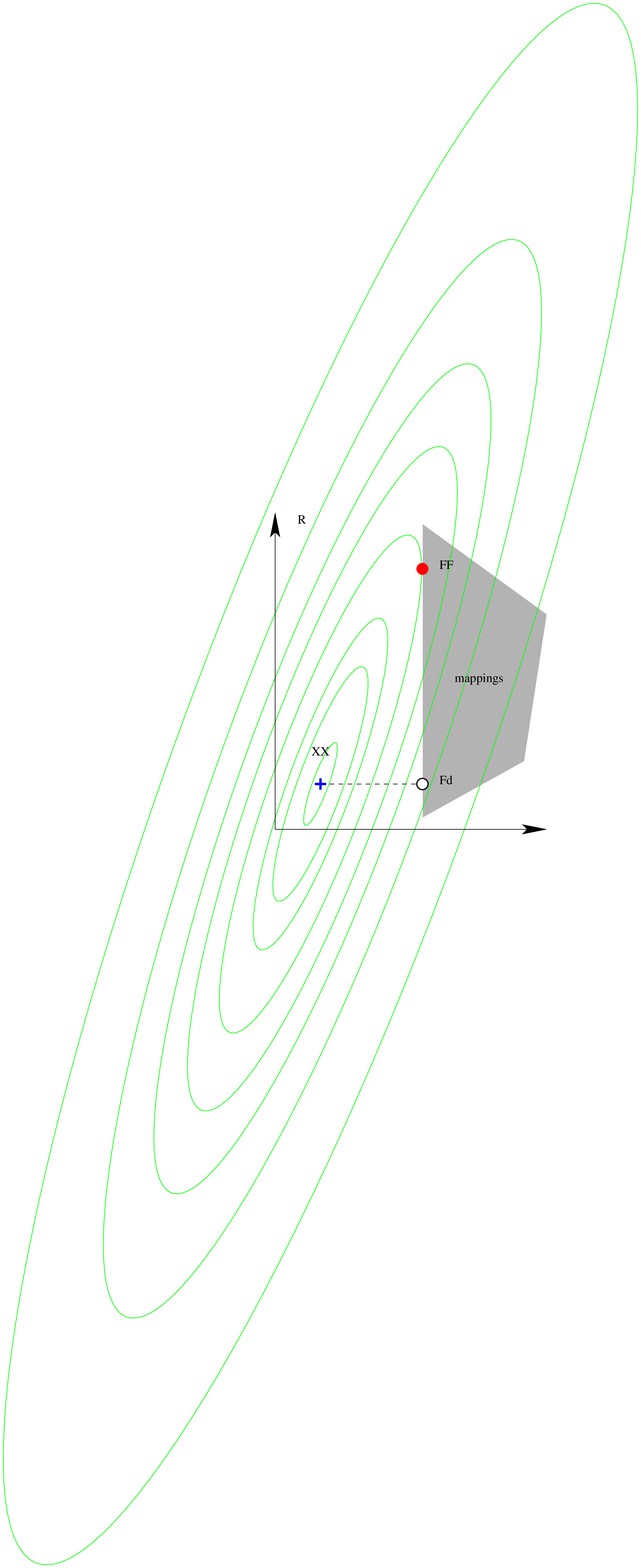} &
    \psfrag{XX1}[t][]{$\overline{\w}_1$}
    \psfrag{XX2}[t][]{$\overline{\w}_2$}
    \psfrag{FF}[l][Bl]{$\w^*$}
    \psfrag{Fd1}[l][Bl]{$\bDelta(\bTheta^{\text{DC}}_1)$}
    \psfrag{Fd2}[l][Bl]{$\bDelta(\bTheta^{\text{DC}}_2)$}
    \psfrag{R}[l][l]{\w-space}
    \psfrag{mappings}[][]{$\calF_{\w}$}
    \includegraphics[height=0.35\linewidth,bb=15 27 240 285,clip]{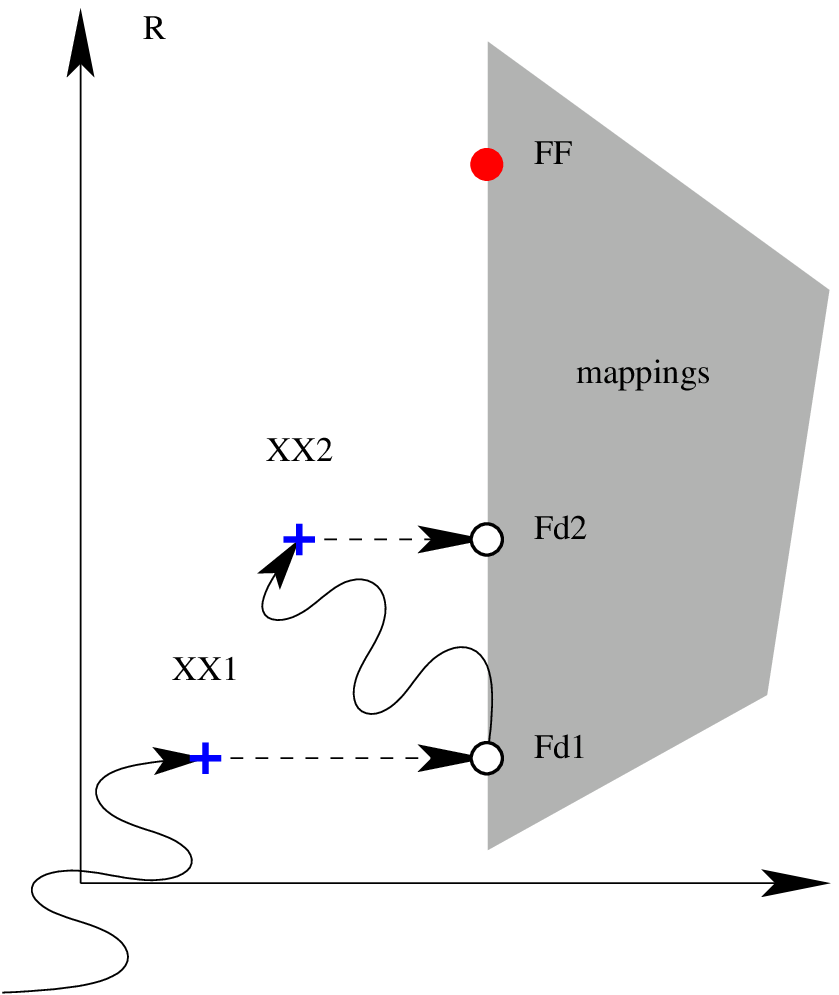} &
    \psfrag{XX}[t][]{$\overline{\w}$}
    \psfrag{FF1}[l][Bl]{$\w^*_1$}
    \psfrag{FF2}[Bl][l]{$\w^*_2$}
    \psfrag{FF3}[][l]{$\w^*_3$}
    \psfrag{FF4}[l][Bl]{$\w^*_4$}
    \psfrag{Fd1}[l][l][0.55]{$\bDelta(\bTheta^{\text{DC}}_1)$}
    \psfrag{Fd2}[l][l][0.55]{$\bDelta(\bTheta^{\text{DC}}_2)$}
    \psfrag{Fd3}[bl][l][0.55]{$\bDelta(\bTheta^{\text{DC}}_3)$}
    \psfrag{Fd4}[lt][l][0.55]{$\bDelta(\bTheta^{\text{DC}}_4)$}
    \psfrag{F1}[l][Bl]{$\calF^1_{\w}$}
    \psfrag{F2}[l][Bl]{$\calF^2_{\w}$}
    \psfrag{F3}[l][Bl]{$\calF^3_{\w}$}
    \psfrag{F4}[l][Bl]{$\calF^4_{\w}$}
    \psfrag{R}[l][l]{\w-space}
    \includegraphics[height=0.35\linewidth,bb=210 580 435 838,clip]{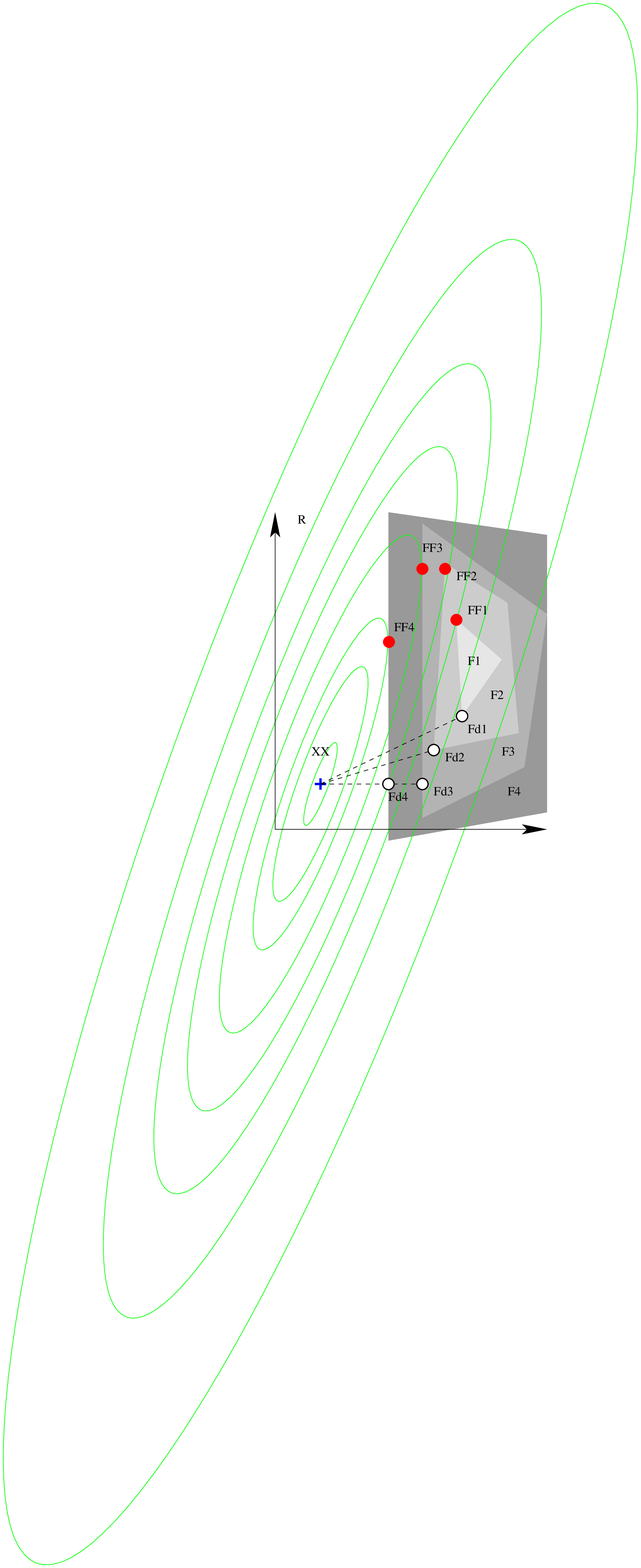}
  \end{tabular} \\[6ex]
  \begin{tabular}{@{}c@{\hspace{0.04\linewidth}}c@{}}
    \psfrag{w1}[][]{$w_1$}
    \psfrag{w2}[][]{$w_2$\raisebox{0.20\linewidth}[0pt][0pt]{\makebox[0pt][c]{\calF}}}
    \psfrag{c}[][r][1][-90]{$c$}
    \includegraphics*[height=0.33\linewidth]{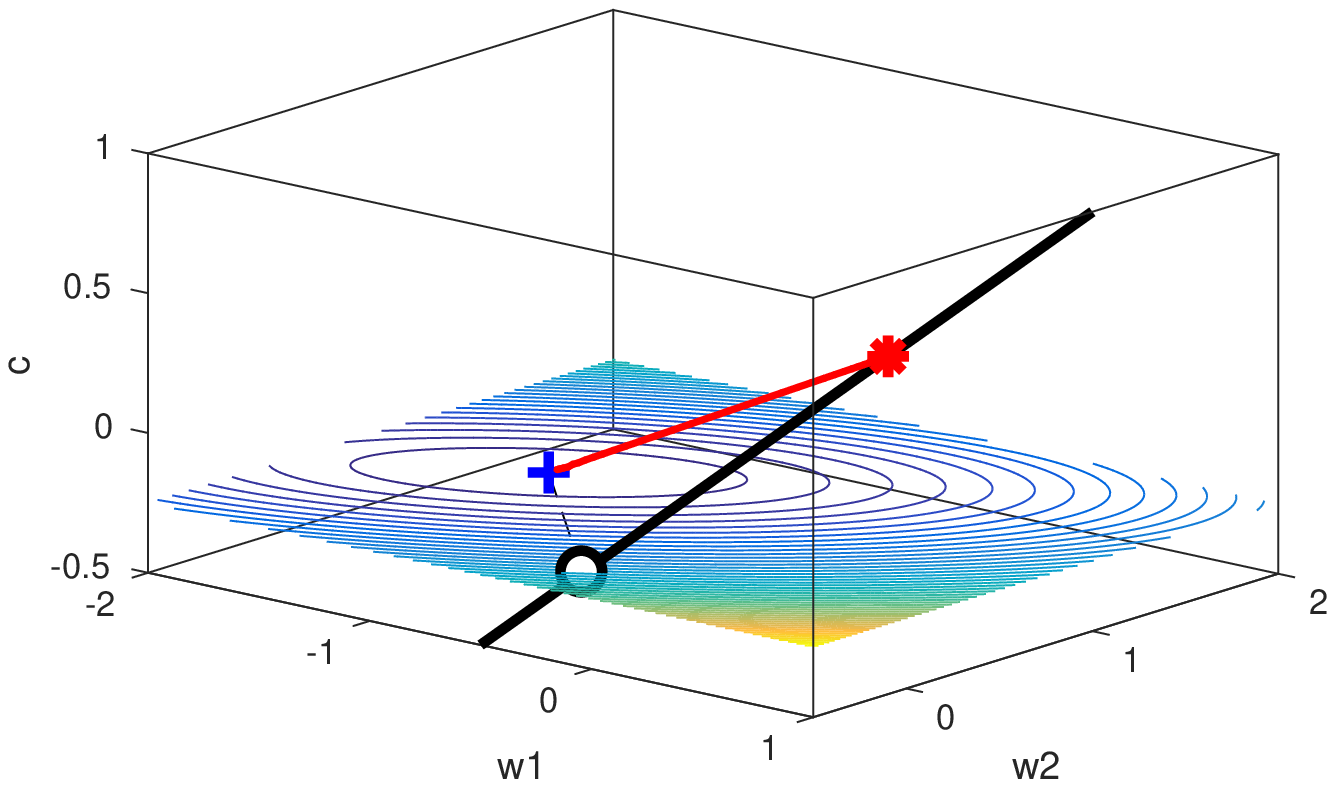} &
    \psfrag{w1}[][B]{$w_1$}
    \psfrag{w2}[][][1][-90]{$w_2$}
    \psfrag{xU}[br][b]{\raisebox{14ex}{\makebox[0pt][r]{\w-space}}\raisebox{2ex}{$\overline{\w}$}}
    \psfrag{xC}[lt][lb]{~~$\w^*$\raisebox{0.08\linewidth}[0pt][0pt]{\makebox[0pt][r]{$\calF_{\w}$}}}
    \psfrag{xF}[l][l][0.9]{~~$\bDelta(\bTheta^{\text{DC}})$$=$$\binom{c^{\text{DC}}}{c^{\text{DC}}}$}
    \includegraphics*[height=0.33\linewidth]{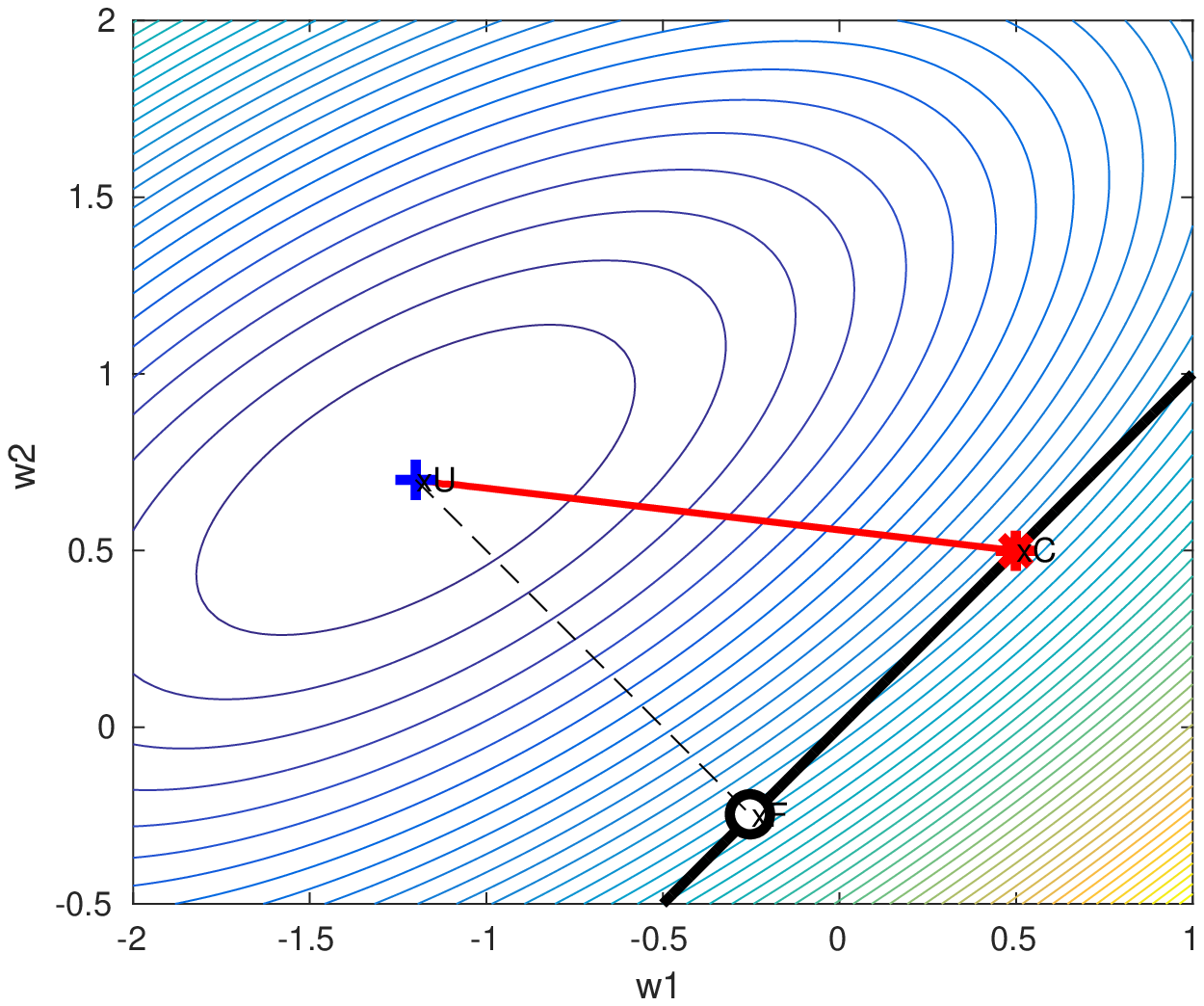}
  \end{tabular}
  \caption[caption]{Schematic representation of the idea of model compression by constrained optimization, in general (top 3 plots) and in particular for quantization (bottom 2 plots). The top of this figure is adapted from \citet{Carreir17a}.\\
    \emph{Plots 1--3 (top row)}: illustration of the uncompressed model space (\w-space $= \bbR^P$), the contour lines of the loss $L(\w)$ (green lines), and the set of compressed models (the feasible set $\calF_{\w} = \{\w \in \bbR^P\mathpunct{:}\ \w = \bDelta(\bTheta) \text{ for } \bTheta \in \bbR^Q\}$, grayed areas), for a generic compression technique \bDelta. The \bTheta-space is not shown. $\overline{\w}$ optimizes $L(\w)$ but is infeasible (no \bTheta\ can decompress into it). The direct compression $\w^{\text{DC}} = \bDelta(\bTheta^{\text{DC}})$ is feasible but not optimal compressed (not optimal in the feasible set). $\w^* = \bDelta(\bTheta^*)$ is optimal compressed. Plot 2 shows two local optima $\overline{\w}_1$ and $\overline{\w}_2$ of the loss $L(\w)$, and their respective DC points (the contour lines are omitted to avoid clutter). Plot 3 shows several feasible sets, corresponding to different compression levels ($\calF^1_{\w}$ is most compression).\\
    \emph{Plots 4--5 (bottom row)}: illustration when \bDelta\ corresponds to quantization, in the particular case of a codebook of size $K=1$ and a 2-weight net, so $\w = (w_1,w_2) \in \bbR^2$, $\bTheta = c \in \bbR$ and $\bDelta(\bTheta) = \binom{c}{c} \in \bbR^2$. Plot 4 is the joint space $(\w,c)$ and plot 5 is its projection in \w-space (as in plot 1). In plot 4, the black line is the feasible set $\calF = \{(\w,\bTheta) \in \bbR^P \times \bbR^Q\mathpunct{:}\ \w = \bDelta(\bTheta)\}$, corresponding to the constraints $w_1 = w_2 = c$. In plot 5, the black line is the feasible set $\calF_{\w} = \{\w \in \bbR^P\mathpunct{:}\ \w = \bDelta(\bTheta) \text{ for } \bTheta \in \bbR^Q\}$, corresponding to the constraint $w_1 = w_2$. The red line is the quadratic-penalty method path $(\w(\mu),c(\mu))$, which for this simple case is a straight line segment from the point $(\overline{\w},c^{\text{DC}})$ to the solution $(\w^*,c^*)$. We mark three points: blue \textcolor{blue}{$+$} represents the reference net $\overline{\w}$ at the DC codebook $\bTheta = c^{\text{DC}}$ (the beginning of the path); red \textcolor{red}{$\ast$} is the solution $(\w^*,c^*)$ (the end of the path); and white $\circ$ is the direct compression point $(\bDelta(\bTheta^{\text{DC}}),\bTheta^{\text{DC}}) = \smash{\big( \binom{c^{\text{DC}}}{c^{\text{DC}}},c^{\text{DC}} \big)}$.}
  \label{f:LC-illustration}
\end{figure}

Problem~\eqref{e:compression-problem} can be written as $\min_{\w,\bTheta}{ L(\w) }$ s.t.\ $\w,\bTheta \in \calF$, where the objective function is the loss $L(\w)$ on the real-valued weights and the feasible set on \w\ and the low-dimensional parameters \bTheta\ is:
\begin{equation}
  \label{e:feasible-set}
  \calF = \{(\w,\bTheta) \in \bbR^P \times \bbR^Q\mathpunct{:}\ \w = \bDelta(\bTheta)\}.
\end{equation}
We also define the feasible set in \w-space:
\begin{equation}
  \label{e:feasible-set-w}
  \calF_{\w} = \{\w \in \bbR^P\mathpunct{:}\ \w = \bDelta(\bTheta) \text{ for } \bTheta \in \bbR^Q\}
\end{equation}
which contains all high-dimensional models \w\ that can be obtained by decompressing some low-dimensional model \bTheta. Fig.~\ref{f:LC-illustration} (plots 1--3) illustrates the geometry of the problem in general.

Solving the C step requires minimizing (where we write \w\ instead of $\w - \frac{1}{\mu} \blambda$ for simplicity of notation):
\begin{equation}
  \label{e:compression-mapping}
  \textcolor{blue}{\bPi(\w) = \argmin_{\bTheta}{\norm{\w - \bDelta(\bTheta)}^2}.}
\end{equation}
We call $\bDelta\mathpunct{:}\ \bTheta \in \bbR^Q \rightarrow \w \in \bbR^P$ the \emph{decompression mapping} and $\bPi\mathpunct{:}\ \w \in \bbR^P \rightarrow \bTheta \in \bbR^Q$ the \emph{compression mapping}. In quantization, this has the following meaning:
\begin{itemize}
\item $\bTheta = \{\calC,\Z\}$ consists of the codebook (if the codebook is adaptive) and the assignments of weight-to-codebook-entries. The assignments can be encoded as 1-of-$K$ vectors $\Z^T = (\z_1,\dots,\z_P)$ or directly as $P$ indices in $\{1,\dots,K\}$ for a codebook with $K$ entries.
\item The decompression mapping $\w_{\calC} = \bDelta(\calC,\Z)$ uses the codebook and assignments as a lookup table to generate a real-valued but quantized weight vector $\w_{\calC}$. This vector is used in the L step as a regularizer.
\item The compression mapping $\{\calC,\Z\} = \bPi(\w)$ learns optimally a codebook and assignments given a real-valued, non-quantized weight vector \w\ (using $k$-means or a form of rounding, see section~\ref{s:quant}). All the C step does is solve for the compression mapping.
\end{itemize}
As shown by \citet{Carreir17a}, the compression mapping $\bPi(\w)$ finds the orthogonal projection of \w\ on the feasible set $\calF_{\w}$, which we call $\w_{\calC}$.

For quantization, the geometry of the constrained optimization formulation is as follows. The feasible set \calF\ can be written as the union of a combinatorial number of linear subspaces $\calS_j$ (containing the origin), where $\calS_j$ is of the form $\{ \w_i = \c_k,\ \forall i=1,\dots,P,\ k \in \{1,\dots,K\} \}$. Each such subspace defines a particular assignment of the $P$ weights to the $K$ centroids $\calC = \{\c_1,\dots,\c_K\}$. There are $K^P$ assignments. If we knew the optimal assignment, the feasible set would be a single linear subspace, and the weights could be eliminated (using $\w_i = \c_k$) to yield an unconstrained objective $L(\calC)$ of $K$ tunable vectors (shared weights in neural net parlance), which would be simple to optimize. What makes the problem hard is that we do not know the optimal assignment. Depending on the dimensions $P$ and $K$, these subspaces may look like lines, planes, etc., always passing through the origin in $(\w,\calC)$ space. Geometrically, the union of these $K^P$ subspaces is a feasible set with both a continuous structure (within each subspace) and a discrete one (the number of subspaces is finite but very large).

Fig.~\ref{f:LC-illustration} (plots 4--5) shows the actual geometry for the case of a net with $P = 2$ weights and a codebook with $K=1$ centroid. This can be exactly visualized in 3D $(w_1,w_2,c)$ because the assignment variables $z_{11} = z_{21} = 1$ are redundant and can be eliminated: $\min_{w_1,w_2,c}{ L(w_1,w_2) }$ s.t.\ $w_1 = c$, $w_2 = c$. The compression mapping is easily seen to be $\bPi(\w) = \frac{w_1+w_2}{2} = c$, and $\bDelta(\bPi(\w)) = \frac{w_1+w_2}{2} \binom{1}{1}$ is indeed the orthogonal projection of \w\ onto the diagonal line $w_1 = w_2$ in \w-space (the feasible set). This particular case is, however, misleading in that the constraints involve a single linear subspace rather than the union of a combinatorial number of subspaces. It can be solved simply and exactly by setting $w_1 = w_2 = c$ and eliminating variables into $L(w_1,w_2) = L(c,c)$.

\subsection{Convergence of the LC algorithm}
\label{s:conv}

Convergence of the LC algorithm to a local KKT point (theorem 5.1 in \citealp{Carreir17a}) is guaranteed for smooth problems (continuously differentiable loss $L(\w)$ and decompression mapping $\bDelta(\bTheta)$) if $\mu \rightarrow \infty$ and optimization of the penalty function~\eqref{e:augLag} is done accurately enough for each $\mu$. However, in quantization the decompression mapping $\bDelta(\bTheta)$ is discrete, given by a lookup table, so the theorem does not apply. 

In fact, neural net quantization is an NP-complete problem even in simple cases. For example, consider least-squares linear regression with weights in $\{-1,+1\}$. This corresponds to binarization of a single-layer, linear neural net. The loss $L(\w)$ is quadratic, so the optimization problem is a binary quadratic problem over the weights, which is NP-complete \citep{GareyJohnson79a}. However, the LC algorithm will still converge to a ``local optimum'' in the same sense that the $k$-means algorithm is said to converge to a local optimum: the L step cannot improve given the C step, and vice versa. While this will generally not be the global optimum of problem~\eqref{e:compression-problem}, it will be a good solution in that the loss will be low (because the L step continuously minimizes it in part), and the LC algorithm is guaranteed to converge to a weight vector \w\ that satisfies the quantization constraints (e.g.\ weights in $\{-1,+1\}$ for binarization). Our experiments confirm the effectiveness of the LC algorithm for quantization, consistently outperforming other approaches over a range of codebook types and sizes.

\subsection{Practicalities of the LC algorithm}

We give pseudocode for three representative cases of the resulting LC algorithms: adaptive codebook (fig.~\ref{f:pseudocode-adaptive}), fixed codebook (fig.~\ref{f:pseudocode-fixed}) and binarization with global scale (fig.~\ref{f:pseudocode-bin-scale}).

\begin{figure}[t]
  \centering
  \setlength{\fboxsep}{1ex}
  \framebox{%
    \begin{minipage}[c]{0.80\linewidth}
      \begin{tabbing}
        n \= n \= n \= n \= n \= \kill
        \underline{\textbf{input}} \caja{t}{l}{$K \ge 1$ (codebook size), \\ training data and neural net architecture with weights \w} \\
        $\w \leftarrow \overline{\w} = \argmin_{\w}{ L(\w) }$ \` {\small\textsf{reference net}} \\
        $(\calC,\Z) \leftarrow \text{$k$-means}(\overline{\w})$ \` {\small\textsf{learn codebook and assignments}} \\
        $\w_{\calC} \leftarrow \bDelta(\calC,\Z)$ \` {\small\textsf{quantized reference net}} \\
        $\blambda \leftarrow \0$ \\
        \underline{\textbf{for}} $\mu = \mu_0 < \mu_1 < \dots < \infty$ \+ \\
        $\w \leftarrow \argmin_{\w}{ L(\w) + \smash{\frac{\mu}{2} \norm{\smash{\w - \w_{\calC} - \frac{1}{\mu} \blambda}}^2} }$ \` {\small\textsf{L step: learn net}} \\
        $(\calC,\Z) \leftarrow \text{$k$-means}\big( \w - \frac{1}{\mu} \blambda \big)$ \` {\small\textsf{C step: learn codebook and assignments\dots}} \\
        $\w_{\calC} \leftarrow \bDelta(\calC,\Z)$ \` {\small\textsf{\dots and quantize reference net}} \\
        $\blambda \leftarrow \blambda - \mu (\w - \w_{\calC})$ \` {\small\textsf{Lagrange multipliers}} \\
        \textbf{if} $\norm{\w - \w_{\calC}}$ is small enough \textbf{then} exit the loop \- \\
        \underline{\textbf{return}} $\w_{\calC},\calC,\Z$
      \end{tabbing}
    \end{minipage}
  }
  \caption{Pseudocode for the LC algorithm for quantization of scalar weights with an adaptive codebook, augmented-Lagrangian version. When ran on the reference net $\overline{\w}$, $k$-means is initialized from $k$-means++; when ran in the C step, $k$-means is initialized from the previous iteration's codebook \calC. The C step compression mapping $\bTheta = \bPi(\w)$ is $(\calC,\Z) = \text{$k$-means}(\w)$, where $\calC = \{c_1,\dots,c_K\} \subset \bbR$ is the codebook and $\Z = (z_{ik}) \in \{0,1\}^{PK}$ the assignments, satisfying $\smash{\sum^K_{k=1}{ z_{ik} }} = 1$ for each $i = 1,\dots,P$ (i.e., we use a 1-of-$K$ representation). The quantized (compressed) weights $\w_{\calC} = \bDelta(\calC,\Z) \in \bbR^P$ result from setting the $i$th weight to its assigned codebook entry, $c_{\kappa(i)}$ where $\kappa(i) = k$ if $z_{ik} = 1$.}
  \label{f:pseudocode-adaptive}
\end{figure}

\begin{figure}[t]
  \centering
  \setlength{\fboxsep}{1ex}
  \framebox{%
    \begin{minipage}[c]{0.80\linewidth}
      \begin{tabbing}
        n \= n \= n \= n \= n \= \kill
        \underline{\textbf{input}} \caja{t}{l}{$\calC = \{c_1,\dots,c_K\} \subset \bbR$ (codebook), \\ training data and neural net architecture with weights \w} \\
        $\w \leftarrow \overline{\w} = \argmin_{\w}{ L(\w) }$ \` {\small\textsf{reference net}} \\
        $\kappa(i) \leftarrow \argmin_{k=1,\dots,K}{\abs{\overline{w}_i - c_k}},\ i=1,\dots,P$ \` {\small\textsf{assignments}} \\
        $\w_{\calC} \leftarrow \bDelta(\calC,\kappa)$ \` {\small\textsf{quantized reference net}} \\
        $\blambda \leftarrow \0$ \\
        \underline{\textbf{for}} $\mu = \mu_0 < \mu_1 < \dots < \infty$ \+ \\
        $\w \leftarrow \argmin_{\w}{ L(\w) + \smash{\frac{\mu}{2} \norm{\smash{\w - \w_{\calC} - \frac{1}{\mu} \blambda}}^2} }$ \` {\small\textsf{L step: learn net}} \\
        $\kappa(i) \leftarrow \argmin_{k=1,\dots,K}{\abs{w_i - \smash{\frac{1}{\mu}} \lambda_i - c_k}},\ i=1,\dots,P$ \` {\small\textsf{C step: assignments\dots}} \\
        $\w_{\calC} \leftarrow \bDelta(\calC,\kappa)$ \` {\small\textsf{\dots and quantized net}} \\
        $\blambda \leftarrow \blambda - \mu (\w - \w_{\calC})$ \` {\small\textsf{Lagrange multipliers}} \\
        \textbf{if} $\norm{\w - \w_{\calC}}$ is small enough \textbf{then} exit the loop \- \\
        \underline{\textbf{return}} $\w_{\calC},\calC,\kappa$
      \end{tabbing}
    \end{minipage}
  }
  \caption{Pseudocode for the LC algorithm for quantization of scalar weights with a fixed codebook, augmented-Lagrangian version. For simplicity of notation, we now represent the assignments (of weights to codebook entries) directly by an index $\kappa(i) \in \{1,\dots,K\}$ for each $i = 1,\dots,P$. The quantized weights $\w_{\calC} = \bDelta(\calC,\kappa) \in \bbR^P$ result from setting the $i$th weight to its assigned codebook entry, $c_{\kappa(i)}$.}
  \label{f:pseudocode-fixed}
\end{figure}

\begin{figure}[t]
  \centering
  \setlength{\fboxsep}{1ex}
  \framebox{%
    \begin{minipage}[c]{0.80\linewidth}
      \begin{tabbing}
        n \= n \= n \= n \= n \= \kill
        \underline{\textbf{input}} training data and neural net architecture with weights \w \\
        $\w \leftarrow \overline{\w} = \argmin_{\w}{ L(\w) }$ \` {\small\textsf{reference net}} \\
        $a \leftarrow \smash{\frac{1}{P} \sum^P_{i=1}{ \abs{\overline{w}_i} }}$ \` {\small\textsf{scale}} \\
        $\w_{\calC} \leftarrow a \, \sgn{\overline{\w}}$ \` {\small\textsf{binarized reference net}} \\
        $\blambda \leftarrow \0$ \\
        \underline{\textbf{for}} $\mu = \mu_0 < \mu_1 < \dots < \infty$ \+ \\
        $\w \leftarrow \argmin_{\w}{ L(\w) + \smash{\frac{\mu}{2} \norm{\smash{\w - \w_{\calC} - \frac{1}{\mu} \blambda}}^2} }$ \` {\small\textsf{L step: learn net}} \\
        $a \leftarrow \smash{\frac{1}{P} \sum^P_{i=1}{ \abs{w_i} }}$ \` {\small\textsf{C step: scale\dots}} \\
        $\w_{\calC} \leftarrow a \, \sgn{\w}$ \` {\small\textsf{\dots and binarized net}} \\
        $\blambda \leftarrow \blambda - \mu (\w - \w_{\calC})$ \` {\small\textsf{Lagrange multipliers}} \\
        \textbf{if} $\norm{\w - \w_{\calC}}$ is small enough \textbf{then} exit the loop \- \\
        \underline{\textbf{return}} $\w_{\calC},a$
      \end{tabbing}
    \end{minipage}
  }
  \caption{Pseudocode for the LC algorithm for binarization of scalar weights into $\{-1,+1\}$ with an adaptive scale $a > 0$, augmented-Lagrangian version. The sign function applies elementwise to compute the binarized weights $\w_{\calC}$.}
  \label{f:pseudocode-bin-scale}
\end{figure}

As usual with path-following algorithms, ideally one would follow the path of iterates $(\w(\mu),\bTheta(\mu))$ closely until $\mu \rightarrow \infty$, by increasing the penalty parameter $\mu$ slowly. In practice, in order to reduce computing time, we increase $\mu$ more aggressively by following a multiplicative schedule $\mu_k = \mu_0 a^k$ for $k = 0,1,2\dots$ where $\mu_0>0$ and $a>1$. However, it is important to use a small enough $\mu_0$ that allows the algorithm to explore the solution space before committing to specific assignments for the weights.

The L step with a large training set typically uses SGD. As recommended by \citet{Carreir17a}, we use a clipped schedule $\{\eta'_{t}\}^{\infty}_{t=0}$ for the learning rates of the form $\eta'_{t} = \smash{\min{ \big( \eta_{t},\frac{1}{\mu} \big) }},\ t = 0,1,2\dots$, where $t$ is the epoch index and $\smash{\{\eta_{t}\}^{\infty}_{t=0}}$ is a schedule for the reference net (i.e., for $\mu = 0$). This ensures convergence and avoids erratic updates as $\mu$ becomes large.

We initialize $\blambda = \0$ and $(\w,\bTheta) = (\overline{\w},\bTheta^{\text{DC}})$, i.e., to the reference net and direct compression, which is the exact solution for $\mu \rightarrow 0^+$, as we show in the next section. We stop the LC algorithm when $\norm{\w - \bDelta(\calC,\Z)}$ is smaller than a set tolerance, i.e., when the real-valued and quantized weights are nearly equal. We take as solution $\w_{\calC} = \bDelta(\calC,\Z)$, i.e., the quantized weights using the codebook \calC\ and assignments \Z.

The runtime of the C step is negligible compared to that of the L step. With a fixed codebook, the C step is a simple assignment per weight. With an adaptive codebook, the C step runs $k$-means, each iteration of which is linear on the number of weights $P$. The number of iterations that $k$-means runs is a few tens in the first $k$-means (initialized by $k$-means++, on the reference weights) and just about one in subsequent C steps (because $k$-means is warm-started), as seen in our experiments. So the runtime is dominated by the L steps, i.e., by optimizing the loss.

\subsection{Direct compression and iterated direct compression}
\label{s:DC}

The quadratic-penalty and augmented-Lagrangian methods define a path of iterates $(\w(\mu),\bTheta(\mu))$ for $\mu \ge 0$ that converges to a local solution as $\mu \rightarrow \infty$. The beginning of this path is of special importance, and was called \emph{direct compression (DC)} by \citet{Carreir17a}. Taking the limit $\mu \rightarrow 0^+$ and assuming an initial $\blambda=\0$, we find that $\w(0^+) = \argmin_{\w}{ L(\w) } \equiv \overline{\w}$ and $\bTheta(0^+) = \bPi(\overline{\w}) = \argmin_{\bTheta}{ \smash{\norm{\overline{\w} - \bDelta(\bTheta)}^2} } \equiv \smash{\bTheta^{\text{DC}}}$. Hence, this corresponds to training a reference, non-quantized net $\overline{\w}$ and then quantizing it regardless of the loss (or equivalently projecting $\overline{\w}$ on the feasible set). As illustrated in fig.~\ref{f:LC-illustration}, this is suboptimal (i.e., it does not produce the compressed net with lowest loss), more so the farther the reference is from the feasible set. This will happen when the feasible set is small, i.e., when the codebook size $K$ is small (so the compression level is high). Indeed, our experiments show that for large $K$ (around 32 bits/weight) then DC is practically identical to the result of the LC algorithm, but as $K$ decreases (e.g.\ 1 to 4 bits/weight) then the loss of DC becomes larger and larger than that of the LC algorithm.

A variation of direct compression consists of ``iterating'' it, as follows. We first optimize $L(\w)$ to obtain $\overline{\w}$ and then quantize it with $k$-means into $\bTheta^{\text{DC}}$. Next, we optimize $L(\w)$ again but initializing \w\ from $\bDelta(\bTheta^{\text{DC}})$, and then we compress it; etc. This was called ``iterated direct compression (iDC)'' by \citet{Carreir17a}. iDC should not improve at all over DC if the loss optimization was exact and there was a single optimum: it simply would cycle forever between the reference weights $\overline{\w}$ and the DC weights $\bDelta(\bTheta^{\text{DC}})$. However, in practice iDC may improve somewhat over DC, for two reasons. 1) With local optima of $L(\w)$, we might converge to a different optimum after the quantization step (see fig.~\ref{f:LC-illustration} plot 2). However, at some point this will end up cycling between some reference net (some local optimum of $L(\w)$) and its quantized net. 2) In practice, SGD-based optimization of the loss with large neural nets is approximate; we stop SGD way before it has converged. This implies the iterates never fully reach $\overline{\w}$, and keep oscillating forever somewhere in between $\overline{\w}$ and $\bDelta(\bTheta^{\text{DC}})$.

DC and iDC have in fact been proposed recently for quantization, although without the context that our constrained optimization framework provides. \citet{Gong_15a} applied $k$-means to quantize the weights of a reference net, i.e., DC. The ``trained quantization'' of \citet{Han_15a} tries to improve over this by iterating the process, i.e., iDC. In our experiments, we verify that neither DC not iDC converge to a local optimum of problem~\eqref{e:compression-problem}, while our LC algorithm does.

\section{Solving the C step: compression by quantization}
\label{s:quant}

The C step consists of solving the optimization problem of eq.~\eqref{e:compression-mapping}: $\bPi(\w) = \argmin_{\bTheta}{\norm{\w - \bDelta(\bTheta)}^2}$, where $\w \in \bbR^P$ is a vector of real-valued weights. This is a quadratic distortion (or least-squares error) problem, and this was caused by selecting a quadratic penalty in the augmented Lagrangian~\eqref{e:augLag}. It is possible to use other penalties (e.g.\ using the $\ell_1$ norm), but the quadratic penalty gives rise to simpler optimization problems, and we focus on it in this paper. We now describe how to write quantization as a mapping \bDelta\ in parameter space and how to solve the optimization problem~\eqref{e:compression-mapping}.

Quantization consists of approximating real-valued vectors in a training set by vectors in a codebook. Since in our case the vectors are weights of a neural net, we will write the training set as $\{\w_1,\dots,\w_P\}$. Although in practice with neural nets we quantize scalar weight values directly (not weight vectors), we develop the formulation using vector quantization for generality. Hence, if we use a codebook $\calC = \{\c_1,\dots,\c_K\}$ with $K \ge 1$ entries, the number of bits used to store each weight vector $\w_i$ is $\ceil{\log_2{K}}$.

We consider two types of quantization: using an adaptive codebook, where we learn the optimal codebook for the training set; and using a fixed codebook, which is then not learned (although we will consider learning a global scale).

\subsection{Adaptive codebook}
\label{s:quant-adaptive}

The decompression mapping is a table lookup $\w_i = \c_{\kappa(i)}$ for each weight vector $i=1,\dots,P$ in the codebook $\calC = \{\c_1,\dots,\c_K\}$, where $\kappa\mathpunct{:}$ $\{1,\dots,P\}$ $\rightarrow$ $\{1,\dots,K\}$ is a discrete mapping that assigns each weight vector to one codebook vector. The compression mapping results from finding the best (in the least-squares sense) codebook \calC\ and mapping $\kappa$ for the ``dataset'' $\w_1,\dots,\w_P$, i.e., from solving the optimization problem
\begin{equation}
  \label{e:quant-mapping}
  \min_{\calC,\kappa}{ \sum^P_{i=1}{ \norm{\w_i - \c_{\kappa(i)}}^2 } } \qquad \equiv \textcolor{blue}{\qquad \min_{\calC,\Z}{ \sum^{P,K}_{i,k=1}{ z_{ik} \norm{\w_i - \c_k}^2 } } \quad \text{s.t.} \quad
  \begin{cases}
    \Z \in \{0,1\}^{P\times K} \\
    \sum^K_{k=1}{ z_{ik} } = 1,\ i = 1,\dots,P
  \end{cases}}
\end{equation}
which we have rewritten equivalently using binary assignment variables $\Z^T = (\z_1,\dots,\z_P)$. This follows by writing $\c_{\kappa(i)} = \smash{\sum^K_{k=1}{ z_{ik} \c_k }}$ where $z_{ik} = 1$ if $k = \kappa(i)$ and $0$ otherwise, and verifying by substituting the $z_{ik}$ values that the following holds:
\begin{equation*}
  \norm{\w_i - \c_{\kappa(i)}}^2 = \norm[\Big]{\smash{\w_i - \sum^K_{k=1}{ z_{ik} \c_k }}}^2 = \sum^K_{k=1}{ z_{ik} \norm{\w_i - \c_k}^2 }.
\end{equation*}
So in this case the low-dimensional parameters are \textcolor{blue}{$\bTheta = \{\calC,\Z\}$}, the decompression mapping can be written elementwise as \textcolor{blue}{$\w_i = \c_{\kappa(i)} = \smash{\sum^K_{k=1}{ z_{ik} \c_k }}$} for $i = 1,\dots,P$, and the compression mapping \textcolor{blue}{$\{\calC,\Z\} = \bPi(\w)$} results from running the $k$-means algorithm. The low-dimensional parameters are of two types: the assignments $\z_1,\dots,\z_P$ are ``private'' (each weight $\w_i$ has its own $\z_i$), and the codebook \calC\ is ``shared'' by all weights. In the pseudocode of fig.~\ref{f:pseudocode-adaptive}, we write the optimally quantized weights as $\w_{\calC} = \bDelta(\calC,\Z)$.

Problem~\eqref{e:quant-mapping} is the well-known quadratic distortion problem \citep{GershoGray92a}. It is NP-complete and it is typically solved approximately by $k$-means using a good initialization, such as that of $k$-means++ \citep{ArthurVassil07a}. As is well known, $k$-means is an alternating optimization algorithm that iterates the following two steps: in the assignment step we update the assignments $\z_1,\dots,\z_P$ independently given the centroids (codebook); in the centroid step we update the centroids $\c_1,\dots,\c_K$ independently by setting them to the mean of their assigned points. Each iteration reduces the distortion or leaves it unchanged. The algorithm converges in a finite number of iterations to a local optimum where \Z\ cannot improve given \calC\ and vice versa.

In practice with neural nets we quantize scalar weight values directly, i.e., each $w_i$ is a real value. Computationally, $k$-means is considerably faster with scalar values than with vectors. If the vectors have dimension $D$, with $P$ data points and $K$ centroids, each iteration of $k$-means takes $\calO(PKD)$ runtime because of the assignment step (the centroid step is $\calO(PD)$, by scanning through the $P$ points and accumulating each mean incrementally). But in dimension $D=1$, each iteration can be done exactly in $\calO(P \log{K})$, by using a binary search over the sorted centroids in the assignment step, which then takes $\calO(K \log{K})$ for sorting and $\calO(P \log{K})$ for assigning, total $\calO(P \log{K})$. 

\subsubsection{Why $k$-means?}

The fact that we use $k$-means in the C step is not an arbitrary choice of a quantization algorithm (among many possible such algorithms we could use instead). It is a necessary consequence of two assumptions: 1) The fact that we want to assign weights to elements of a codebook, which dictates the form of the decompression mapping $\w = \bDelta(\calC,\Z)$. This is not really an assumption because any form of quantization works like this. 2) That the penalty used in the augmented Lagrangian is quadratic, so that the C step is a quadratic distortion problem.

We could choose a different penalty instead of the quadratic penalty $\smash{\norm{\w - \bDelta(\calC,\Z)}^2_2}$, as long as it is zero if the constraint $\w = \bDelta(\calC,\Z)$ is satisfied and positive otherwise (for example, the $\ell_1$ penalty). In the grand scheme of things, the choice of penalty is not important, because the role of the penalty is to enforce the constraints gradually, so that in the limit $\mu \rightarrow \infty$ the constraints are satisfied and the weights are quantized: $\w = \bDelta(\calC,\Z)$. Any penalty satisfying the positivity condition above will achieve this. The choice of penalty does have two effects: it may change the local optimum we converge to (although it is hard to have control on this); and, more importantly, it has a role in the optimization algorithm used in the L and C steps: the quadratic penalty is easier to optimize. As an example, imagine we used the $\ell_1$ penalty $\norm{\w - \bDelta(\calC,\Z)}_1$. This means that the L step would have the form:
\begin{equation*}
  \min_{\w}{ L(\w) + \frac{\mu}{2} \norm[\Big]{\w - \bDelta(\bTheta) - \frac{1}{\mu} \blambda}_1 },
\end{equation*}
that is, an $\ell_1$-regularized loss. This is a nonsmooth problem. One can develop algorithms to optimize it, but it is harder than with the quadratic regularizer. The C step would have the form (again we write \w\ instead of $\w - \smash{\frac{1}{\mu}} \blambda$ for simplicity of notation):
\begin{equation*}
  \min_{\bTheta}{ \norm{\w - \bDelta(\bTheta)}_1 } \qquad \Longleftrightarrow \qquad \min_{\calC,\Z}{ \sum^{P,K}_{i,k=1}{ z_{ik} \norm{\w_i - \c_k}_1 } } \quad \text{s.t.} \quad
  \begin{cases}
    \Z \in \{0,1\}^{P\times K} \\
    \sum^K_{k=1}{ z_{ik} } = 1,\ i = 1,\dots,P.
  \end{cases}
\end{equation*}
With scalar weights $w_1,\dots,w_P$, this can be solved by alternating optimization as in $k$-means: the assignment step is identical, but the centroid step uses the median instead of the mean of the points assigned to each centroid ($k$-medians algorithm). There are a number of other distortion measures developed in the quantization literature \citep[section~10.3]{GershoGray92a} that might be used as penalty and are perhaps convenient with some losses or applications. With a fixed codebook, as we will see in the next section, the form of the C step is the same regardless of the penalty.

On the topic of the choice of penalty, a possible concern one could raise is that of outliers in the data. When used for clustering, $k$-means is known to be sensitive to outliers and nonconvexities of the data distribution. Consider the following situations, for simplicity using just $K=1$ centroid in 1D. First, if the dataset has an outlier, it will pull the centroid towards it, away from the rest of the data (note this is not a local optima issue; this is the global optimum). For compression purposes, it may seem a waste of that centroid not to put it where most of the data is. With the $\ell_1$ penalty, the centroid would be insensitive to the outlier. Second, if the dataset consists of two  separate groups, the centroid will end up in the middle of both, where there is no data, for both $k$-means and the $\ell_1$ penalty. Again, this may seem a waste of the centroid. Other clustering algorithms have been proposed to ensure the centroids lie where there is distribution mass, such as the $k$-modes algorithm \citep{CarreirWang13a,WangCarreir14c}. However, these concerns are misguided, because neural net compression is not a data modeling problem: one has to consider the overall LC algorithm, not the C step in isolation. While in the C step the centroids approach the data (the weights), in the L step the weights approach the centroids, and in the limit $\mu \rightarrow \infty$ both coincide, the distortion is zero and there are no outliers. It is of course possible that the LC algorithm converge to a bad local optimum of the neural net quantization, which is an NP-complete problem, but this can happen for various reasons. In section~\ref{s:expts:regression} of the experiments we run the LC algorithm in a model whose weights contain clear outliers and demonstrate that the solution found makes sense.

\subsection{Fixed codebook}
\label{s:quant-fixed}

Now, we consider quantization using a fixed codebook%
\footnote{We can also achieve \emph{pruning} together with quantization by having one centroid be fixed to zero. We study this in more detail in a future paper.},
i.e., the codebook entries $\calC = \{\c_1,\dots,\c_K\}$ are fixed and we do not learn them, we learn only the weight assignments $\Z^T = (\z_1,\dots,\z_P)$. In this way we can derive algorithms for compression of the weights based on approaches such as binarization or ternarization, which have been also explored in the literature of neural net compression, implemented as modifications to backpropagation (see section~\ref{s:related:quant-fixed}).

The compression mapping $\bPi(\w)$ of eq.~\eqref{e:compression-mapping} now results from solving the optimization problem
\begin{equation}
  \label{e:quant-fixed-mapping}
  \textcolor{blue}{\min_{\Z}{ \sum^{P,K}_{i,k=1}{ z_{ik} \norm{\w_i - \c_k}^2 } } \quad \text{s.t.} \quad
  \begin{cases}
    \Z \in \{0,1\}^{P\times K} \\
    \sum^K_{k=1}{ z_{ik} } = 1,\ i = 1,\dots,P.
  \end{cases}}
\end{equation}
This is not NP-complete anymore, unlike in the optimization over codebook and assignments jointly in~\eqref{e:quant-mapping}. It has a closed-form solution for each $\z_i$ separately where we assign $\w_i$ to \textcolor{blue}{$\kappa(i) = \smash{\argmin_{k = 1,\dots,K}{ \norm{\w_i - \c_k}^2 }}$}, with ties broken arbitrarily, for $i = 1,\dots,P$. That is, each weight $\w_i$ is compressed as its closest codebook entry $\c_{\kappa(i)}$ (in Euclidean distance). Therefore, we can write the compression mapping $\bTheta = \bPi(\w)$ explicitly as \textcolor{blue}{$\bPi(\w_i) = \smash{\c_{\kappa(i)}}$} separately for each weight $\w_i$, $i = 1,\dots,P$.

So in this case the low-dimensional parameters are \textcolor{blue}{$\bTheta = \Z$} (or \textcolor{blue}{$\bTheta = \{\kappa(1),\dots,\kappa(P)\}$}), the decompression mapping can be written elementwise as \textcolor{blue}{$\w_i = \c_{\kappa(i)} = \smash{\sum^K_{k=1}{ z_{ik} \c_k }}$} for $i = 1,\dots,P$ (as with the adaptive codebook), and the compression mapping \textcolor{blue}{$\Z = \bPi(\w)$} can also be written elementwise as \textcolor{blue}{$\z_i = \bPi(\w_i)$} (or \textcolor{blue}{$\kappa(i) = \Pi(\w_i)$}) for $i = 1,\dots,P$. The low-dimensional parameters are all private (the assignments $\z_1,\dots,\z_P$ or $\kappa(1),\dots,\kappa(P)$). The codebook \calC\ is shared by all weights, but it is not learned. In the pseudocode of fig.~\ref{f:pseudocode-fixed}, we use the notation $\w_{\calC} = \bDelta(\calC,\kappa) = (\c_{\kappa(1)},\dots,\c_{\kappa(P)})$ to write the optimally quantized weights.

This simplifies further in the scalar case, i.e., when the weights $\w_i$ to be quantized are scalars. Here, we can write the codebook $\calC = \{c_1,c_2,\dots,c_K\}$ as an array of scalars sorted increasingly, $-\infty < c_1 < c_2 < \dots < c_K < \infty$. The elementwise compression mapping $\Pi(w_i) = \kappa(i) = \argmin_{k = 1,\dots,K}{ \abs{w_i - c_k} }$ can be written generically for $t \in \bbR$ as:
\begin{equation}
  \label{e:quant-fixed-mapping-scalar}
  \textcolor{blue}{
    \Pi(t) =
    \begin{cases}
      1, & \text{if } t < \frac{1}{2}(c_1+c_2) \\
      2, & \text{if } \frac{1}{2}(c_1+c_2) \le t < \frac{1}{2}(c_2+c_3) \\
      \dots \\
      K, & \text{if } \frac{1}{2}(c_{K-1}+c_K) \le t
    \end{cases}}
\end{equation}
since the codebook defines Voronoi cells that are the intervals between midpoints of adjacent centroids. This can be written more compactly as $\Pi(t) = \kappa$ where $\kappa \in \{1,\dots,K\}$ satisfies $\frac{1}{2} (c_{\kappa-1} + c_{\kappa}) \le t < \frac{1}{2} (c_{\kappa} + c_{\kappa+1})$ and we define $c_0 = -\infty$ and $c_{K+1} = \infty$. Computationally, this can be done in $\calO(\log{K})$ using a binary search, although in practice $K$ is small enough that a linear search in $\calO(K)$ makes little difference. To use the compression mapping $\Pi(t)$ in the C step of the LC algorithm given in section~\ref{s:LC}, $t$ equals either a scalar weight $w_i$ for the quadratic-penalty method, or a shifted scalar weight $w_i - \smash{\frac{1}{\mu}} \lambda_i$ for the augmented Lagrangian method. The L step of the LC algorithm always takes the form given in eq.~\eqref{e:Lstep}.

Again, this quantization algorithm in the C step is not an arbitrary choice, it follows necessarily from the way any codebook-based quantization works. Furthermore, and unlike the adaptive codebook case, with scalar weights the solution~\eqref{e:quant-fixed-mapping-scalar} is independent of the choice of penalty, because the order of the real numbers is unique (so using a quadratic or an $\ell_1$ penalty will result in the same step).

\paragraph{Application to binarization, ternarization and powers-of-two}

Some particular cases of the codebook are of special interest because their implementation is very efficient: binary $\{-1,+1\}$, ternary $\{-1,0,+1\}$ and general powers-of-two $\{0,\pm 1,\pm 2^{-1},\dots,\pm 2^{-C}\}$. These are all well known in digital filter design, where one seeks to avoid floating-point multiplications by using fixed-point binary arithmetic and powers-of-two or sums of powers-of-two multipliers (which result in shift or shift-and-add operations instead). This accelerates the computation and requires less hardware.

We give the solution of the C step for these cases in fig.~\ref{f:fixed-codebook} (see proofs in the appendix). Instead of giving the compression mapping $\Pi(t)$, we give directly a \emph{quantization operator} $q\mathpunct{:}\ \bbR \rightarrow \calC$ that maps a real-valued weight to its optimal codebook entry. Hence, $q$ corresponds to compressing then decompressing the weights, elementwise: $q(t) = \Delta(\calC,\Pi(t))$, where $t \in \bbR$ is a scalar weight. In the expressions for $q(t)$, we define the floor function for $t \in \bbR$ as $\floor{t} = i$ if $i \le t < i+1$ and $i$ is integer, and the sign function as follows:
\begin{equation}
  \label{e:sgn}
  \sgn{t} =
  \begin{cases}
    -1, & \text{if } t < 0 \\
    +1, & \text{if } t \ge 0.
  \end{cases}
\end{equation}
Note that the generic $k$-means algorithm (which occurs in the C step of our LC algorithm) solves problem~\eqref{e:quant-fixed-mapping-scalar}, and hence its particular cases, exactly in one iteration: the centroid step does nothing (since the centroids are not learnable) and the assignment step is identical to the expressions for $\Pi(t)$ in eq.~\eqref{e:quant-fixed-mapping-scalar} or for $q(t)$ in fig.~\ref{f:fixed-codebook}. However, the expressions in fig.~\ref{f:fixed-codebook} are more efficient, especially for the powers-of-two case, which runs in $\calO(1)$ (while the generic $k$-means assignment step would run in $\calO(\log{C})$).

\begin{figure}[p]
  \centering
  \begin{tabular}[c]{@{}c@{}}
    \psfrag{x}[t][t]{$t$}
    \psfrag{y}[][][1][-90]{$q(t)$}
    \psfrag{bin}[r][r]{\textcolor{green}{binarization}~}
    \psfrag{ter}[tl][bl]{\textcolor{red}{ternarization}}
    \psfrag{pow2}{~\textcolor{blue}{\caja{c}{c}{powers of two \\ ($C=5$)}}}
    \includegraphics[width=0.60\linewidth]{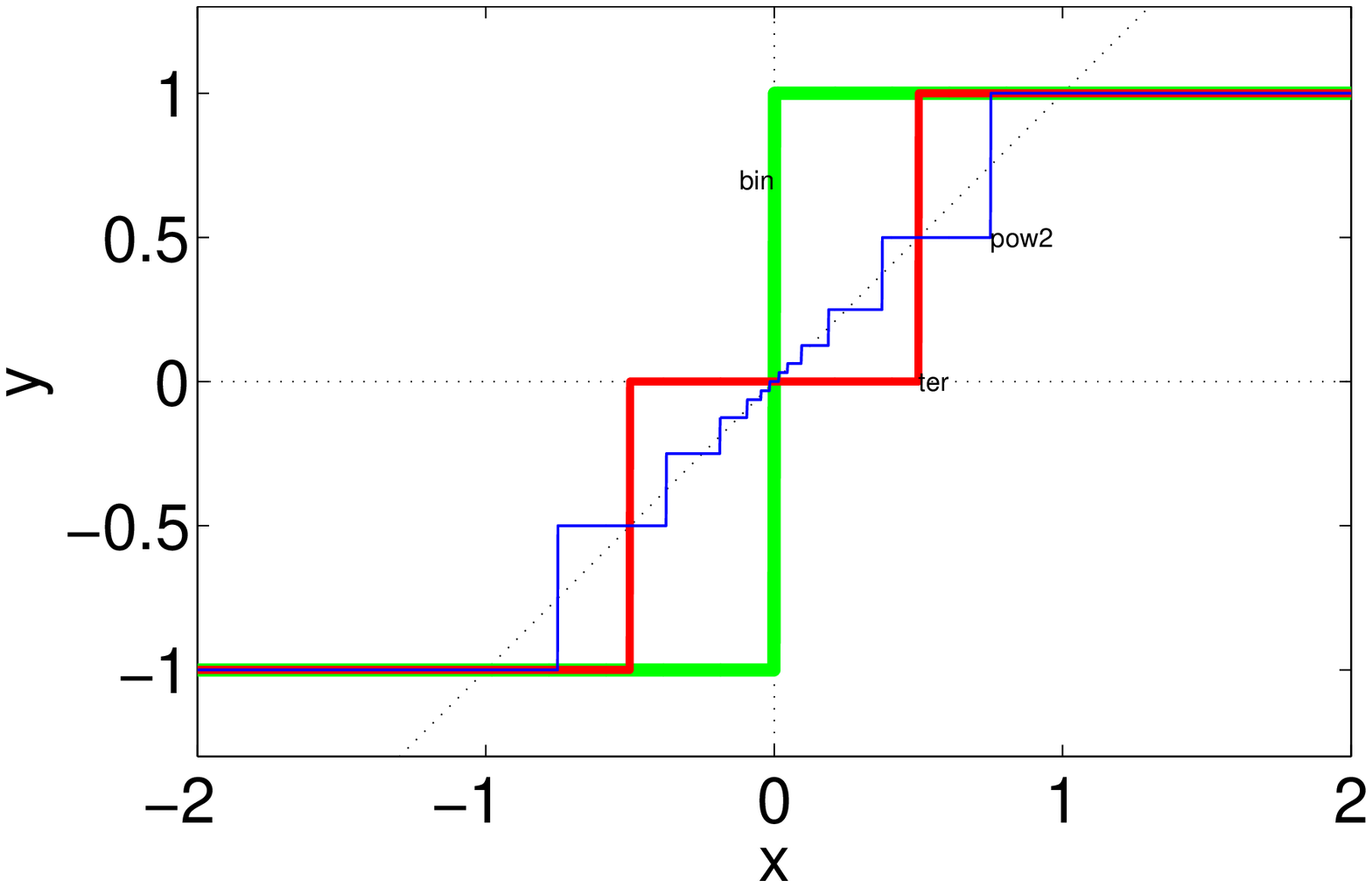}
  \end{tabular} \\[3ex]
  \begin{tabular}[c]{@{}lcl@{}}
    \toprule
    Name & codebook \calC & \multicolumn{1}{c@{}}{\caja{c}{c}{scalar quantization operator \\ $q(t) = \alpha(t) \, \sgn{t}$}} \\
    \midrule
    Binarization & $\{-1,{+1}\}$ & $\alpha(t) = 1$ \\
    Binarization with scale$^{\dagger}$ & $\{-a,{+a}\}$ & $\alpha(t) = a$ \\
    \\
    Ternarization & $\{-1,0,{+1}\}$ & $\alpha(t) =
    \begin{cases}
      0, & \abs{t} < \frac{1}{2} \\
      1, & \abs{t} \ge \frac{1}{2}.
    \end{cases}$ \\
    Ternarization with scale$^{\dagger}$ & $\{-a,0,{+a}\}$ & $\alpha(t) =
    \begin{cases}
      0, & \abs{t} < \frac{1}{2} a \\
      a, & \abs{t} \ge \frac{1}{2} a.
    \end{cases}$ \\
    \\
    Powers of two & \caja{c}{c}{$\{0,\pm 1,\pm 2^{-1},\dots,\pm 2^{-C}\}$ \\ where $C \ge 0$ is integer} & $\alpha(t) =
    \begin{cases}
      0, & f > C+1 \\
      1, & f \le 0 \\
      2^{-C}, & f \in (C,C+1] \\
      2^{-\floor{f + \log_2{\frac{3}{2}}}}, & \text{otherwise}
    \end{cases}$ \\
    & & where $f = -\log_2{\abs{t}}$. \\
    \bottomrule \\
    \multicolumn{3}{@{}l@{}}{$^{\dagger}$The scale is $a = \displaystyle\frac{1}{j^*} \sum^{j^*}_{i=1}{ \abs{w_i} }$ with $j^* =
      \begin{cases}
        P, & \text{for binarization} \\
        \displaystyle\argmax_{1 \le j \le P}{ \frac{1}{\sqrt{j}} \sum^j_{i=1}{ \abs{w_i} } }, & \text{for ternarization.}
      \end{cases}$}
  \end{tabular}
  \caption{Scalar quantization operator $q\mathpunct{:}\ \bbR \rightarrow \calC$ using a fixed codebook $\calC = \{c_1,\dots,c_K\} \subset \bbR$ for some particular cases of interest (the general case without scale is given by eq.~\eqref{e:quant-fixed-mapping-scalar}). The input to the quantization is a set of $P$ real-valued weights; for the ternarization with scale these weights must be sorted in decreasing magnitude: $\abs{w_1} \ge \abs{w_2} \ge \dots \ge \abs{w_P}$. In $q(t)$, $t$ represents any weight $w_i$, and $q(w_i)$ gives the optimally quantized $w_i$. This solves the C step in the LC algorithm for quantization. See proofs in the appendix.}
  \label{f:fixed-codebook}
\end{figure}

\subsubsection{Fixed codebook with adaptive scale}
\label{s:quant-fixed:scale}

Fixed codebook values such as $\{-1,+1\}$ or $\{-1,0,+1\}$ may produce a large loss because the good weight values may be quite bigger or quite smaller than $\pm 1$. One improvement is to rescale the weights, or equivalently rescale the codebook elements, by a scale parameter $a \in \bbR$, which is itself learned. The low-dimensional parameters now are \textcolor{blue}{$\bTheta = \{a,\Z\}$}, where $a$ is a shared parameter and the $\z_1,\dots,\z_P$ are private. The decompression mapping can be written elementwise as \textcolor{blue}{$\w_i = a \, \c_{\kappa(i)} = a \, \smash{\sum^K_{k=1}{ z_{ik} \c_k }}$} for $i = 1,\dots,P$. The compression mapping \textcolor{blue}{$\{a,\Z\} = \bPi(\w)$} results from solving the optimization problem
\begin{equation}
  \label{e:quant-fixed-mapping-scale}
  \textcolor{blue}{\min_{\Z,a}{ \sum^{P,K}_{i,k=1}{ z_{ik} \norm{\w_i - a \, \c_k}^2 } } \quad \text{s.t.} \quad
  \begin{cases}
    \Z \in \{0,1\}^{P\times K} \\
    \sum^K_{k=1}{ z_{ik} } = 1,\ i = 1,\dots,P.
  \end{cases}}
\end{equation}
In general, this can be solved by alternating optimization over \Z\ and $a$:
\begin{itemize}
\item Assignment step: assign $\w_i$ to \textcolor{blue}{$\kappa(i) = \smash{\argmin_{k = 1,\dots,K}{ \norm{\w_i - a \, \c_k}^2 }}$} for $i = 1,\dots,P$.
\item Scale step: \textcolor{blue}{$a = \smash{\big( \sum^{P,K}_{i,k=1}{ z_{ik} \w^T_i \c_k} \big) / \big( \sum^{P,K}_{i,k=1}{ z_{ik} \norm{\c_k}^2} \big)}$}.
\end{itemize}
Like $k$-means, this will stop in a finite number of iterations, and may converge to a local optimum. With scalar weights, each iteration is $\calO(P \log{K})$ by using binary search in the assignment step and incremental accumulation in the scale step.

\paragraph{Application to binarization and ternarization with scale}

For some special cases we can solve problem~\eqref{e:quant-fixed-mapping-scale} exactly, without the need for an iterative algorithm. We give the solution for binarization and ternarization with scale in fig.~\ref{f:fixed-codebook} (see proofs in the appendix). Again, we give directly the scalar quantization operator $q\mathpunct{:}\ \bbR \rightarrow \calC$. The form of the solution is a rescaled version of the case without scale, where the optimal scale $a > 0$ is the average magnitude of a certain set of weights. Note that, given the scale $a$, the weights can be quantized elementwise by applying $q$, but solving for the scale involves all weights $w_1,\dots,w_P$.

Some of our quantization operators are equal to some rounding procedures used in previous work on neural net quantization: binarization (without scale) by taking the sign of the weight is well known, and our formula for binarization with scale is the same as in \citet{Rasteg_16a}. Ternarization with scale was considered by \citep{Li_16b}, but the solution they give is only approximate; the correct, optimal solution is given in our theorem~\ref{th:ter-scale}. As we have mentioned before, those approaches incorporate rounding in the backpropagation algorithm in a heuristic way and the resulting algorithm does not solve problem~\eqref{e:compression-problem}. In the framework of the LC algorithm, the solution of the C step (the quantization operator) follows necessarily; there is no need for heuristics.

It is possible to consider more variations of the above, such as a codebook $\calC = \{-a,+b\}$ or $\{-a,0,+b\}$ with learnable scales $a,b > 0$, but there is little point to it. We should simply use a learnable codebook $\calC = \{c_1,c_2\}$ or $\{c_1,0,c_2\}$ without restrictions on $c_1$ or $c_2$ and run $k$-means in the C step.

Computing the optimal scale $a$ with $P$ weights has a runtime $\calO(P)$ in the case of binarization with scale and $\calO(P \log{P})$ in the case of ternarization with scale. In ternarization, the sums can be done cumulatively in $\calO(P)$, so the total runtime is dominated by the sort, which is $\calO(P \log{P})$. It may be possible to avoid the sort using a heap and reduce the total runtime to $\calO(P)$.

\section{Experiments}
\label{s:expts}

We evaluate our learning-compression (LC) algorithm for quantizing neural nets of different sizes with different compression levels (codebook sizes $K$), in several tasks and datasets: linear regression on MNIST and classification on MNIST and CIFAR10. We compare LC with direct compression (DC) and iterated direct compression (iDC), which correspond to the previous works of \citet{Gong_15a} and \citet{Han_15a}, respectively. By using $K=2$ codebook values, we also compare with BinaryConnect \citep{Courbar_15a}, which aims at learning binary weights. In summary, our experiments 1) confirm our theoretical arguments about the behavior of (i)DC, and 2) show that LC achieves comparable loss values (in training and test) to those algorithms with low compression levels, but drastically outperforms all them at high compression levels (which are the more desirable in practice). We reach the maximum possible compression (1 bit/weight) without significant error degradation in all networks we describe (except in the linear regression case).

We used the Theano \citep{Theano16a} and Lasagne \citep{Dielem_15a} libraries. Throughout we use the augmented Lagrangian, because we found it not only faster but far more robust than the quadratic penalty, in particular in setting the SGD hyperparameters. We initialize all algorithms from a reasonably (but not necessarily perfectly) well-trained reference net. The initial iteration ($\mu=0$) for LC gives the DC solution. The C step (also for iDC) consists of $k$-means ran till convergence, initialized from the previous iteration's centroids (warm-start). For the first compression, we use the $k$-means++ initialization \citep{ArthurVassil07a}. This first compression may take several tens of $k$-means iterations, but subsequent ones need very few, often just one (figs.~\ref{f:regression} and~\ref{f:kmeans-its}).

We report the loss and classification error in training and test. We only quantize the multiplicative weights in the neural net, not the biases. This is because the biases span a larger range than the multiplicative weights, hence requiring higher precision, and anyway there are very few biases in a neural net compared to the number of multiplicative weights.

We calculate \emph{compression ratios} as
\begin{equation}
  \label{e:compression-ratio}
  \rho(K) = \text{\#bits(reference) / \#bits(quantized)}
\end{equation}
where:
\begin{itemize}
\item \#bits(reference) $= (P_1+P_0) b$;
\item \#bits(quantized) $= P_1 \ceil{\log_2{K}} + (P_0+K) b$, where $K b$ is the size of the codebook;
\item $P_1$ and $P_0$ are the number of multiplicative weights and biases, respectively;
\item $K$ is the codebook size;
\item and we use 32-bit floats to represent real values (so $b=32$). Note that it is important to quote the base value of $b$ or otherwise the compression ratio is arbitrary and can be inflated. For example, if we set $b = 64$ (double precision) all the compression ratios in our experiments would double.
\end{itemize}
Since for our nets $P_0 \ll P_1$, we have $\rho(K) \approx b/\log_2{K}$.

\subsection{Interplay between loss, model complexity and compression level}

Firstly, we conduct a simple experiment to understand the interplay between loss, model complexity and compression level, here given by classification error, number of hidden units and codebook size, respectively. One important reason why compression is practically useful is that it may be better to train a large, accurate model and compress it than to train a smaller model and not compress it in the first place (there has been some empirical evidence supporting this, e.g.\ \citealp{Denil_13a}). Also, many papers show that surprisingly large compression levels are possible with some neural nets (in several of our experiments with quantization, we can quantize all the way to one bit per weight with nearly no loss degradation). Should we expect very large compression levels without loss degradation in general?

The answer to these questions depends on the relation between loss, model complexity and compression. Here, we explore this experimentally in a simple setting: a classification neural net with inputs of dimension $D$, outputs of dimension $d$ (number of classes) and $H$ hidden, tanh units, fully connected, trained to minimize the average cross-entropy. We use our LC algorithm to quantize the net using a codebook of size $K$. The size $C(K,H)$ in bits of the resulting nets is as follows (assuming floating-point values of $b = 32$ bits). For the reference (non-quantized net, ``$K = \infty$''), $C(\infty,H) = (D+d) H b$ (multiplicative weights) plus $(H+d)b$ (biases), total $C(\infty,H) \approx (D+d) H b$. For a quantized net, this is the sum of $(D+d) H \log_2{K}$ (for the quantized weights), $(H+d) b$ (for the non-quantized biases) and $K b$ (for the codebook), total $C(K,H) \approx (D+d) H \log_2{K}$.

We explore the space of optimal nets over $H$ and $K$ in order to determine what the best operational point $(K^*,H^*)$ is in order to achieve a target loss $L_{\text{max}}$ with the smallest net, that is, we want to solve the following optimization problem:
\begin{equation*}
  \min_{K,H}{ C(K,H) } \qquad \text{s.t.} \qquad L(K,H) \le L_{\text{max}}.
\end{equation*}

We use the entire MNIST training set of 60\,000 handwritten digit images, hence $D = 784$ and $d = 10$. We train a reference net of $H$ units for $H \in \{2,\dots,40\}$ and compress it using a codebook of size $K$ for $\log_2{K} \in \{1,\dots,8\}$. The training procedure is exactly as for the LeNet300 neural net discussed later. Fig.~\ref{f:L-K-H} plots the loss $L(K,H)$ and size $C(K,H)$ for each resulting net using $(K,H)$, and the best operational point $(K^*,H^*)$ for target loss values $L_{\text{max}} \in \{0.005,0.01,0.05,0.3\}$.

\begin{figure}[t!]
  \centering
  \psfrag{N}[][]{number of hidden units $H$}
  \psfrag{logk}[][]{$\log_2{K}$}
  \psfrag{loss1}[l][l]{{\small $L(K,H) \leq 0.3$}}
  \psfrag{loss2}[l][l]{{\small $L(K,H) \leq 0.05$}}
  \psfrag{loss3}[l][l]{{\small $L(K,H) \leq 0.01$}}
  \psfrag{loss4}[l][l]{{\small $L(K,H) \leq 0.005$}}
  \psfrag{inf}[t][t]{\caja[0.8]{t}{c}{$\infty$ \\ {\tiny (reference)}}}
  \begin{tabular}{@{}c@{\hspace{0.0\linewidth}}c@{\hspace{0.0\linewidth}}c@{}}
    optimal loss $L(K,H)$ & \caja{c}{c}{model size $C(K,H)$ \\ in megabytes ($2^{23}$ bits)} & $L(K,H)$ and contours of $C(K,H)$ \\[2ex]
    \includegraphics*[height=0.41\linewidth,bb=33 0 379 428,clip]{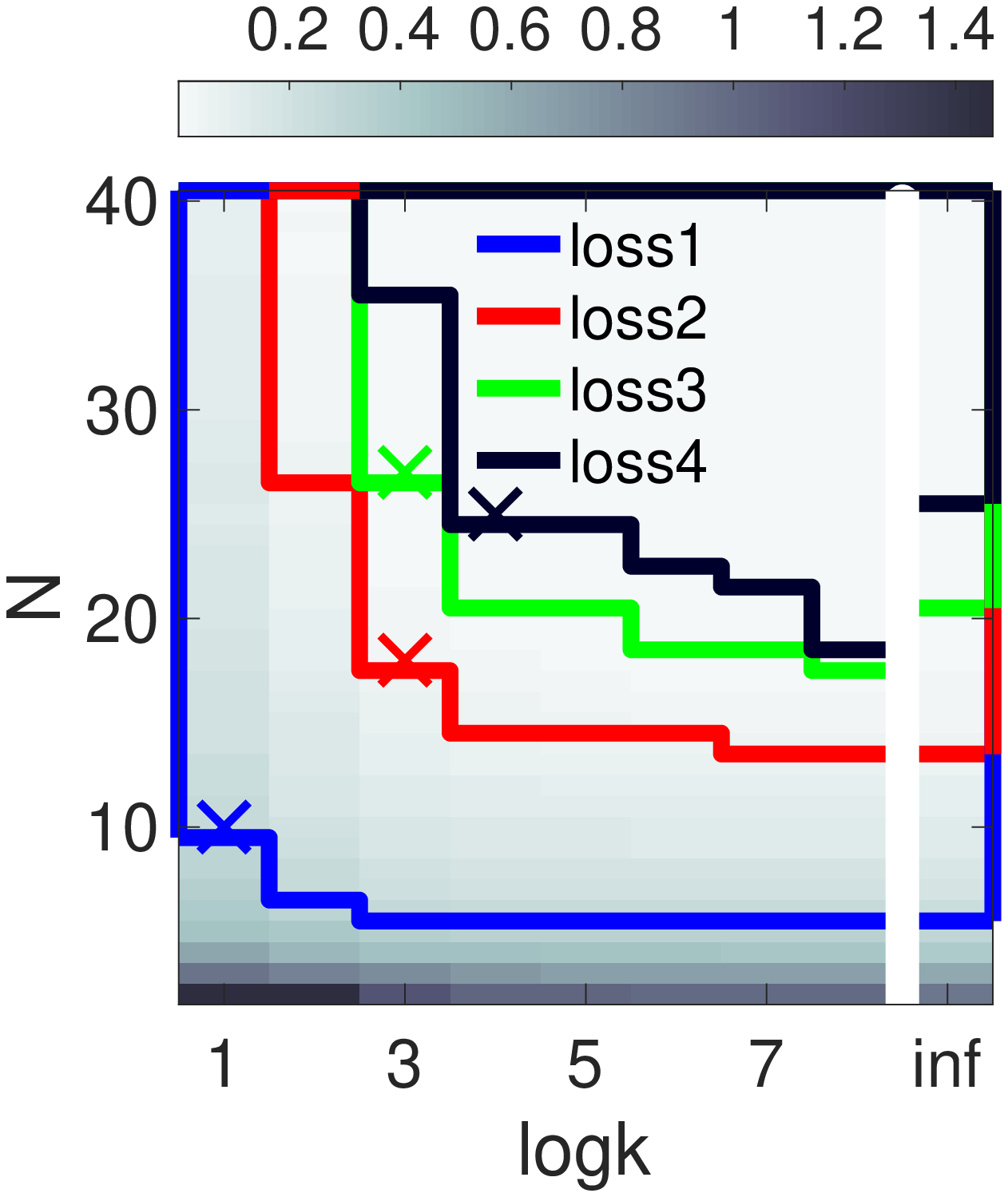} &
    \psfrag{N}{}
    \includegraphics*[height=0.41\linewidth,bb=33 0 379 428,clip]{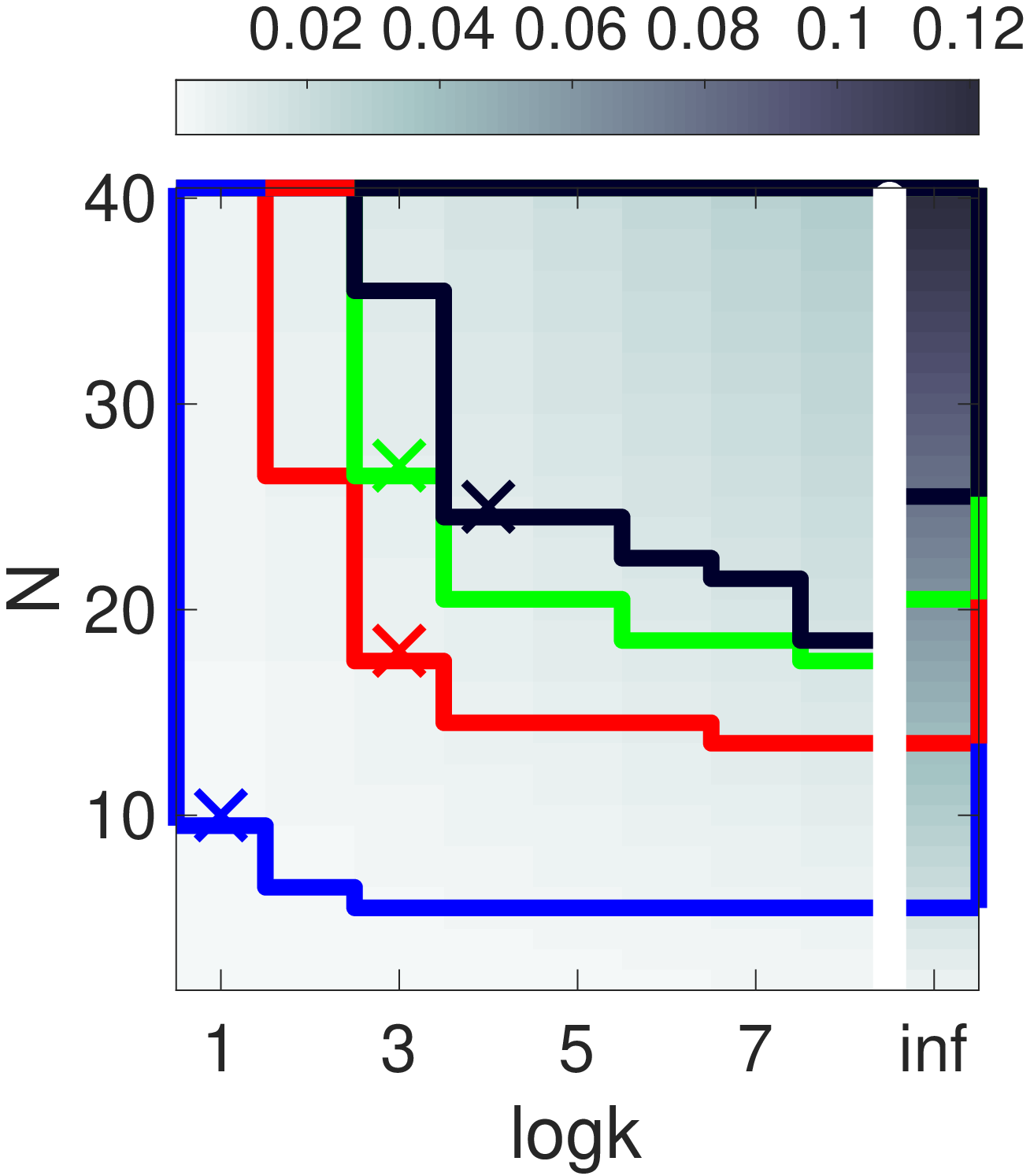} &
    \psfrag{N}{}
    \includegraphics*[height=0.41\linewidth,bb=33 0 379 428,clip]{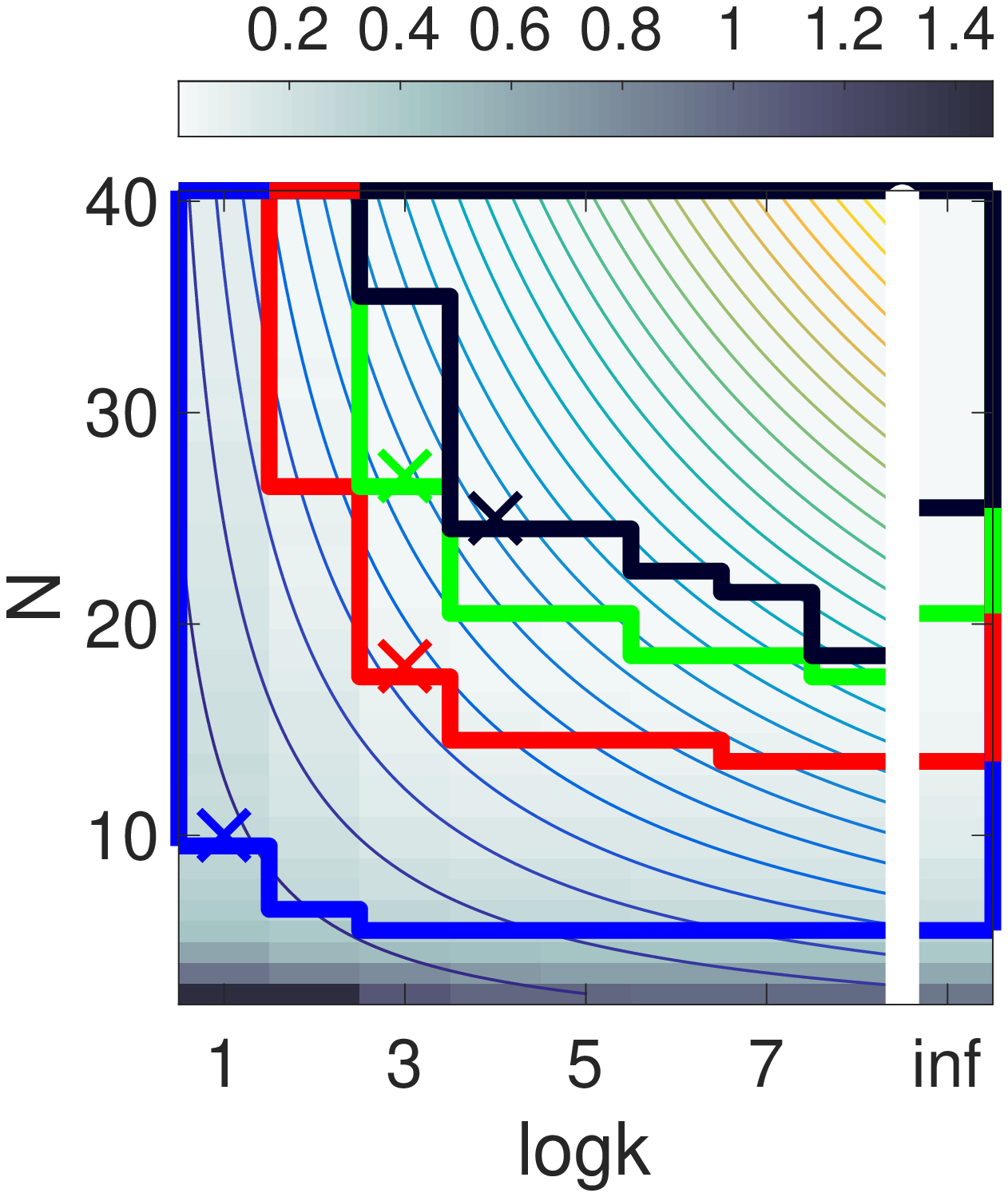}
  \end{tabular}
  \caption{Relation between loss, model complexity and quantization level in a single-layer neural net for MNIST classification. We train nets of different complexity (number of hidden units $H \in \{2,\dots,40\}$) and compress them with our LC algorithm using different codebook sizes ($\log_2{K} \in \{1,\dots,8\}$, with ``$\infty$'' meaning no compression, i.e., the reference net). \emph{Left plot}: the resulting loss $L(K,H)$. We show in color four level sets $L_{\text{max}} \in \{0.005,0.01,0.05,0.3\}$ of $L$, i.e., the points $(K,H)$ satisfying $L(K,H) \le L_{\text{max}}$. \emph{Middle plot}: the net size $C(K,H)$, with the same level sets over $L$. The color markers $\times$ identify the best operational point $(K^*,H^*)$ within each level set, i.e., the point $(K,H)$ having smallest size $C(K,H)$ such that $L(K,H) \le L_{\text{max}}$. \emph{Right plot}: like the left plot but with the contours of $C(K,H)$ superimposed.}
  \label{f:L-K-H}
\end{figure}

Within a given level set, the points with large $H$ and $\log_2{K} = 1$ (top left in the plot) correspond to the regime ``train a large reference net and compress it maximally''; the points with small $H$ and $\log_2{K} = \infty$ (bottom right in the plot) correspond to the regime ``train a small reference net and do not compress it''; and intermediate points $(K,H)$ correspond to intermediate model sizes and moderate compression levels. As the plot shows, if the target loss is large (i.e., we do not require high classification accuracy) then maximal compression is optimal; but as the target loss increases (i.e., we require more accurate models), then the optimal point $(K^*,H^*)$ moves towards intermediate compression levels. If we require no loss degradation whatsoever, this might be only achievable without compression. Therefore, in general it is not clear what the optimal regime will be, and solving this model selection problem in practice will involve some trial and error of model sizes and compression levels. However, it will be often be the case that significant compression is achievable if we can tolerate a minor loss degradation. Compression also simplifies model selection: the neural net designer can simply overestimate the size of the net required to achieve a target loss, and let the compression find a smaller net with a similar loss. So it seems clear that \emph{a good approximate strategy is to take a large enough model and compress it as much as possible}.

\subsection{Quantizing linear regression, with a non-gaussian weight distribution}
\label{s:expts:regression}

\begin{figure}[p]
  \centering
  \psfrag{ref}[l][l]{{\scriptsize reference}}
  \psfrag{idc}[l][l]{iDC}
  \psfrag{lcc}[l][l]{LC}
  \psfrag{iterations}{}
  \psfrag{counts}[t][]{weight density}
  \psfrag{weights}{}
  \psfrag{t0}[l][l]{0}
  \psfrag{t1}[l][l]{1}
  \psfrag{t3}[l][l]{30}
  \psfrag{kmeans}[t][]{\# $k$-means its.}
  \begin{tabular}{@{}c@{}c@{}c@{}}
    \psfrag{error}[b][b]{\caja{b}{c}{$K = 4$ \\ training loss $L$}}
    \includegraphics*[height=0.27\linewidth]{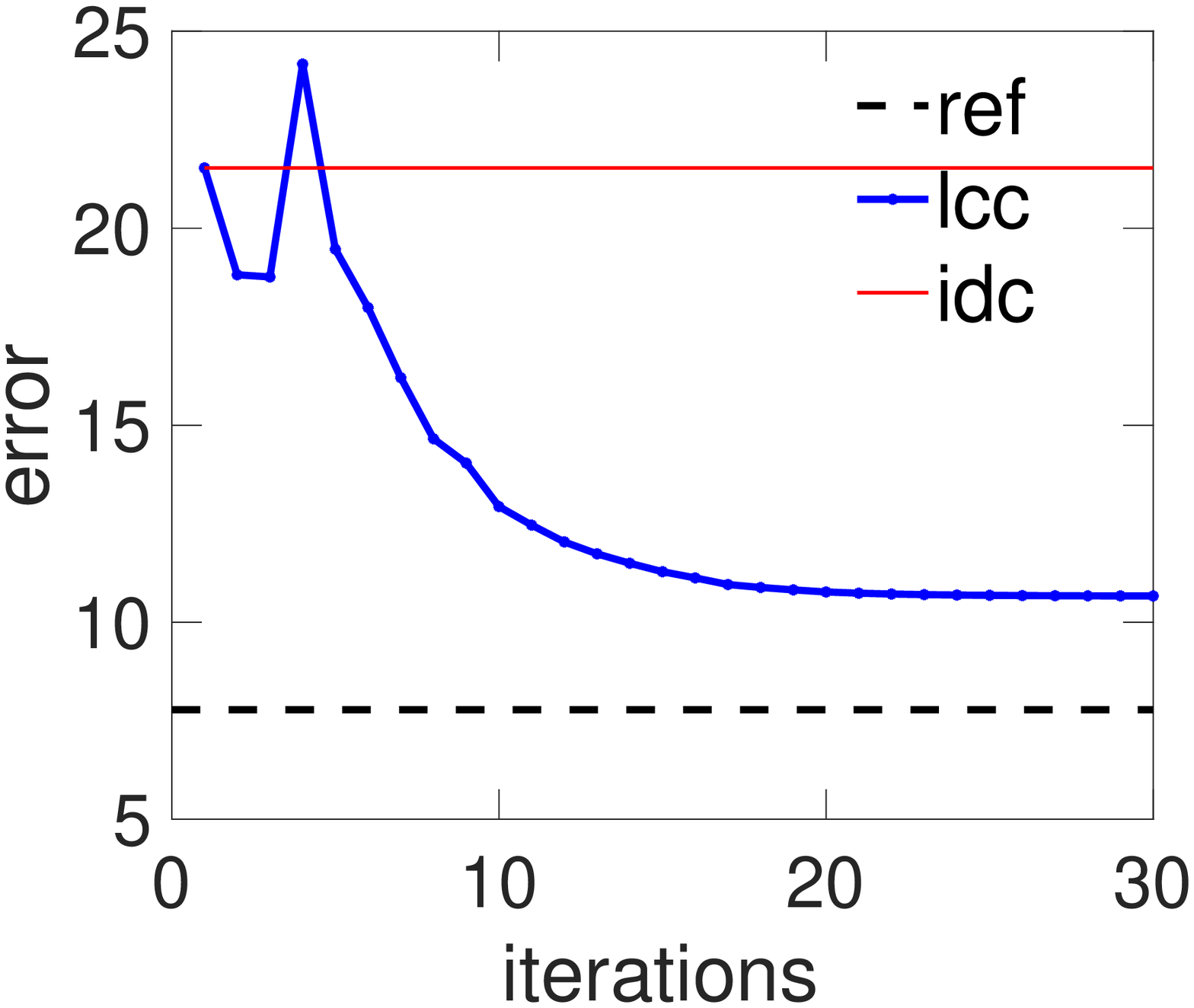} &
    \includegraphics*[height=0.27\linewidth]{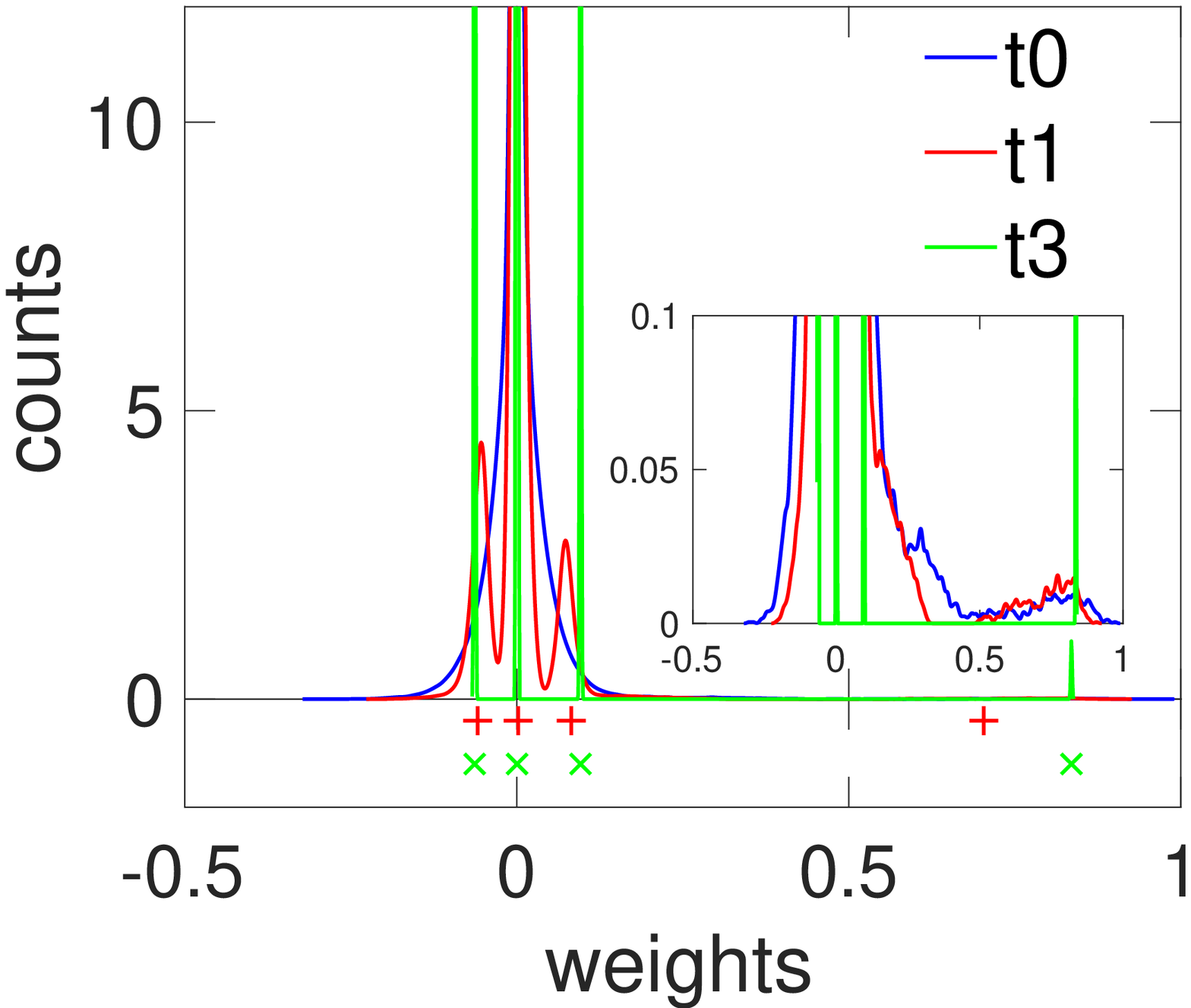} &
    \includegraphics*[height=0.27\linewidth]{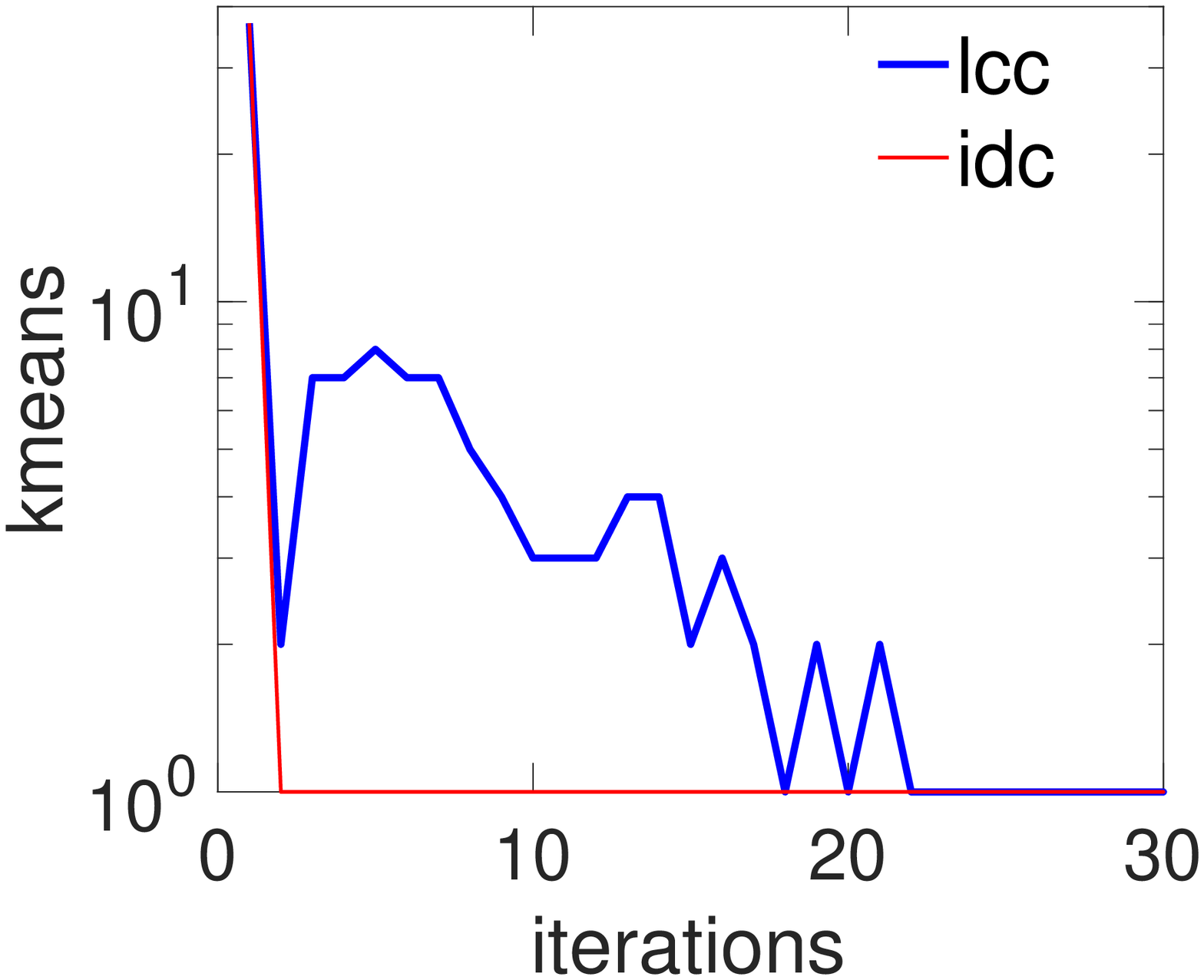} \\
    \psfrag{error}[b][b]{\caja{b}{c}{$K = 2$ \\ training loss $L$}}
    \psfrag{iterations}[][]{LC iterations}
    \includegraphics*[height=0.27\linewidth]{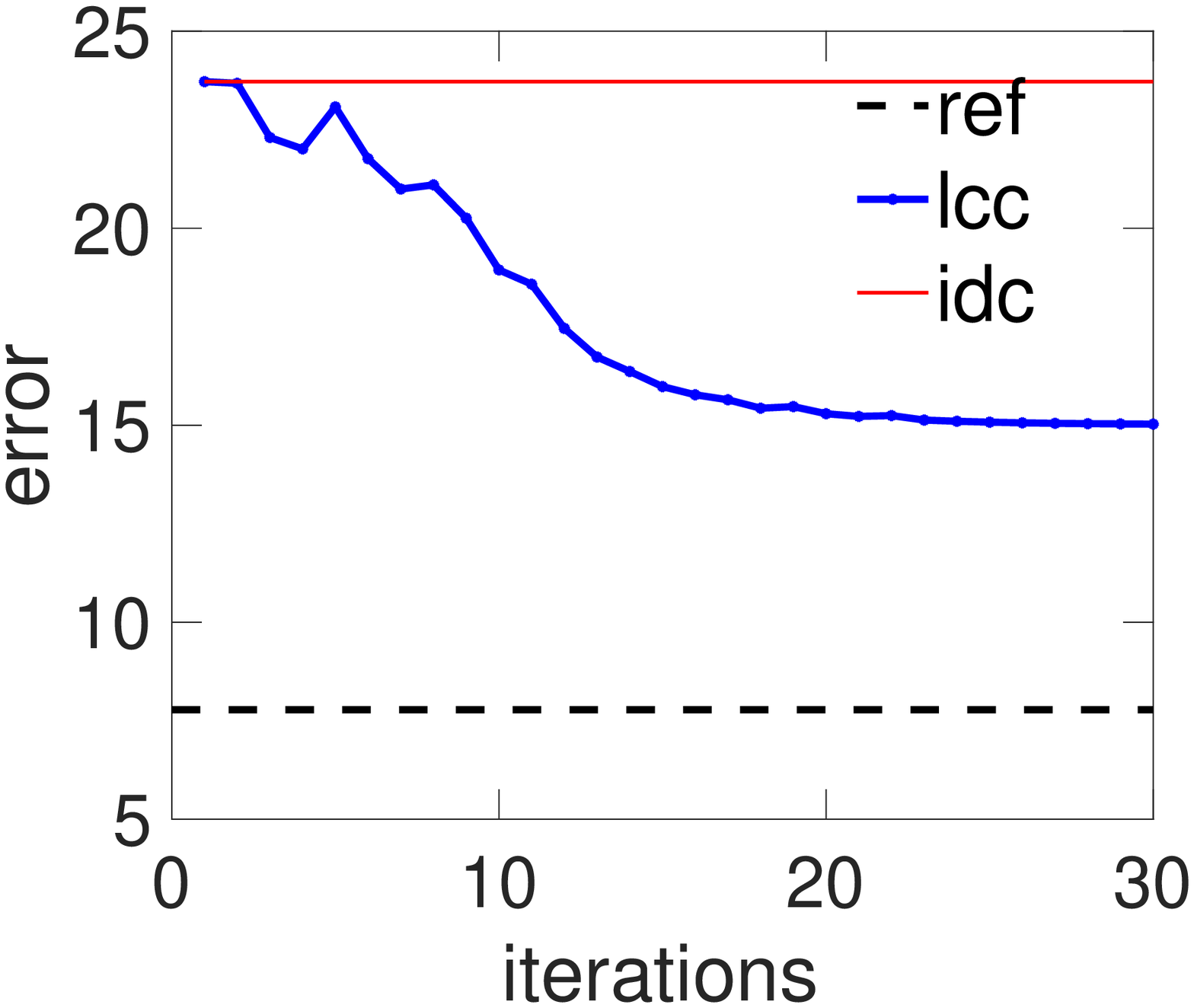} &
    \psfrag{weights}[][B]{weights $w_i$}
    \includegraphics*[height=0.27\linewidth]{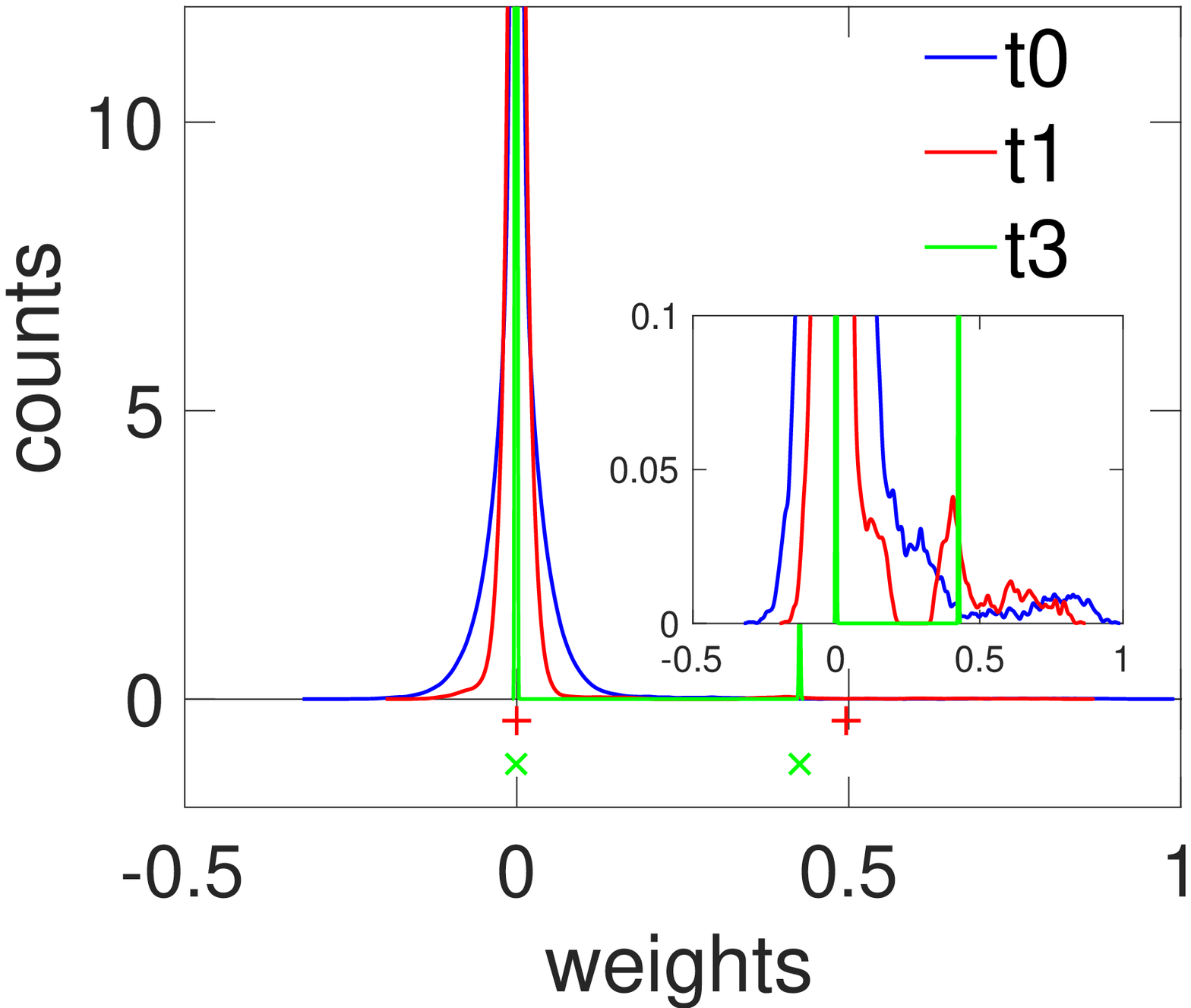} &
    \psfrag{iterations}[][]{LC iterations}
    \includegraphics*[height=0.27\linewidth]{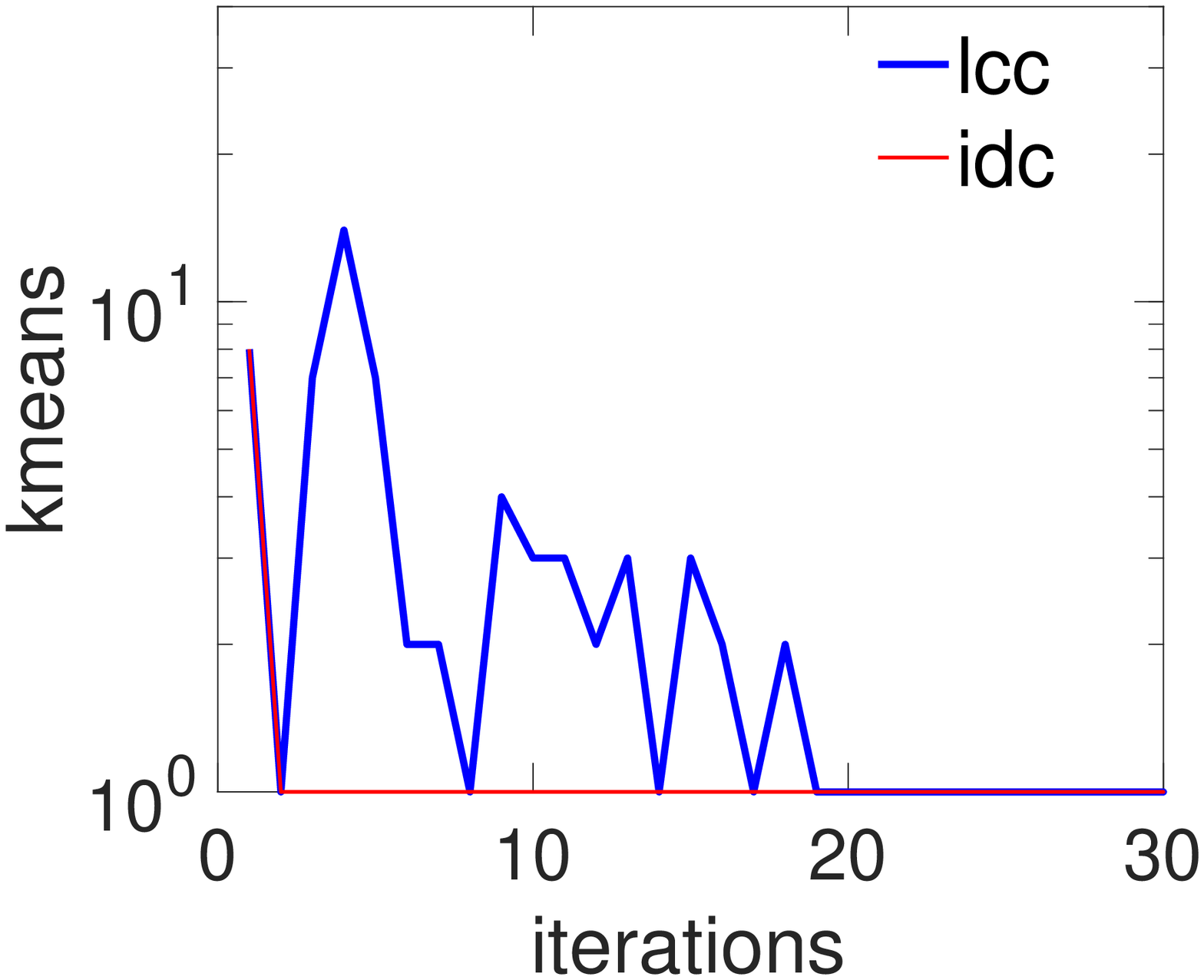}
  \end{tabular} \\[5ex]
  \begin{tabular}[c]{@{}c@{\hspace{3ex}}c@{}}
    \begin{tabular}[c]{@{}l@{}}
      $\displaystyle L(\W,\b) = \frac{1}{N} \sum^N_{n=1}{ \norm{\y_n - \W\x_n - \b}^2 }$ \\[3ex]
    \caja{c}{l}{$\y_n$: MNIST image ($28 \times 28$) \\ $\x_n$: MNIST image ($14 \times 14$), resized}
    \end{tabular} &
    \small
    \begin{tabular}[c]{@{}ccrrrr@{}}
      \toprule
      codebook & compression & \multicolumn{4}{c@{}}{\makebox[15ex]{\dotfill}loss $L$\makebox[15ex]{\dotfill}} \\
      size $K$ & ratio $\rho$ & \multicolumn{1}{c}{DC} & \multicolumn{1}{c}{iDC} & \multicolumn{1}{c}{LC} & \multicolumn{1}{c@{}}{reference} \\
      \midrule
      $4$ (2 bits/weight) & $\approx\times 16$ & 21.531 & 21.531 & 10.666 & 7.779 \\
      $2$ (1 bit/weight) & $\approx\times 32$ & 23.721 & 23.721 & 15.026 & 7.779 \\
      \bottomrule
    \end{tabular}
  \end{tabular}
  \caption{Regression problem using a codebook of size $K = 4$ (row 1) and $K = 2$ (row 2), and training loss for each method (table). \emph{Column 1}: loss over iterations. \emph{Column 2}: weight distribution of the reference model (iteration 0, before quantization), direct compression DC (iteration 1), and the LC algorithm (iteration 30), using a kernel density estimate with manually selected bandwidth. The inset enlarges vertically the distributions to show the small cluster structure. The locations of the codebook centroids are shown below the distributions as markers: $+$ are the centroids fitted to the reference model and $\times$ the centroids at the end of the LC algorithm. \emph{Column 3}: number of $k$-means iterations in each C step of the LC algorithm.}
  \label{f:regression}
\end{figure}

This experiment has two goals: 1) to verify in a controlled setting without local optima and with exact L and C steps that DC and iDC are identical to each other and significantly worse than LC. 2) To test our LC algorithm with a weight distribution that is far from Gaussian (unlike the weight distributions that typically arise with deep nets, which look zero-mean Gaussian). The problem is a simulated ``super-resolution'' task, where we want to recover a high-resolution image from a low-resolution one, by training a linear regression on pairs $(\x_n,\y_n) =$ (low-res, high-res), i.e., the loss is $L(\W,\b) = \smash{\frac{1}{N} \sum^N_{n=1}{ \norm{\y_n - \W\x_n - \b}^2 }}$, with weights \W\ and biases \b. We construct each low-resolution image \x\ by bicubic interpolation (using Matlab) of the high-resolution image \y. Ignoring border effects and slight nonlinearities, this means that each pixel (component) of \x\ is approximately a linear combination with constant coefficients of its corresponding pixels in \y. Hence, we can write the mapping from high to low resolution approximately as a linear mapping $\y = \A\x + \aa$, where $\aa = \0$ and the $i$th row of \A\ contains a few nonzero weights (the coefficients of the linear combination for pixel $x_i$). The ground-truth recovery matrix that optimizes the loss is then $\W = \A^+$, and it has a similar structure: roughly, each row contains only a few nonzeros, whose values are about the same across rows. We also add Gaussian noise when generating each low-resolution image $\x_n$, which spreads the optimal weights $w_{ij}$ around the ideal values above and also spreads the biases around zero. In summary, this means that the reference model weights have a clustered distribution, with a large cluster around zero, and a few small clusters at positive values.

To construct the dataset, we randomly selected $N=1\,000$ MNIST images $\y_n$ of $28 \times 28$, resized them as above to $14\times 14$ and added Gaussian noise to generate the $\x_n$, so that \W\ is of $784 \times 196$ ($P_1 =$ 153\,664 weights) and \b\ of $784 \times 1$ ($P_0 =$ 784). We compress \W\ using a codebook of size $K=2$ (1 bit per weight value, $\rho \approx \times$32) or $K=4$ (2 bits per weight value, $\rho \approx \times$16). The reference model and the L step have a single closed-form solution given by a linear system. For the LC algorithm, we increase $\mu_k = a^k \mu_0$ with $\mu_0 = 10$ and $a = 1.1$ for 30 iterations.

Fig.~\ref{f:regression} shows the results for codebook sizes $K=2$ and $4$. Firstly, as expected, DC and iDC do not change past the very first iteration, while LC achieves a much lower loss. The reference weight distribution (blue curve) shows a large cluster at zero and small clusters around 0.25 and 0.75 (see the inset). The small clusters correspond to the (inverse) bicubic interpolation coefficients, and it is crucial to preserve them in order to achieve a low loss. The LC algorithm indeed does a good job at this: with $K=2$ it places one centroid at zero and the other around $0.4$ (striking a balance between the small clusters); with $K=4$ it places one centroid at zero, two near the small clusters, and a fourth around $-0.1$. Note that the location of these centroids does not correspond to the reference model (the reference model quantization are the red $+$ markers), because the LC algorithm optimizes the centroids and the weights to achieve the lowest loss.

\subsection{Quantizing LeNet neural nets for classification on MNIST}
\label{s:expts:LeNet}

We randomly split the MNIST training dataset (60k grayscale images of $28 \times 28$, 10 digit classes) into training (90\%) and test (10\%) sets. We normalize the pixel grayscales to [0,1] and then subtract the mean. We compress all layers of the network but each layer has its own codebook of size $K$. The loss is the average cross-entropy. To train a good reference net, we use Nesterov's accelerated gradient method \citep{Nester83a} with momentum 0.9 for 100k minibatches, with a carefully fine-tuned learning rate $0.02 \cdot 0.99^j$ running 2k iterations for every $j$ (each a minibatch of 512 points). The $j$th L step parameters are given below for each net. For LC we also update $\mu$ and $\blambda$ at the end of the C step. Our LC algorithm uses a multiplicative schedule for the penalty parameter $\mu_j = \mu_0 a^j$ with $\mu_0 = 9.76 \cdot 10^{-5}$ and $a = 1.1$, for $0 \le j \le 30$. The batch size is 512 points for all methods.

We use the following neural nets, whose structure is given in table~\ref{t:LeNet}:
\begin{description}
\item[LeNet300] This is a 3-layer densely connected feedforward net \citep{LeCun_98a} with tanh activations, having 266\,610 learnable parameters ($P_1 =$ 266\,200 weights and $P_0 =$ 410 biases). The $j$th L step (for LC and for iDC) runs 2k SGD iterations with momentum 0.95 and learning rate $0.1 \cdot 0.99^j$. We also trained a BinaryConnect net using the code of \citet{Courbar_15a} with deterministic rounding (without batch normalization), with $\alpha = 0.001$ and $\beta = 0.98$ after every 2k minibatch iterations, initialized from the reference net and trained for 120k minibatch iterations.
\item[LeNet5] This is a variation of the original LeNet5 convolutional net described in \citet{LeCun_98a}. It is included in Caffe%
\footnote{\url{https://github.com/BVLC/caffe/blob/master/examples/mnist/lenet.prototxt}}
\citep{Jia_14a} and was used by \citet{Han_15a}. It has ReLU activations \citep{NairHinton10a}, dropout \citep{Srivas_14a} with $p = 0.5$ on the densely connected layers, and softmax outputs, total 431\,080 trainable parameters ($P_1 =$ 430\,500 weights and $P_0 =$ 580 biases). The $j$th L step (for LC and for iDC) runs 4k SGD iterations with momentum 0.95 and learning rate $\alpha \cdot 0.99^j$, where $\alpha=0.02$ for codebook sizes $K = 2,4,8$ and $\alpha=0.01$ for $K = 16,32,64$. This is because $\alpha = 0.02$ lead to divergence on iDC, even though LC was still able to converge.
\end{description}

\begin{table}[t!]
  \centering
  \begin{tabular}[t]{@{}ll@{}}
    \multicolumn{2}{c}{LeNet300} \\
    \toprule
    Layer & Connectivity \\
    \midrule
    Input & $28 \times 28$ image \\
    1 & \caja{t}{l}{fully connected, 300 neurons, \\ followed by tanh} \\
    2 & \caja{t}{l}{fully connected, 100 neurons, \\ followed by tanh} \\
    \caja{t}{l}{3 \\ (output)} & \caja{t}{l}{fully connected, 10 neurons, \\ followed by softmax} \\
    \midrule
    \multicolumn{2}{c}{$P_1 =$ 266\,200 weights, $P_0 =$ 410 biases} \\
    \bottomrule
  \end{tabular}
  \hspace{5ex}
  \begin{tabular}[t]{@{}ll@{}}
    \multicolumn{2}{c}{LeNet5} \\
    \toprule
    Layer & Connectivity \\
    \midrule
    Input & $28 \times 28$ image \\
    1 & \caja{t}{l}{convolutional, 20 $5 \times 5$ filters (stride=1), \\ total 11\,520 neurons, followed by ReLU} \\
    2 & \caja{t}{l}{max pool, $2 \times 2$ window (stride=2), \\ total 2\,280 neurons} \\
    3 & \caja{t}{l}{convolutional, 50 $5 \times 5$ filters (stride=1), \\ total 3\,200 neurons, followed by ReLU} \\
    4 & \caja{t}{l}{max pool, $2 \times 2$ window (stride=2), \\ total 800 neurons} \\
    5 & \caja{t}{l}{fully connected, 500 neurons and dropout \\ with $p=0.5$, followed by ReLU} \\
    \caja{t}{l}{6 \\ (output)} & \caja{t}{l}{fully connected, 10 neurons and dropout \\ with $p=0.5$, followed by softmax} \\
    \midrule
    \multicolumn{2}{c}{$P_1 =$ 430500 weights, $P_0 =$ 580 biases} \\
    \bottomrule
  \end{tabular}
  \caption{Structure of the LeNet300 and LeNet5 neural nets trained on the MNIST dataset.}
  \label{t:LeNet}
\end{table}

\begin{figure}[p]
  \centering
  \psfrag{ref}[l][l]{ref}
  \psfrag{idc}[l][l]{iDC}
  \psfrag{lcc}[l][l]{LC}
  \psfrag{x1}[l][l]{$\times$30}
  \psfrag{x2}[l][l]{$\times$15}
  \psfrag{x3}[l][l]{$\times$10}
  \psfrag{x4}[l][l]{$\times$7.9}
  \psfrag{x5}[l][l]{$\times$6.3}
  \psfrag{x6}[l][l]{$\times$5.2}
  \psfrag{iterations}{}
  \begin{tabular}{@{}c@{\hspace{1ex}}c@{}c@{}}
    & LeNet300 & LeNet5 \\
    \rotatebox{90}{\hspace{18ex}training loss $L$} &
    \psfrag{time}[][]{time (s)}
    \includegraphics*[width=0.48\linewidth]{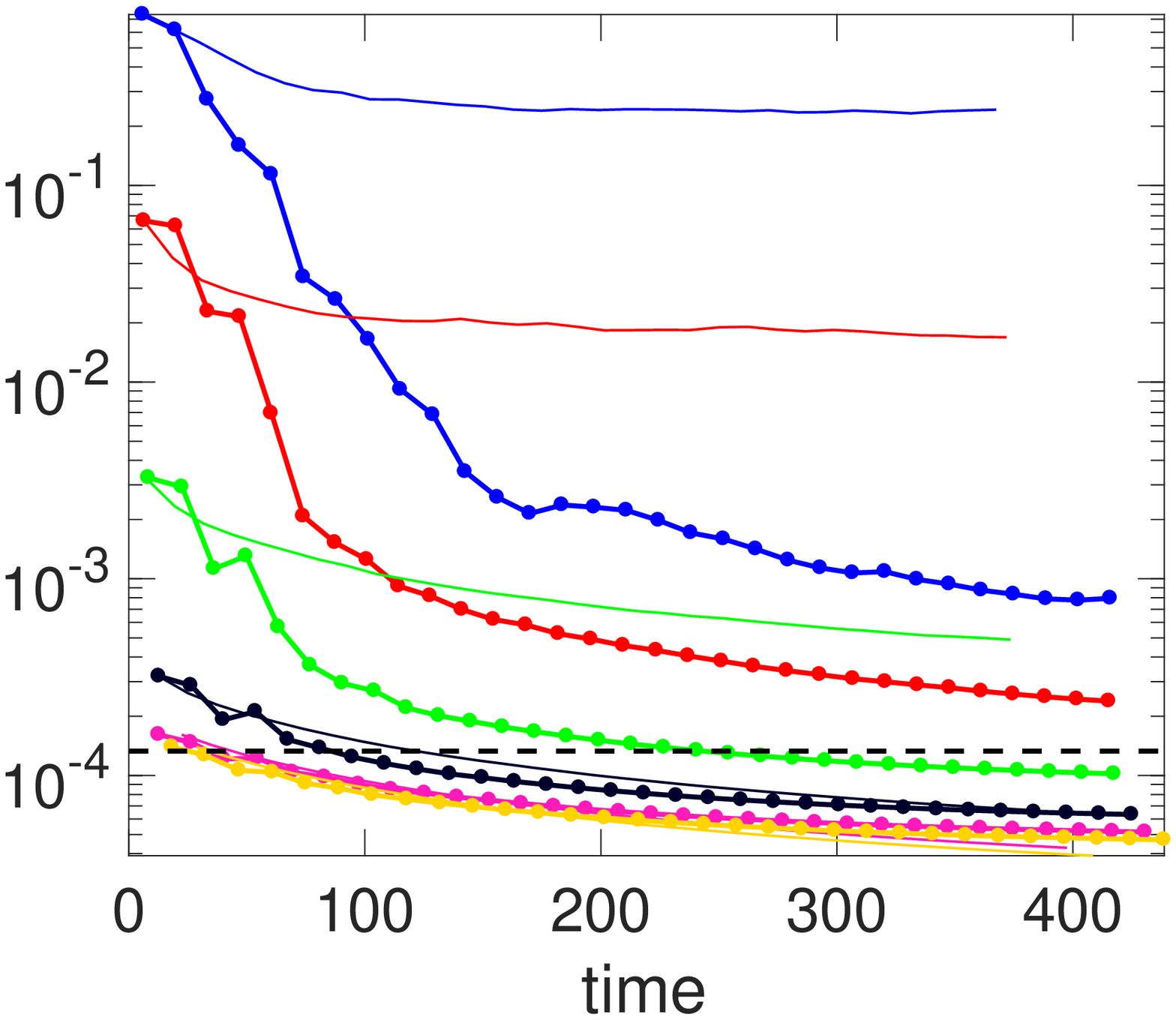} &
    \psfrag{time}[][]{time (s)}
    \includegraphics*[width=0.48\linewidth,bb=20 4 505 421]{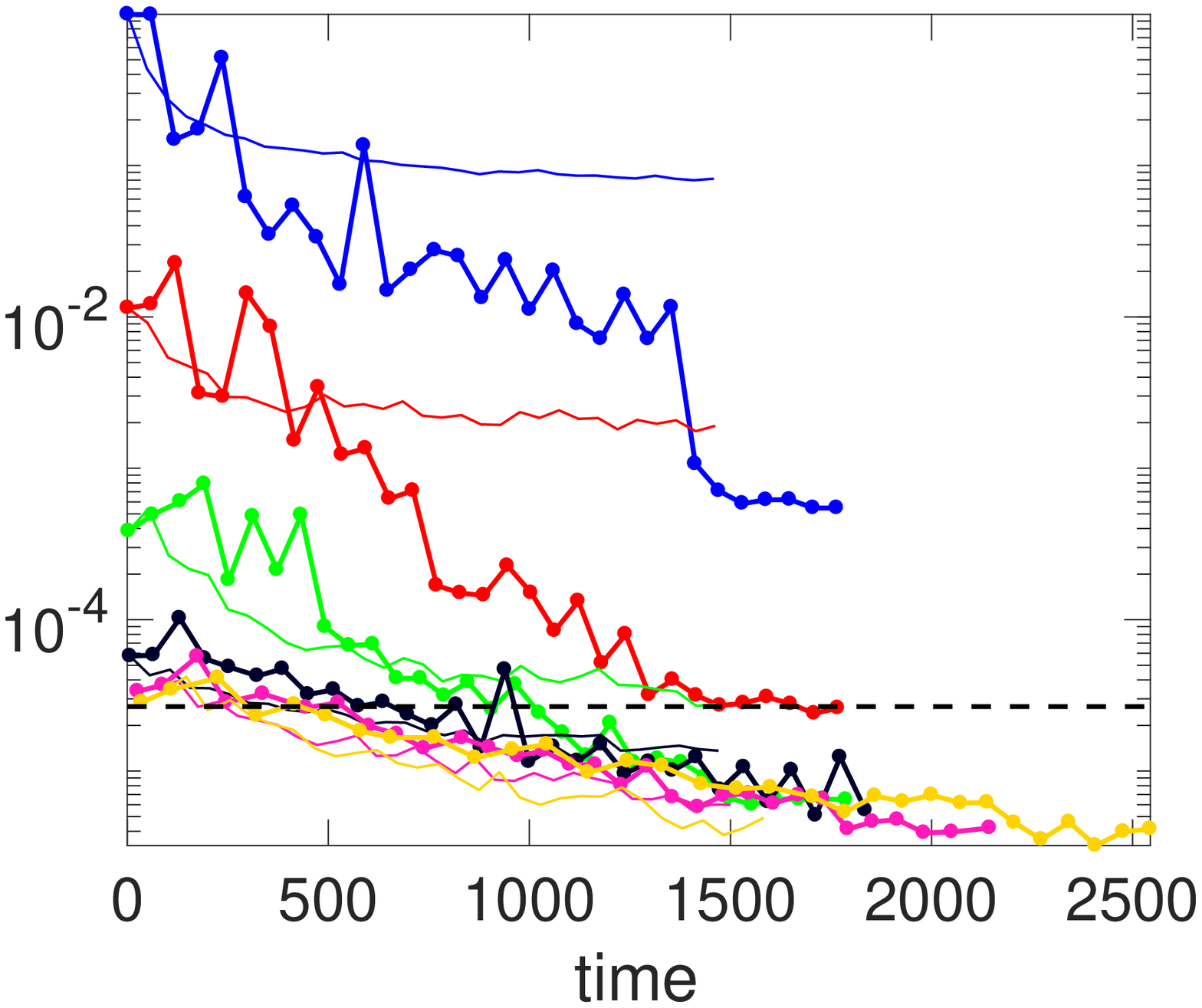} \\[2ex]
    \rotatebox{90}{\hspace{18ex}training loss $L$} &
    \includegraphics*[width=0.48\linewidth]{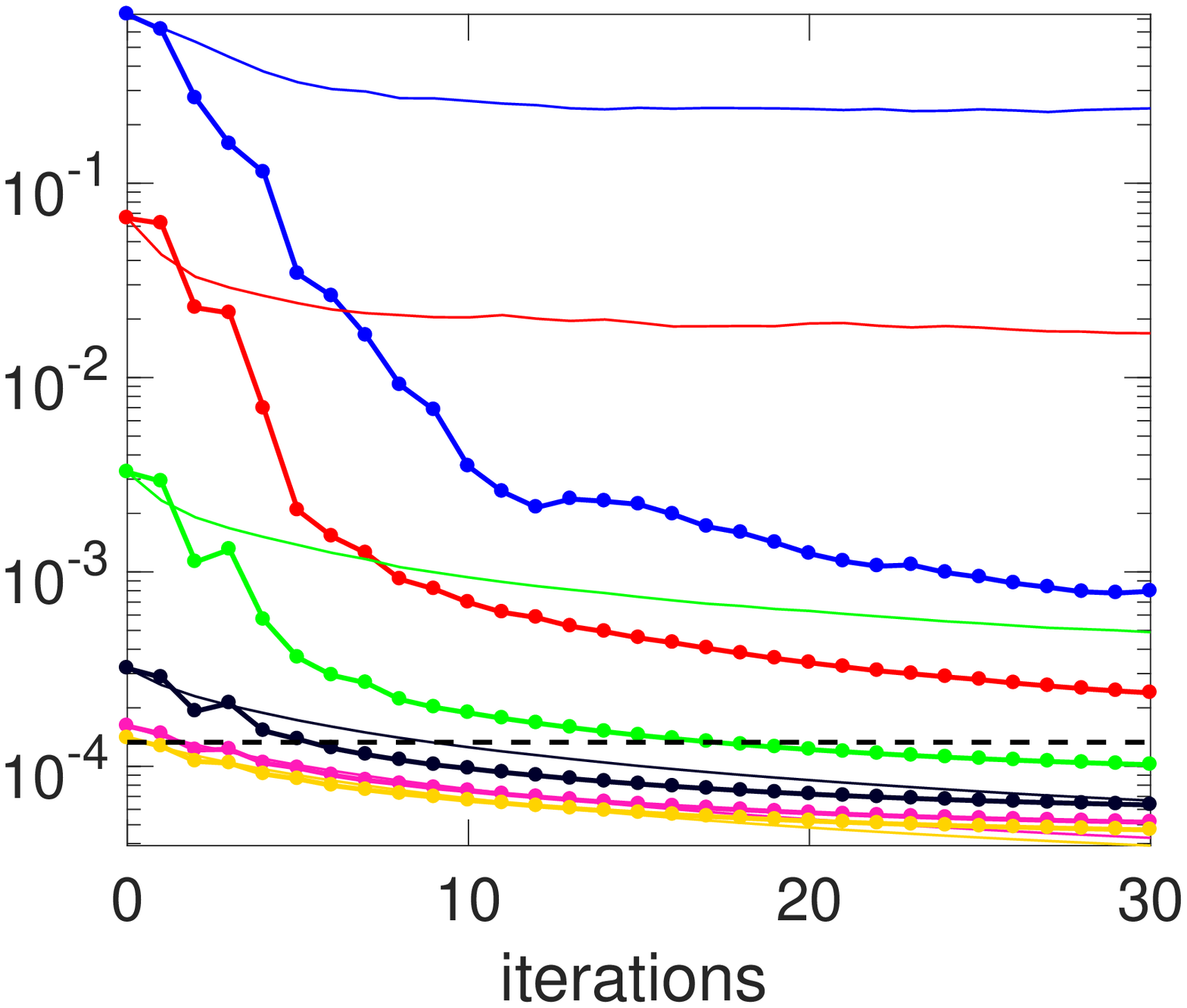} &
    \includegraphics*[width=0.48\linewidth]{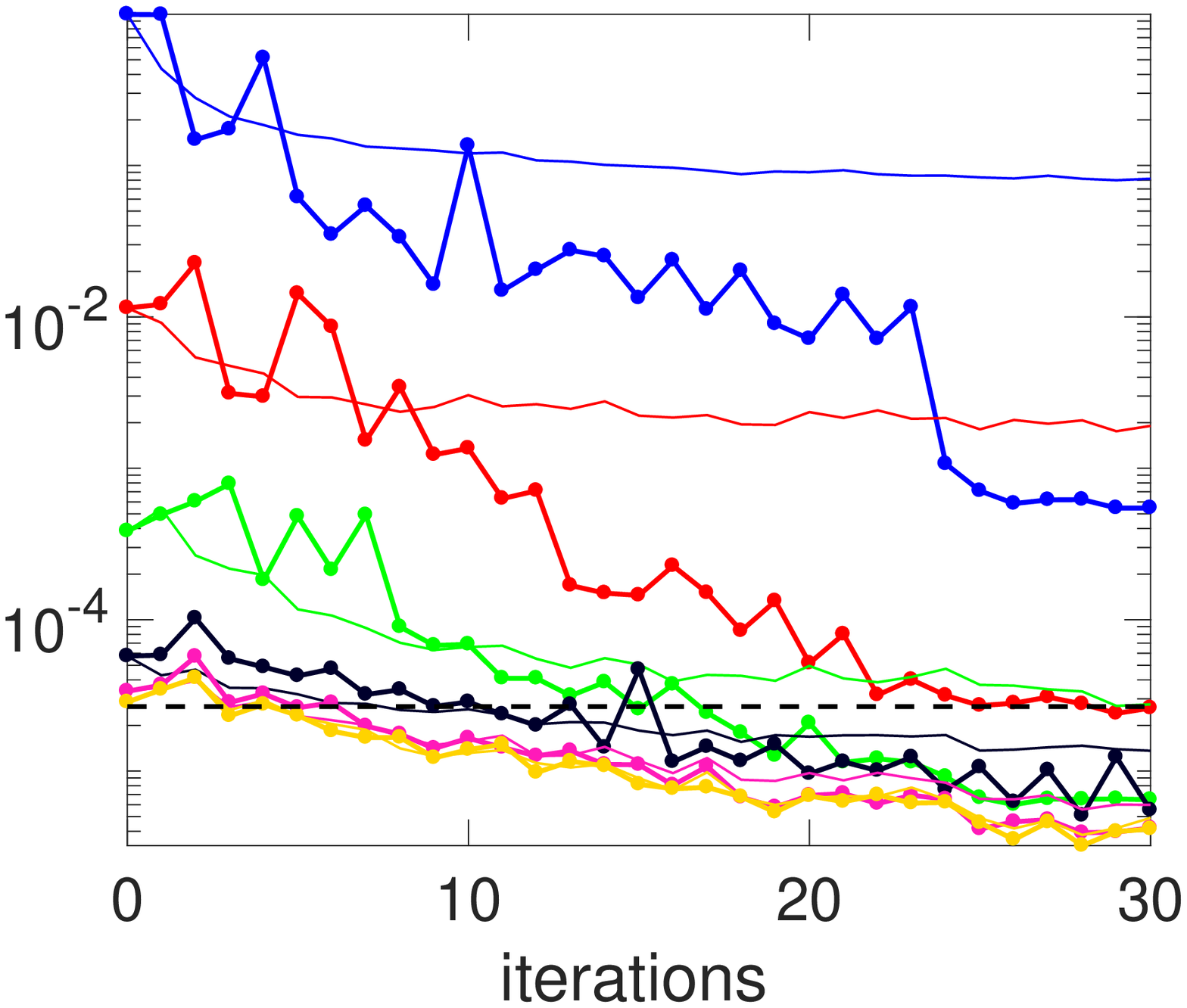} \\[-3ex]
    \rotatebox{90}{\hspace{18ex}test error $E_{\text{test}}$ (\%)} &
    \psfrag{iterations}[][B]{SGD iterations $\times$2k}
    \includegraphics*[width=0.48\linewidth]{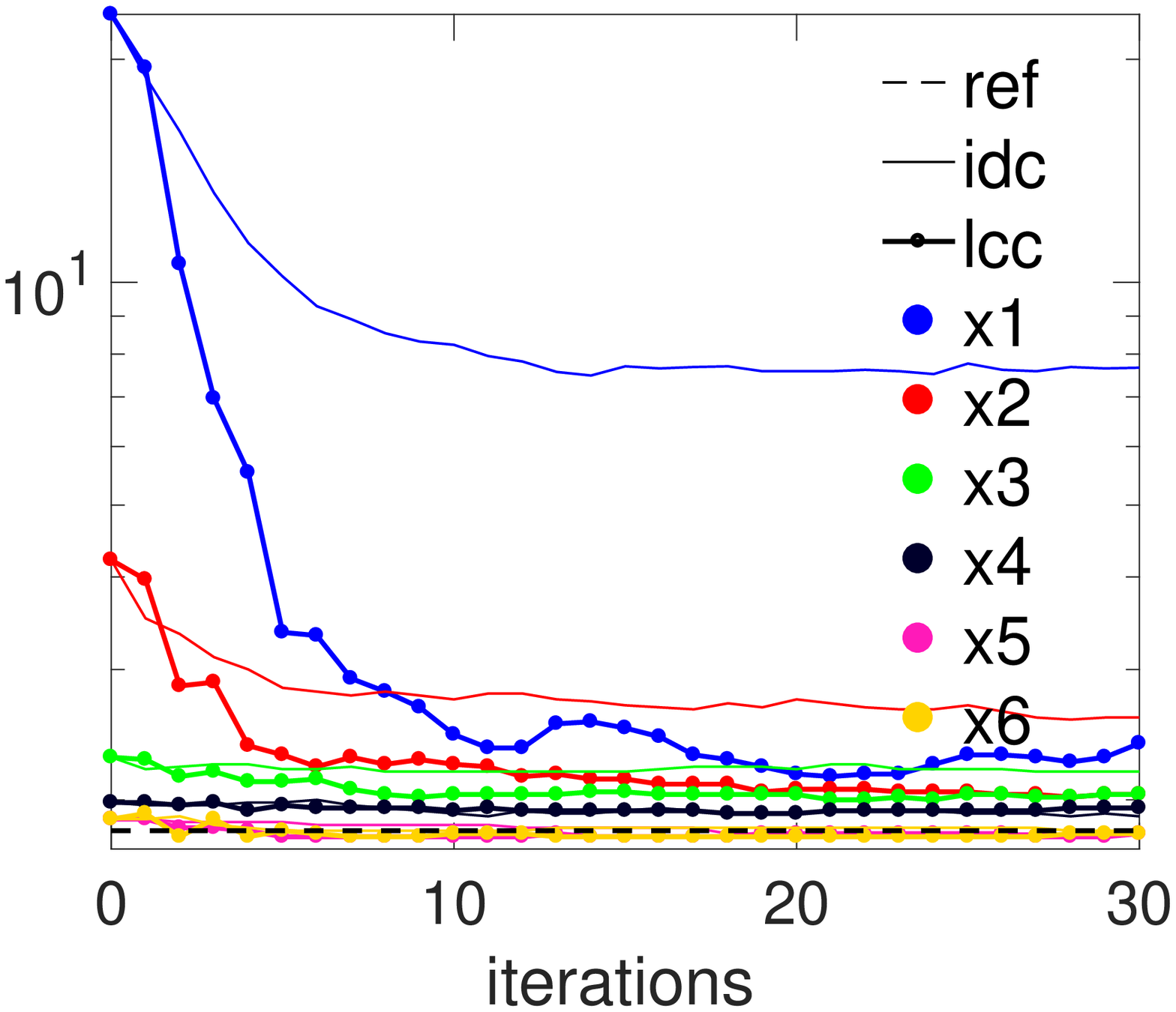} &
    \psfrag{iterations}[][B]{SGD iterations $\times$4k}
    \includegraphics*[width=0.48\linewidth]{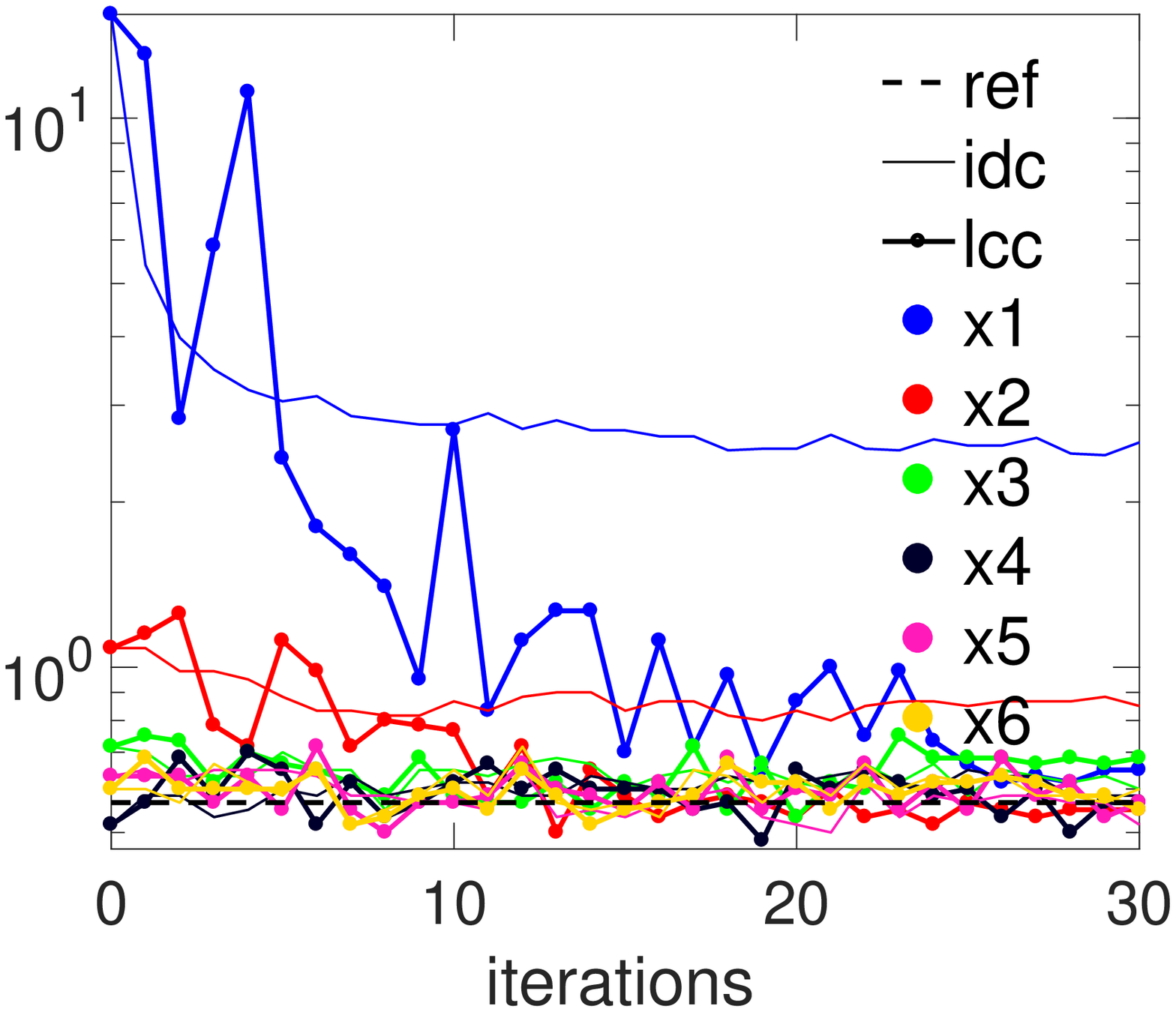}
  \end{tabular}
  \caption{Learning curves over runtime and over SGD iterations (each marker $\bullet$ indicates one LC iteration, containing 2\,000 or 4\,000 SGD iterations), with different compression ratios (codebook sizes $K$), on the LeNet neural nets. Reference: dashed black horizontal line, LC: thick lines with markers, iDC: thin lines.}
  \label{f:learning-curves}
\end{figure}

\begin{figure}[p]
  \centering
  \begin{tabular}{@{}c|lr|rrr|crr|crr@{}}
    \toprule
    & & & \multicolumn{3}{c|}{LC algorithm} & \multicolumn{3}{c|}{direct compression, DC}& \multicolumn{3}{c}{iterated DC, iDC}  \\
    & \multicolumn{1}{c}{$\rho$} & \multicolumn{1}{c|}{$K$} & $\log{L}$ & $E_{\text{train}}$  & $E_{\text{test}}$ & $\log{L}$ &  $E_{\text{train}}$ & $E_{\text{test}}$ & $\log{L}$ &  $E_{\text{train}}$ & $E_{\text{test}}$ \\  
    \midrule
    & \multicolumn{2}{c|}{reference} & -3.87 & 0 & 2.28 & & & & & & \\
    & $\times5.3$  & 64 & -4.33 & 0     & 2.25 & -3.85 & 0 & 2.28  & -4.41 & 0     & 2.25  \\
    & $\times6.3$  & 32 & -4.29 & 0     & 2.25 & -3.79 & 0 & 2.24  & -4.37 & 0     & 2.24 \\
    \raisebox{0pt}[0pt][0pt]{\rotatebox{90}{\makebox[0pt][c]{LeNet300}}}
    & $\times7.9$  & 16 & -4.20 & 0     & 2.25 & -3.50 & 0 & 2.26  & -4.18 & 0     & 2.20 \\
    & $\times10.5$ & 8  & -3.99 & 0     & 2.29 & -2.48 & 0.07 & 2.46  & -3.31 & 0.004 & 2.34 \\
    & $\times15.6$ & 4  & -3.62 & 0     & 2.44 & -1.18 & 2.21 & 4.58  & -1.77 & 0.543 & 3.23 \\
    & $\times30.5$ & 2  & -3.10 & 0.009 & 2.42 & -0.13 & 23.02 & 23.68 & -0.61 & 5.993 & 7.98 \\
    \midrule
    & \multicolumn{2}{c|}{reference} & -4.58 & 0 & 0.54 & & & & & & \\
    & $\times5.3$  & 64 & -5.38 & 0     & 0.47 & -4.54 & 0 & 0.52  & -5.31 & 0     & 0.52 \\
    & $\times6.3$  & 32 & -5.38 & 0     & 0.48 & -4.47 & 0 & 0.49  & -5.22 & 0     & 0.49 \\
    \raisebox{0pt}[0pt][0pt]{\rotatebox{90}{\makebox[0pt][c]{LeNet5}}}
    & $\times7.9$  & 16 & -5.26 & 0     & 0.54 & -4.24 & 0 & 0.49  & -4.87 & 0     & 0.49 \\
    & $\times10.5$ & 8  & -5.19 & 0     & 0.45 & -3.42 & 0 & 0.58  & -4.56 & 0     & 0.54 \\
    & $\times15.7$ & 4  & -4.58 & 0     & 0.53 & -1.94 & 0.29 & 0.94  & -2.45 & 0.05  & 0.66 \\
    & $\times30.7$ & 2  & -3.26 & 0.006 & 0.57 & -0.00 & 15.77 & 15.62 & -1.09 & 1.92  & 2.56 \\
    \bottomrule
  \end{tabular} \\[5ex]
  \psfrag{dc}[l][l]{DC}
  \psfrag{idc}[l][l]{iDC}
  \psfrag{lc}[l][l]{LC}
  \psfrag{lenet300}[l][l]{LeNet300}
  \psfrag{lenet5}[l][l]{LeNet5}
  \psfrag{trainloss}[B][]{training loss $L$}
  \psfrag{valerr}[B][]{test error $E_{\text{test}}$ (\%)}
  \psfrag{codebook}[][]{$\xleftarrow{\hspace{5ex}}$ codebook size $K$ $\xrightarrow{\hspace{5ex}}$}
  \begin{tabular}{@{}c@{\hspace{0.05\linewidth}}c@{}}
    \hspace{5ex}$\xleftarrow{\hspace{5ex}}$ compression ratio $\rho$ $\xrightarrow{\hspace{5ex}}$ & \hspace{5ex}$\xleftarrow{\hspace{5ex}}$ compression ratio $\rho$ $\xrightarrow{\hspace{5ex}}$ \\
    \includegraphics*[width=0.475\linewidth]{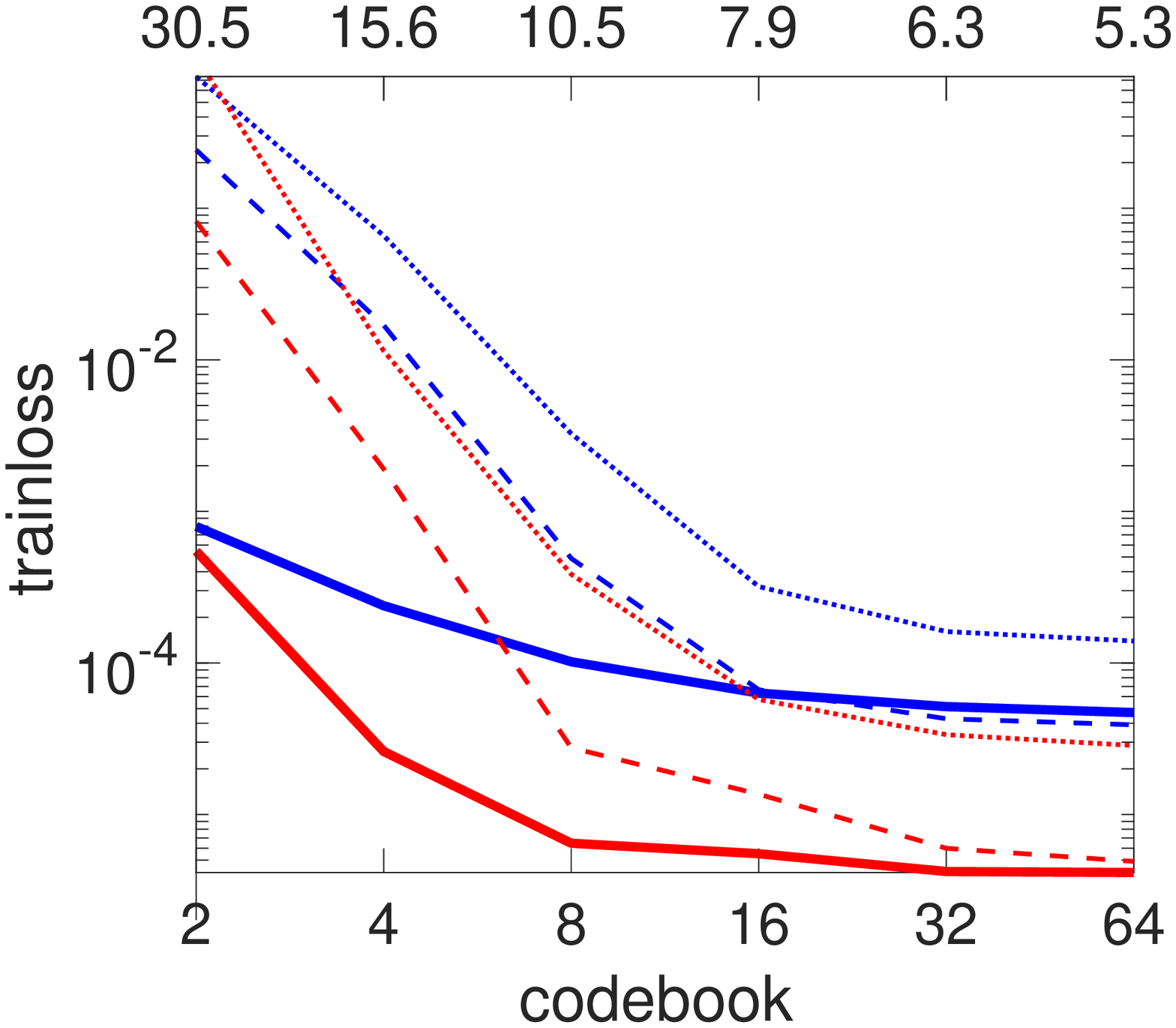} &
    \includegraphics*[width=0.475\linewidth]{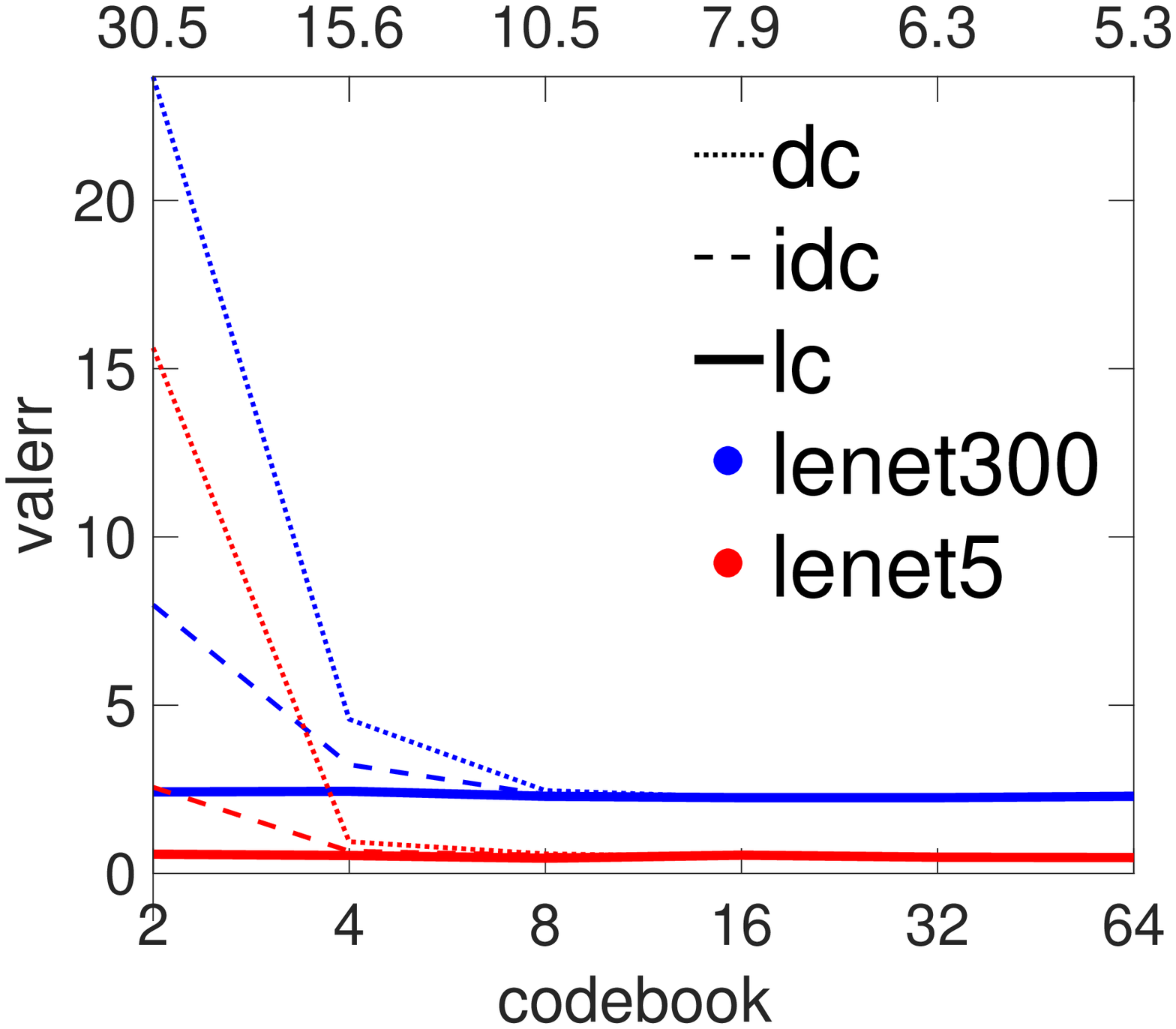}
  \end{tabular}
  \caption{Compression results for the LeNet neural nets using different algorithms, for different codebook sizes $K$ and corresponding compression ratio $\rho$, in two forms: tabular (top) and graph (bottom). We report the training loss $\log_{10}{L}$ and training and test classification error $E_{\text{train}}$ and $E_{\text{test}}$ (\%). The curves show the tradeoff between error vs compression ratio. Each LeNet neural net is shown in a different color, and each algorithm is shown in a different line type (LC: thick solid, iDC: thin dashed, DC: thin dotted).}
  \label{f:tradeoff}
\end{figure}

\begin{figure}[t!]
  \centering
  \psfrag{layer1}[l][l]{Layer 1}
  \psfrag{layer2}[l][l]{Layer 2}
  \psfrag{layer3}[l][l]{Layer 3}
  \psfrag{iterations}[][]{SGD iterations $\times$2k}
  \begin{tabular}{@{}c@{\hspace{0.05\linewidth}}c@{}}
    learning-compression (LC) algorithm & iterated direct compression (iDC) \\
    \psfrag{kmeans}[B][]{number of $k$-means iterations}
    \includegraphics*[width=0.475\linewidth]{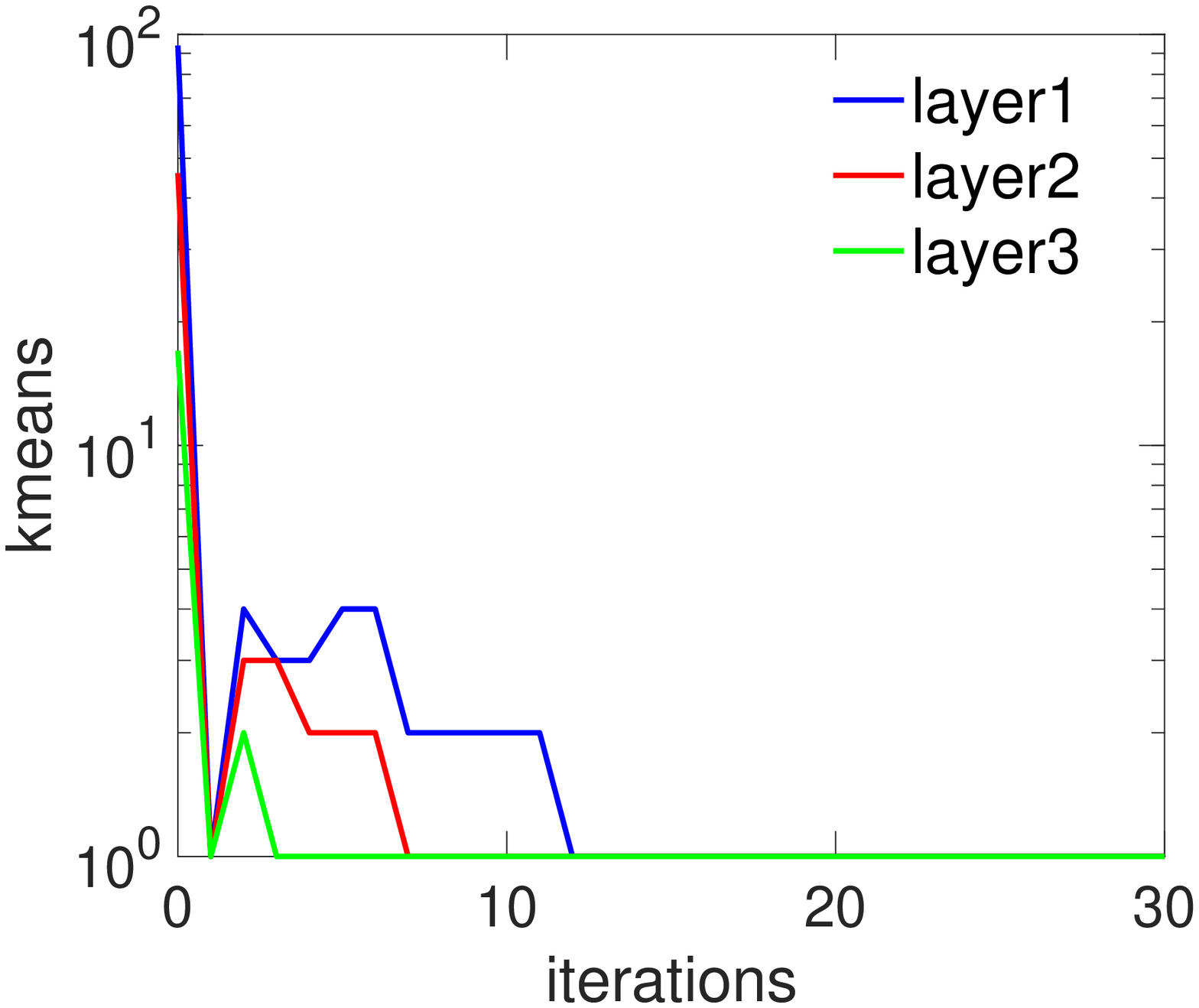} &
    \psfrag{kmeans}[B][]{number of $k$-means iterations}
    \includegraphics*[width=0.475\linewidth]{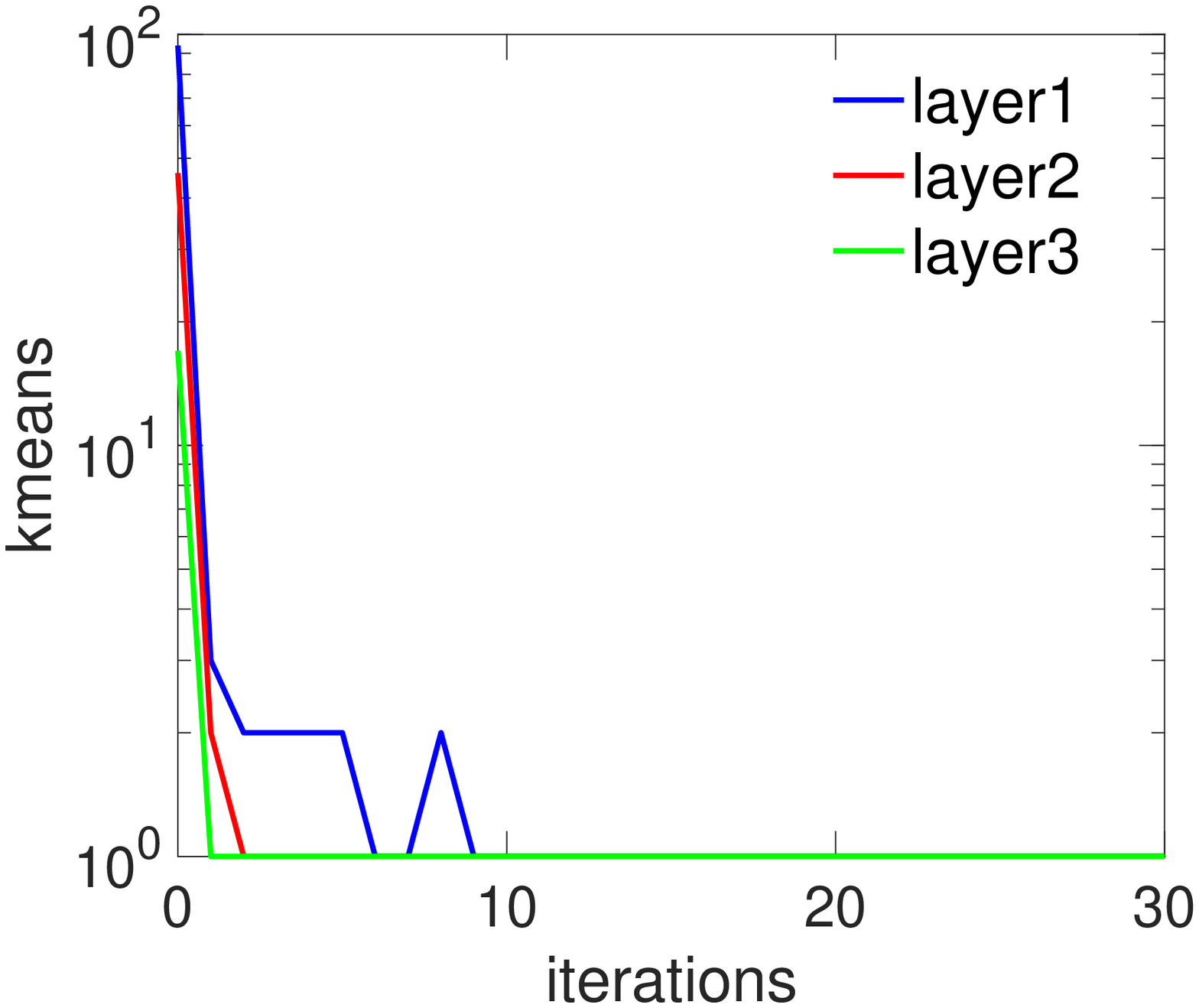}
  \end{tabular}
  \caption{Number of iterations ran within $k$-means over training for LeNet300 with a $K=4$ codebook.}
  \label{f:kmeans-its}
\end{figure}

The following figures and tables show the results. Fig.~\ref{f:learning-curves} shows the learning curves and fig.~\ref{f:tradeoff} the error vs compression tradeoff. The runtime for iDC and LC is essentially the same, as is their loss and error at low compression levels. But for high compression LC is distinctly superior. When using $K=2$ (1 bit per weight), LC also outperforms BinaryConnect, as shown in table~\ref{t:binary}. Note the two weight values found by LC considerably differ from $\pm 1$ and depend on the layer, namely $\{0.089,-0.091\}$ in layer 1, $\{0.157,-0.155\}$ in layer 2 and $\{0.726,-0.787\}$ in layer 3. Indeed, forcing weights to be $\pm 1$ is more limiting, and does not have a practical advantage over using two arbitrary values in terms of storage or runtime when applying the compressed net to an input.

\begin{table}[b!]
  \centering
  \begin{tabular}[c]{@{}c@{\hspace{10ex}}c@{}}
    \begin{tabular}[c]{@{}lrrr@{}}
      \toprule
      method & \multicolumn{1}{c}{$\log{L}$} & \multicolumn{1}{c}{$E_{\text{train}}$} & \multicolumn{1}{c}{$E_{\text{test}}$} \\
      \midrule
      reference & -3.87 & 0\% & 2.28\% \\
      LC algorithm & -3.10 & 0.009\% & 2.42\% \\
      BinaryConnect & -2.33 & 0.14\%  & 3.76\% \\
      \bottomrule
    \end{tabular} &
    \begin{tabular}[c]{@{}cc@{}}
      \toprule
      layer & codebook values for LC \\
      \midrule
      1 & $\calC = \{0.089, -0.091\}$ \\
      2 & $\calC = \{0.157, -0.155\}$ \\
      3 & $\calC = \{0.726, -0.787\}$ \\
      \bottomrule
    \end{tabular}
  \end{tabular}
  \caption{Binarization results in LeNet300 using the LC algorithm and BinaryConnect. LC uses a codebook of size $K=2$ (1 bit per weight) and the resulting codebook values are shown on the right (the BinaryConnect values are always $\pm 1$). The compression ratio is $\rho \approx \times 30.5$ for all methods.}
  \label{t:binary}
\end{table}

Note how the LC training loss need not decrease monotonically (fig.~\ref{f:learning-curves}). This is to be expected: the augmented Lagrangian minimizes eq.~\eqref{f:tradeoff} for each $\mu$, not the actual loss, but it does approach a local optimum in the limit when $\mu \rightarrow \infty$. Also note how some compressed nets actually beat the reference. This is because the latter was close but not equal to a local optimum, due to the long training times required by SGD-type algorithms. Since the compressed nets keep training, they gain some accuracy over the reference.

Figs.~\ref{f:weights-centroids-LC}--\ref{f:weights-centroids-iDC} show the evolution of weights and codebook centroids for $K=4$ for iDC and LC, for LeNet300. While LC converges to a feasible local optimum (note the delta-like weight distribution centered at the centroids), iDC does not. As we argued earlier, this is likely because iDC oscillates in a region half way between the reference net and its direct compression. These oscillations are very noticeable in the weight trajectories (right plots) for iDC in layers 1--2: the weights do not change their centroid assignment but oscillate around the centroid's value. In contrast, in LC some weights do change their centroid assignment (this happens because the L step moves the weights jointly in $P$-dimensional space), and robustly converge to a centroid. The weight distribution of BinaryConnect (not shown) is also far from the values $\pm 1$ to which the weights should tend.

Fig.~\ref{f:centroids-K} shows the final locations of the centroids for iDC and LC, for codebook sizes $K = 2$ to $64$, for LeNet300. Although the general distribution of the centroids is similar for both methods, there are subtle differences (particularly for small $K$) which, as seen in fig.~\ref{f:tradeoff}, translate into a significantly lower error for LC. For large enough $K$, the centroid distribution resembles that of the reference net. This is to be expected, because, with enough centroids, the optimally quantized network will be very similar to the reference and all 3 algorithms (DC, iDC and LC) will give the same result, namely quantizing the reference weights directly (regardless of the loss). Hence the centroid distribution will be the optimal quantization of the reference's weight distribution. Since the latter is roughly Gaussian for the LeNet300 net, the centroids will reflect this, as seen in fig.~\ref{f:centroids-K} for the larger $K$. However, for compression purposes we are interested in the small-$K$ region, and here the final weight distribution can significantly differ from the reference. For all values of $K$, the distribution of the centroids for LC is usually symmetric around zero (but sometimes its mean is significantly different from zero, as in layer 3), and its spread increases from layer 1 to layer 2 to layer 3 in range but not in standard deviation. Although some of these observations may carry over to other types of neural nets, we emphasize that the reference weight distribution and hence the centroid distribution strongly depend on the problem (dataset, model and even particular local optimum found). Indeed, the clustered distribution for the regression problem of fig.~\ref{f:regression} was very different from Gaussian. Therefore, it seems risky to anticipate what the optimal codebook distribution may be. Finding a really accurate net must be done for each particular problem with a careful optimization using a good algorithm, and the same is true for finding a really well compressed net.

\begin{figure}[p]
  \centering
  \psfrag{t0}[l][l]{0}
  \psfrag{t1}[l][l]{1}
  \psfrag{t3}[l][l]{30}
  \psfrag{weights}{}
  \psfrag{iterations}{}
  \begin{tabular}{@{}c@{}c@{}}
    \psfrag{counts}[][]{density, layer 1}
    \includegraphics*[width=0.49\linewidth]{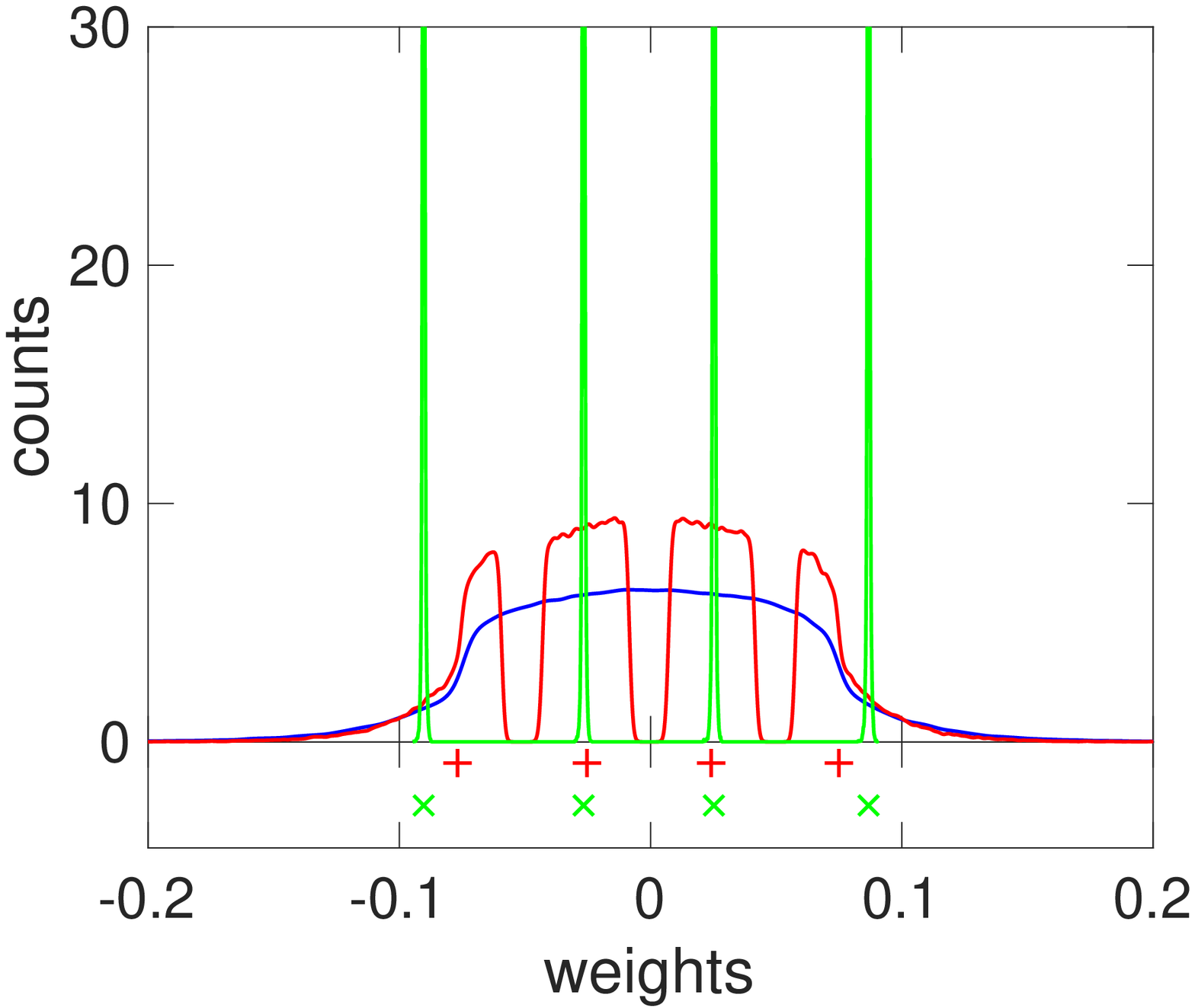} &
    \includegraphics*[width=0.49\linewidth]{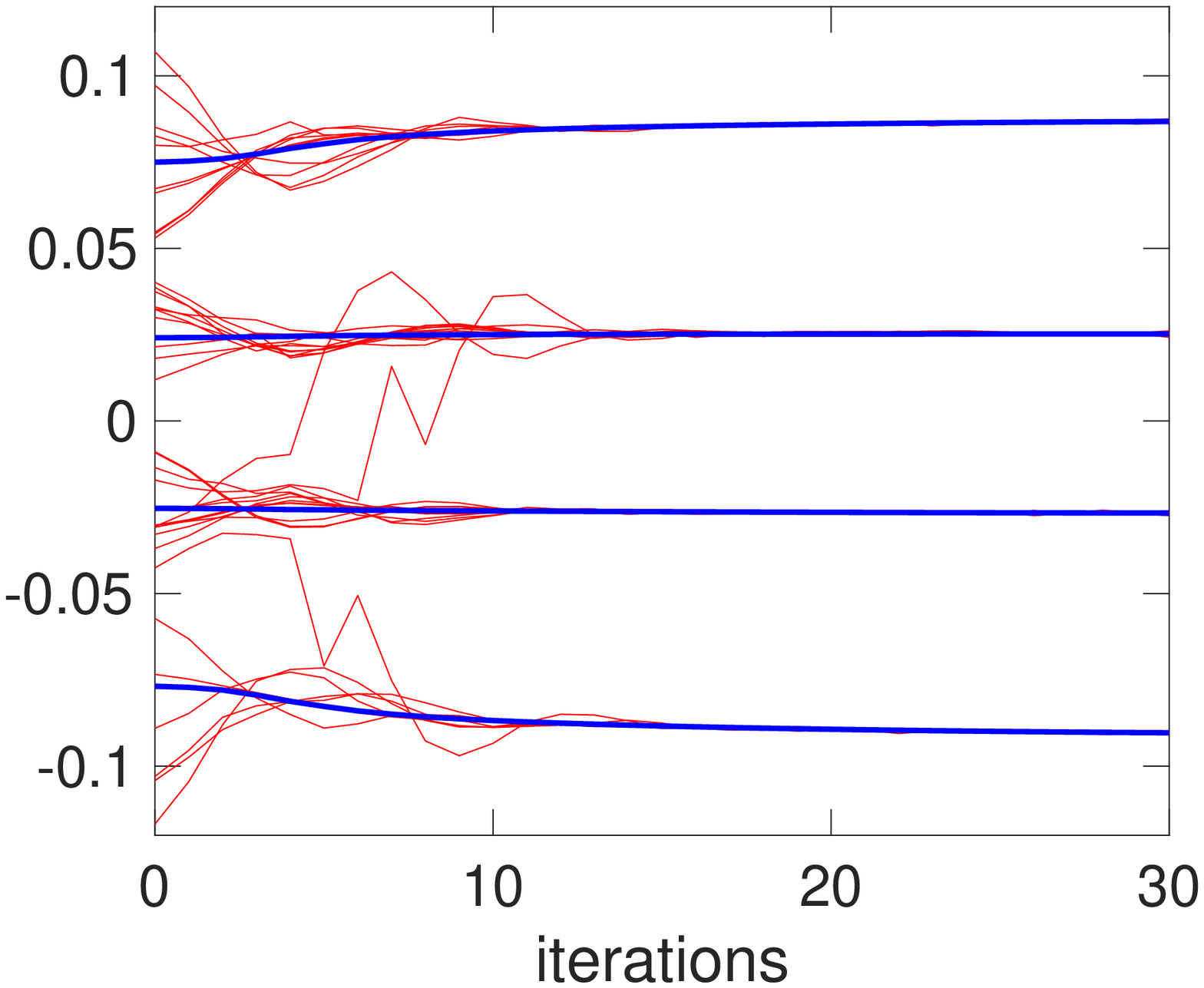} \\[-2ex]
    \psfrag{counts}[][]{density, layer 2}
    \includegraphics*[width=0.49\linewidth]{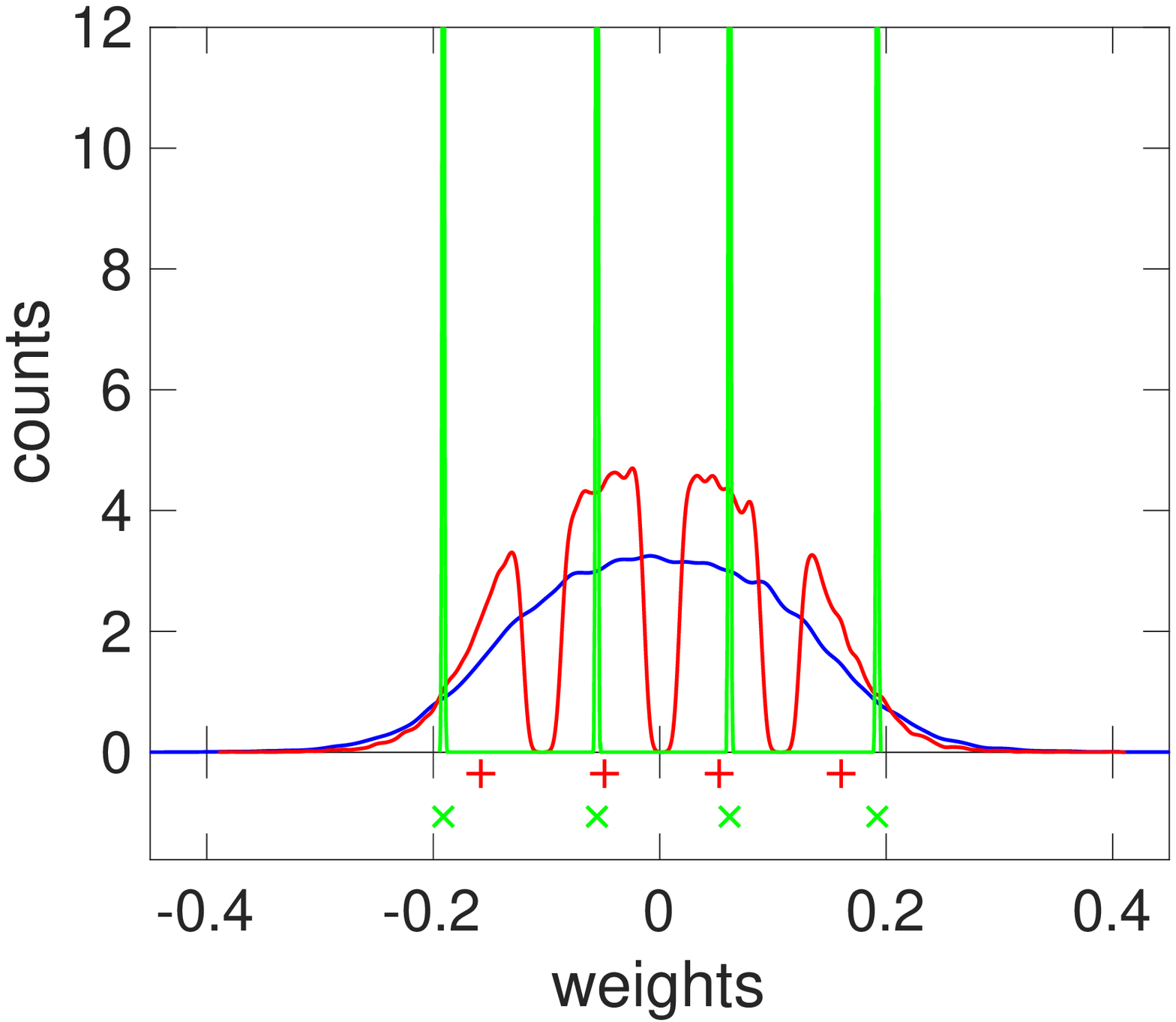} &
    \includegraphics*[width=0.49\linewidth]{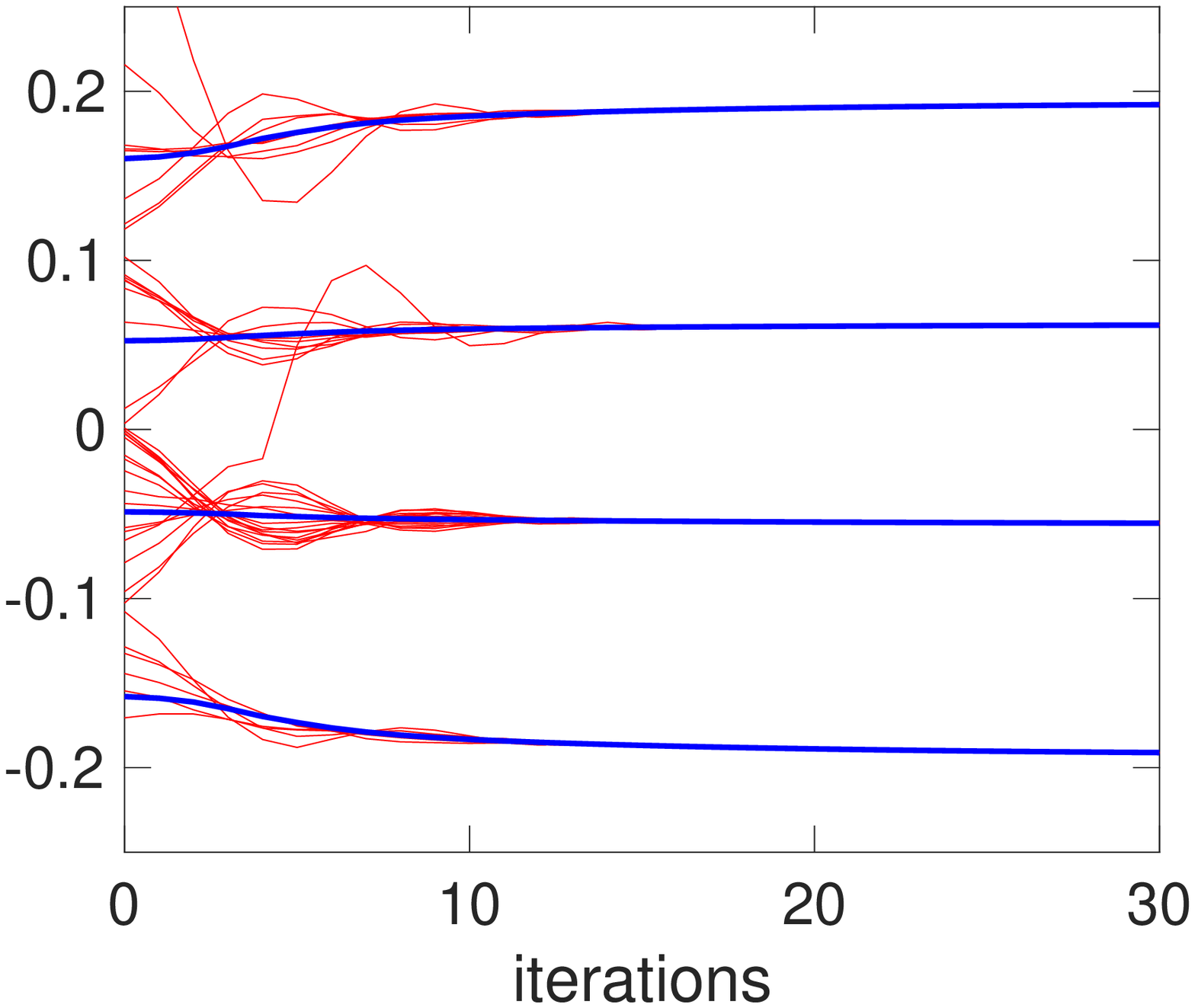} \\[-2ex]
    \psfrag{counts}[][]{density, layer 3}
    \psfrag{weights}[][]{weights $w_i$}
    \includegraphics*[width=0.49\linewidth]{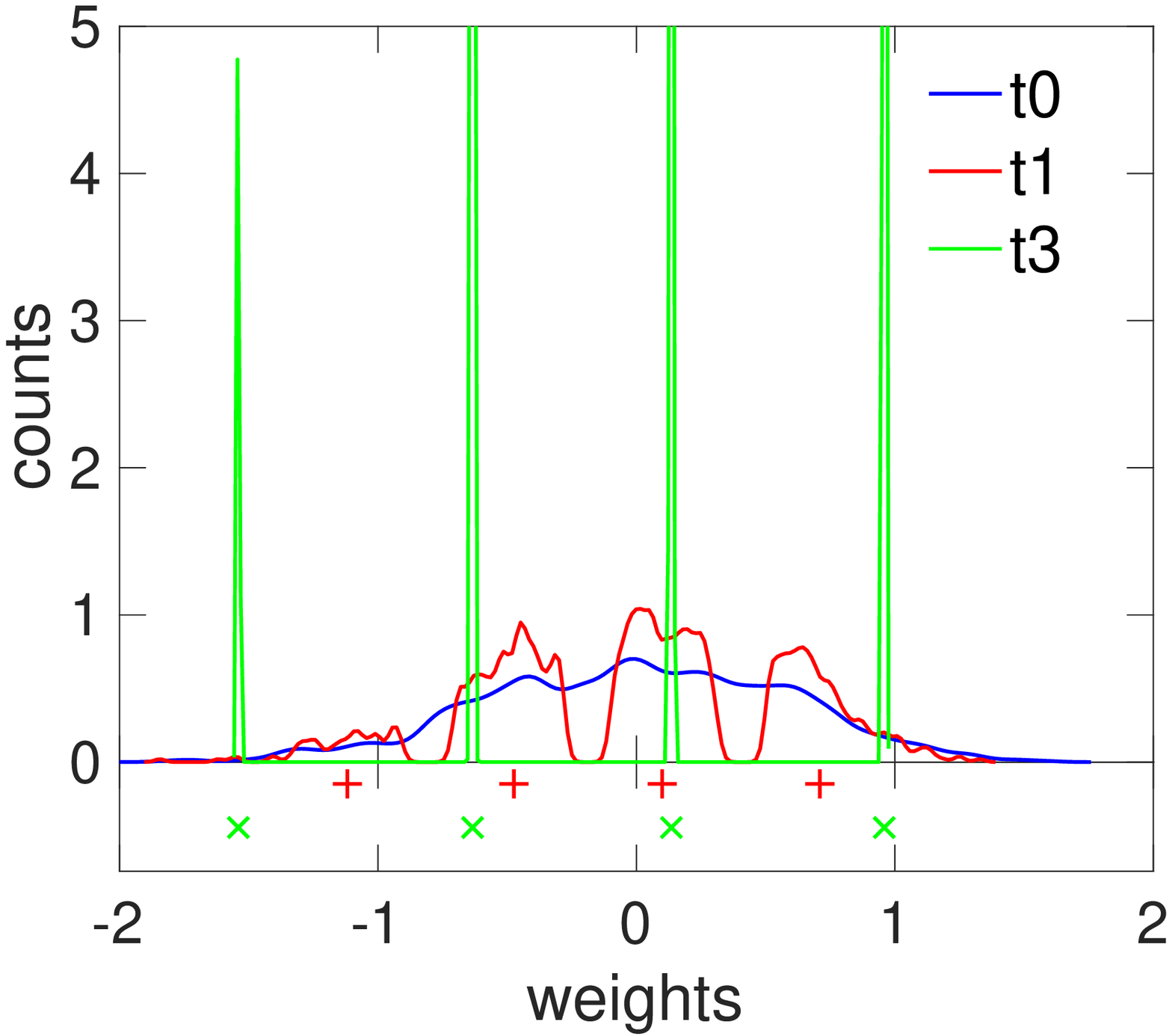} &
    \psfrag{iterations}[][]{SGD iterations $\times$2k}
    \includegraphics*[width=0.49\linewidth]{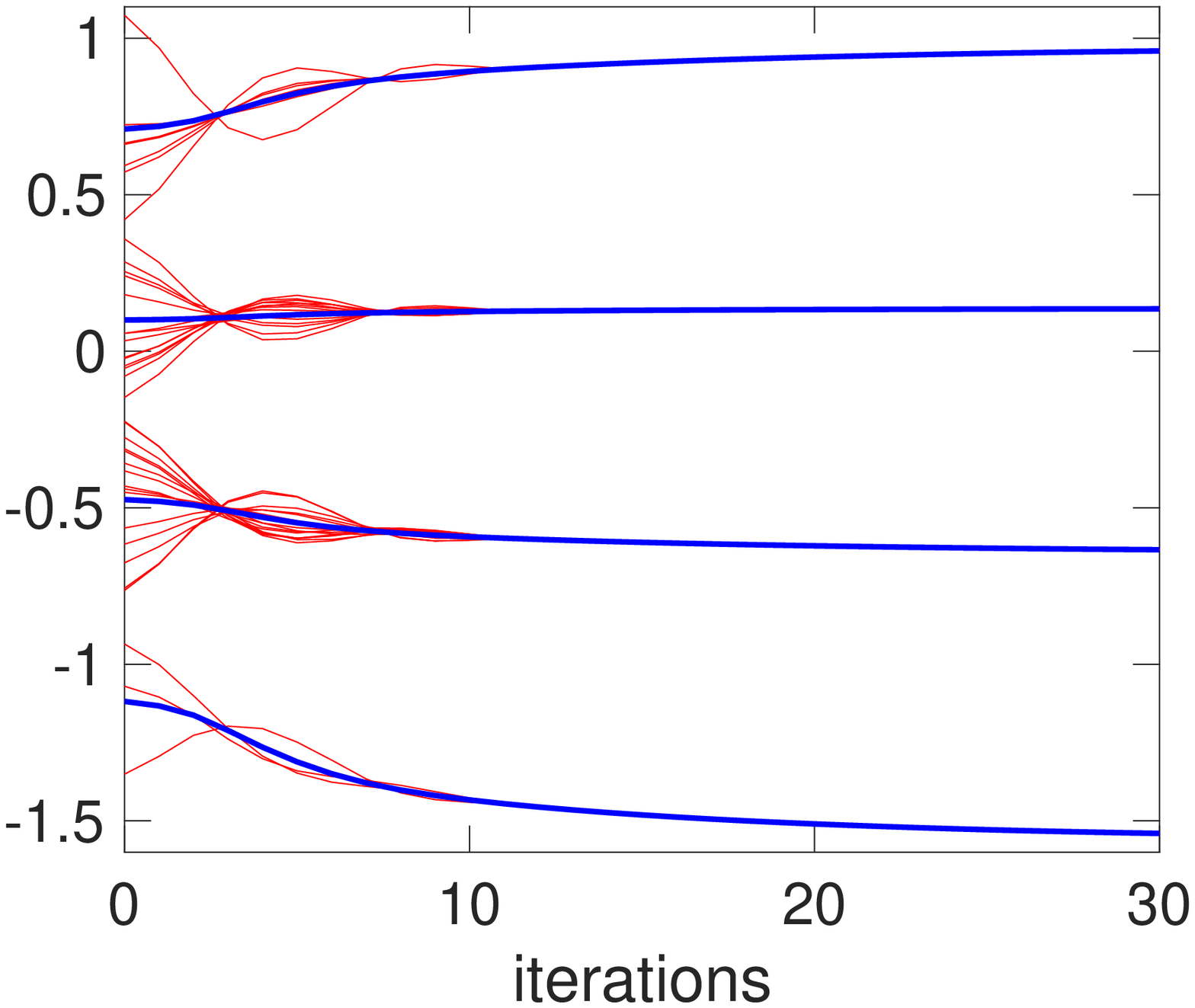}
  \end{tabular}
  \caption{Evolution of the centroid distribution over iterations for the LC algorithm for each layer of LeNet300 ($K=4$). \emph{Left}: weight distribution for LC iterations 0, 1 and 30, using a kernel density estimate with manually selected bandwidth. The locations of the codebook centroids are shown below the distributions as markers: $+$ are the centroids fitted to the reference net and $\times$ the centroids at the end of the LC algorithm. \emph{Right}: codebook centroids $c_k$ (blue) and 40 randomly chosen weights $w_i$ (red).}
  \label{f:weights-centroids-LC}
\end{figure}

\begin{figure}[p]
  \centering
  \psfrag{t0}[l][l]{0}
  \psfrag{t1}[l][l]{1}
  \psfrag{t3}[l][l]{30}
  \psfrag{weights}{}
  \psfrag{iterations}{}
  \begin{tabular}{@{}c@{}c@{}}
    \psfrag{counts}[][]{density, layer 1}
    \includegraphics*[width=0.49\linewidth]{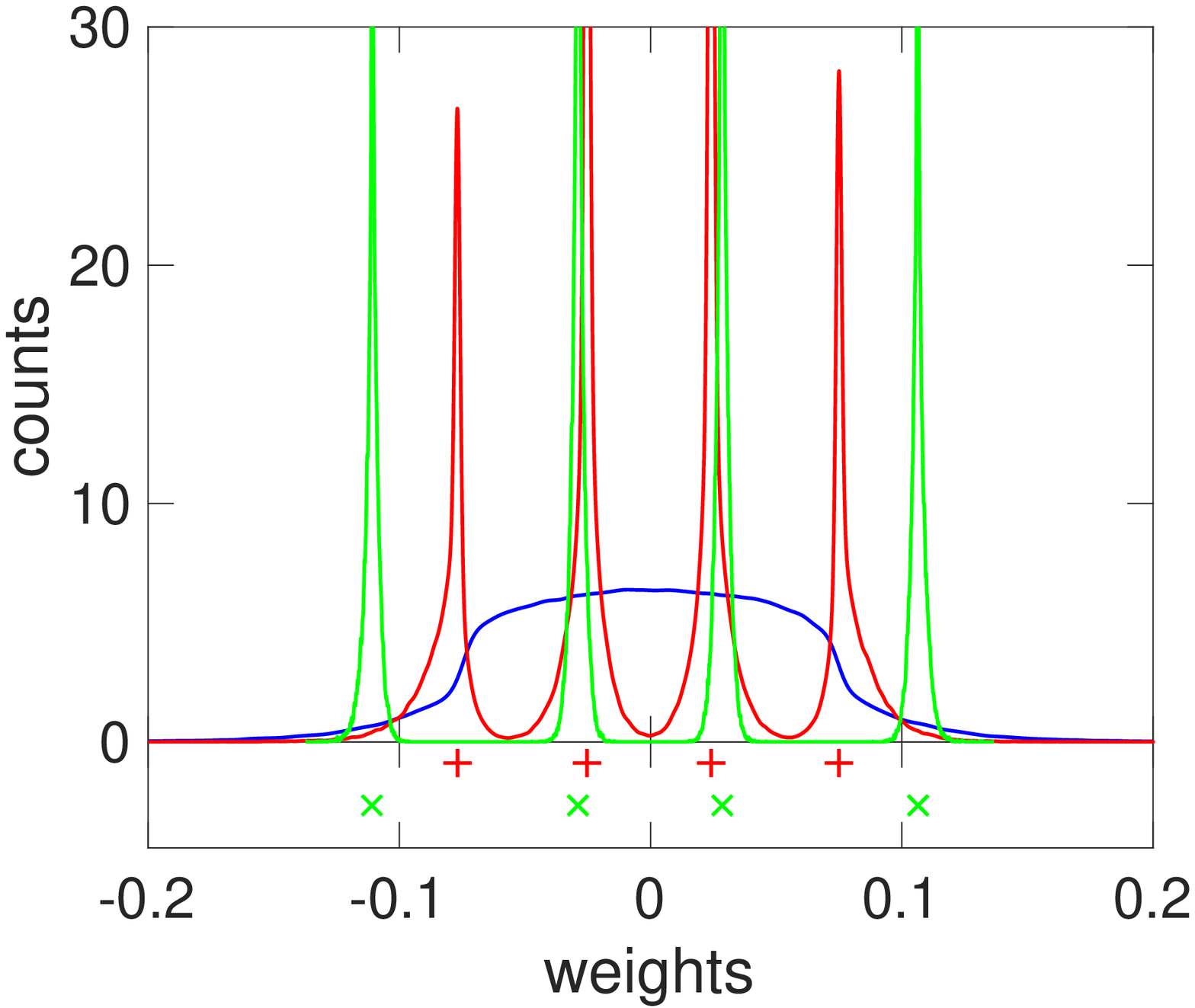} &
    \includegraphics*[width=0.49\linewidth]{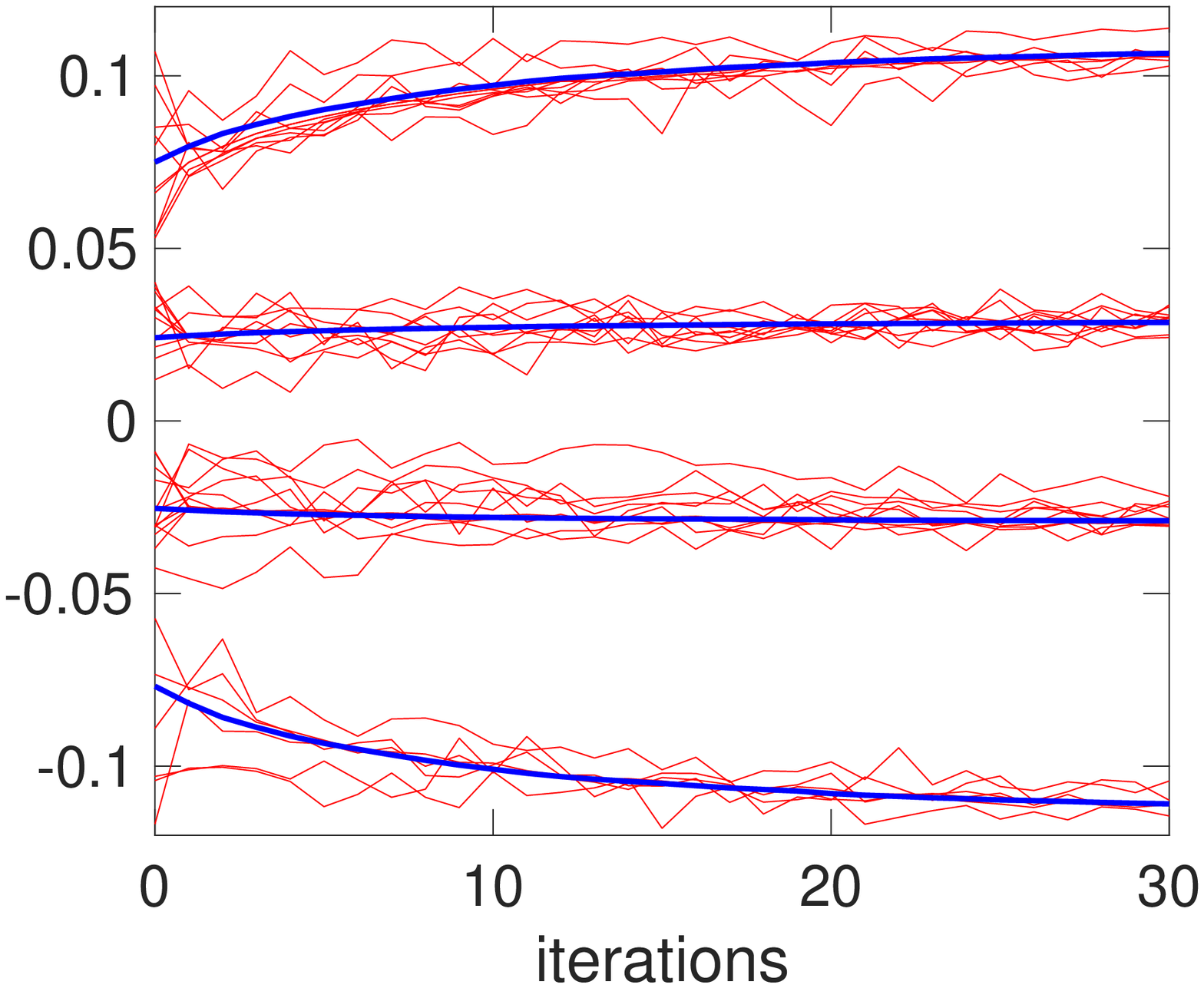} \\[-2ex]
    \psfrag{counts}[][]{density, layer 2}
    \includegraphics*[width=0.49\linewidth]{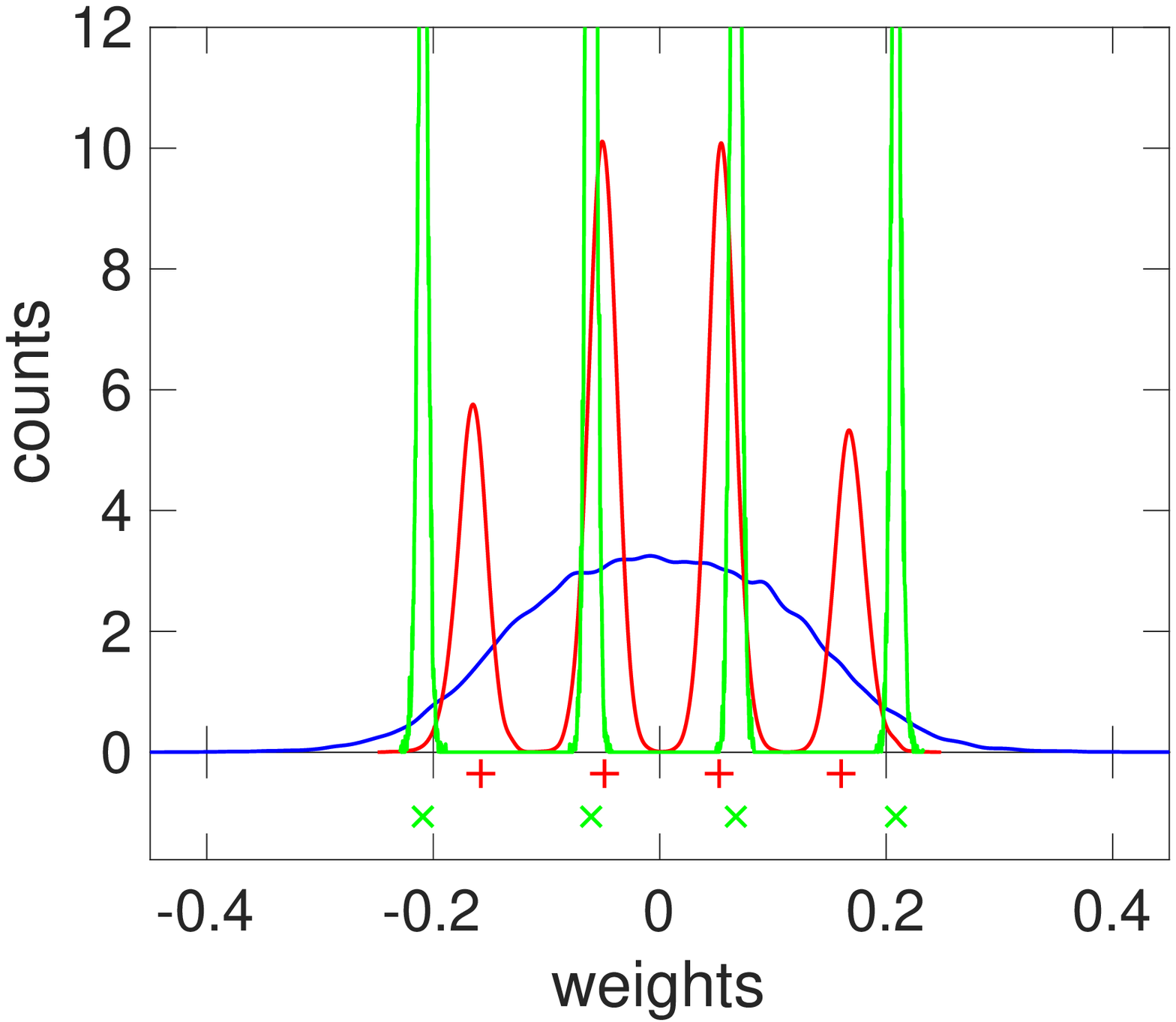} &
    \includegraphics*[width=0.49\linewidth]{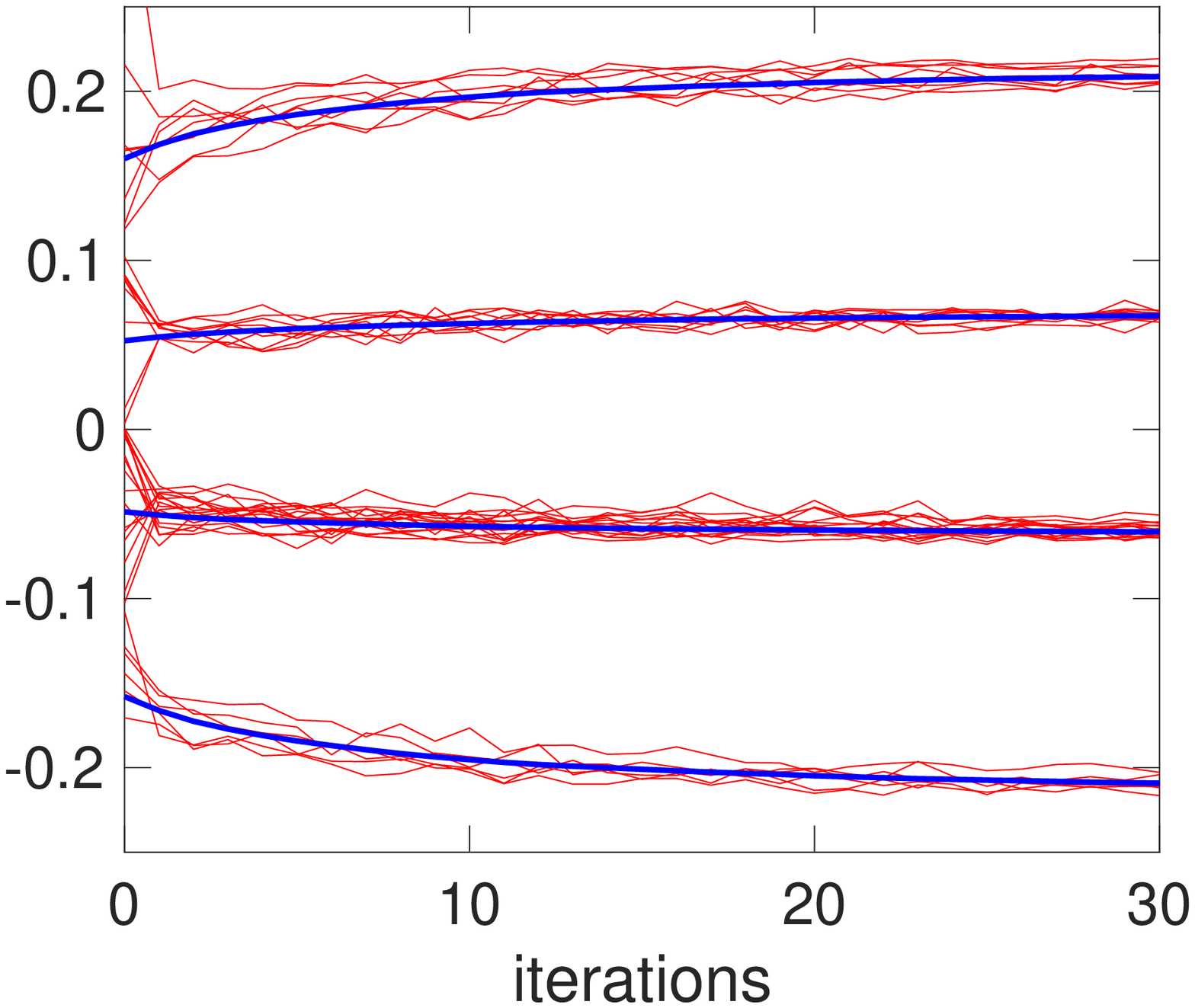} \\[-2ex]
    \psfrag{counts}[][]{density, layer 3}
    \psfrag{weights}[][B]{weights $w_i$}
    \includegraphics*[width=0.49\linewidth]{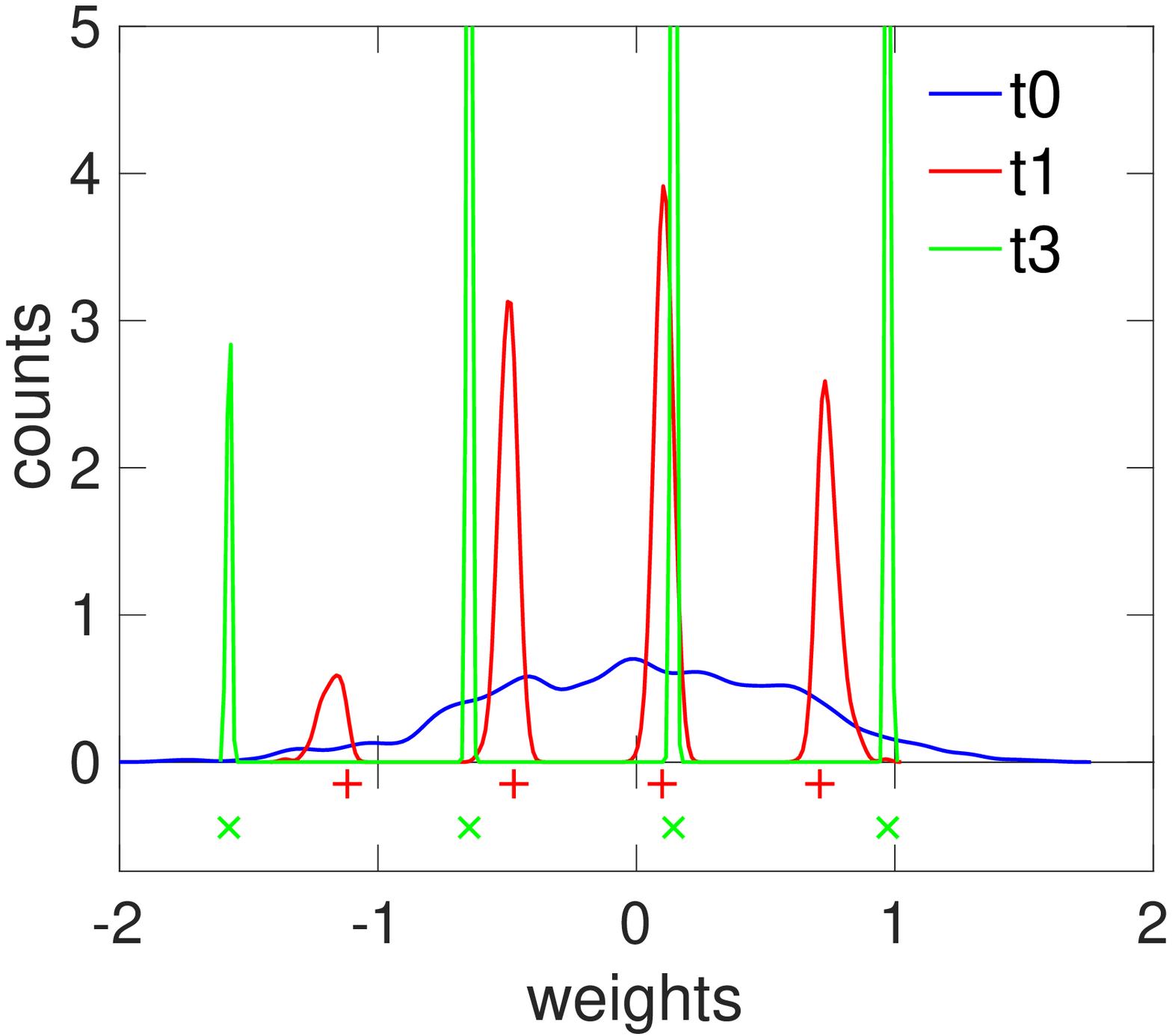} &
    \psfrag{iterations}[][B]{SGD iterations $\times$2k}
    \includegraphics*[width=0.49\linewidth]{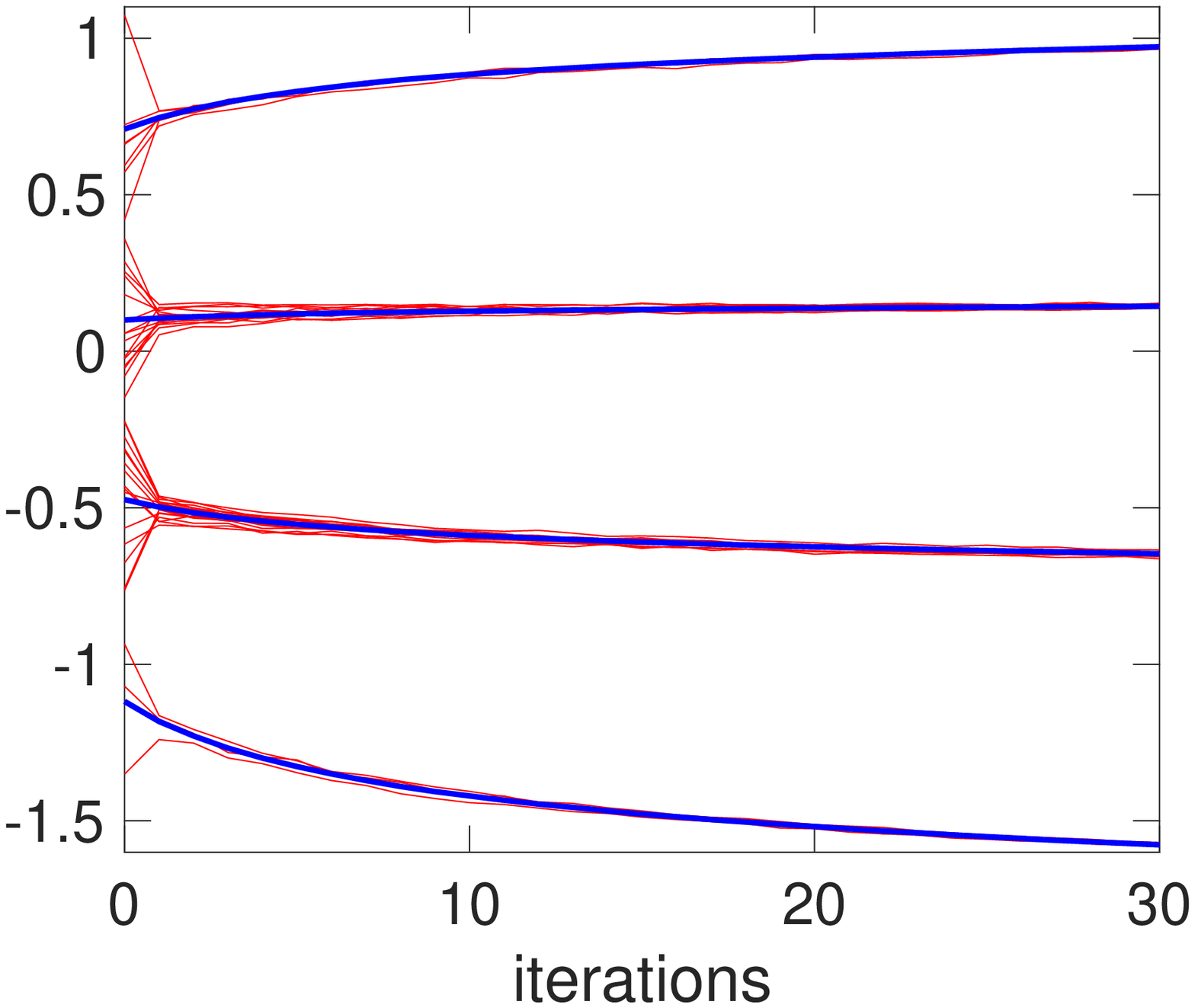}
  \end{tabular}
  \caption{Like fig.~\ref{f:weights-centroids-LC} but for iDC.}
  \vspace*{8ex} 
  \label{f:weights-centroids-iDC}
\end{figure}

\begin{figure}[t!]
  \centering
  \psfrag{codebook}[c][c]{codebook size $K$}
  \psfrag{idc}[l][l]{iDC}
  \psfrag{lc}[l][l]{LC}
  \psfrag{inf}[c][c]{$\infty$}
  \psfrag{lcmean}[l][l]{LC-mean}
  \psfrag{lcstd}[l][l]{LC-stdev}
  \psfrag{idcmean}[l][l]{iDC-mean}
  \psfrag{idcstd}[l][l]{iDC-stdev}
  \psfrag{ref}[l][l]{\small{reference}}
  \psfrag{centroids}[][]{centroids $c_k$}
  \psfrag{counts}[][]{{\tiny\caja[0.7]{c}{c}{reference \\ weight \\ density}}}
  \begin{tabular}{@{}r@{}r@{}r@{}}
    \multicolumn{1}{c@{}}{Layer 1} & \multicolumn{1}{c@{}}{Layer 2} & \multicolumn{1}{c@{}}{Layer 3} \\
    \includegraphics*[width=0.33\linewidth]{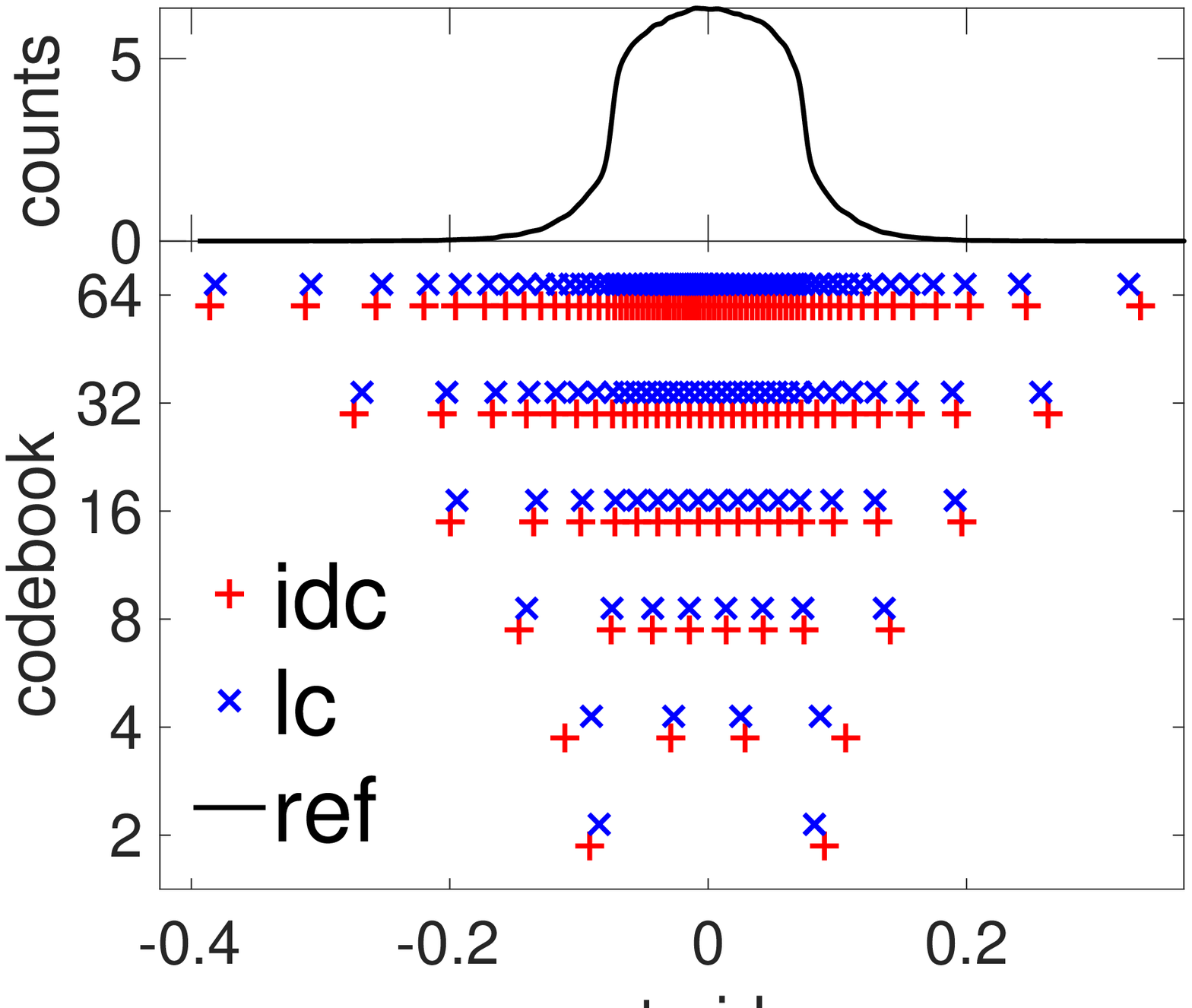} &
    \psfrag{codebook}{}
    \psfrag{counts}{}
    \includegraphics*[width=0.33\linewidth]{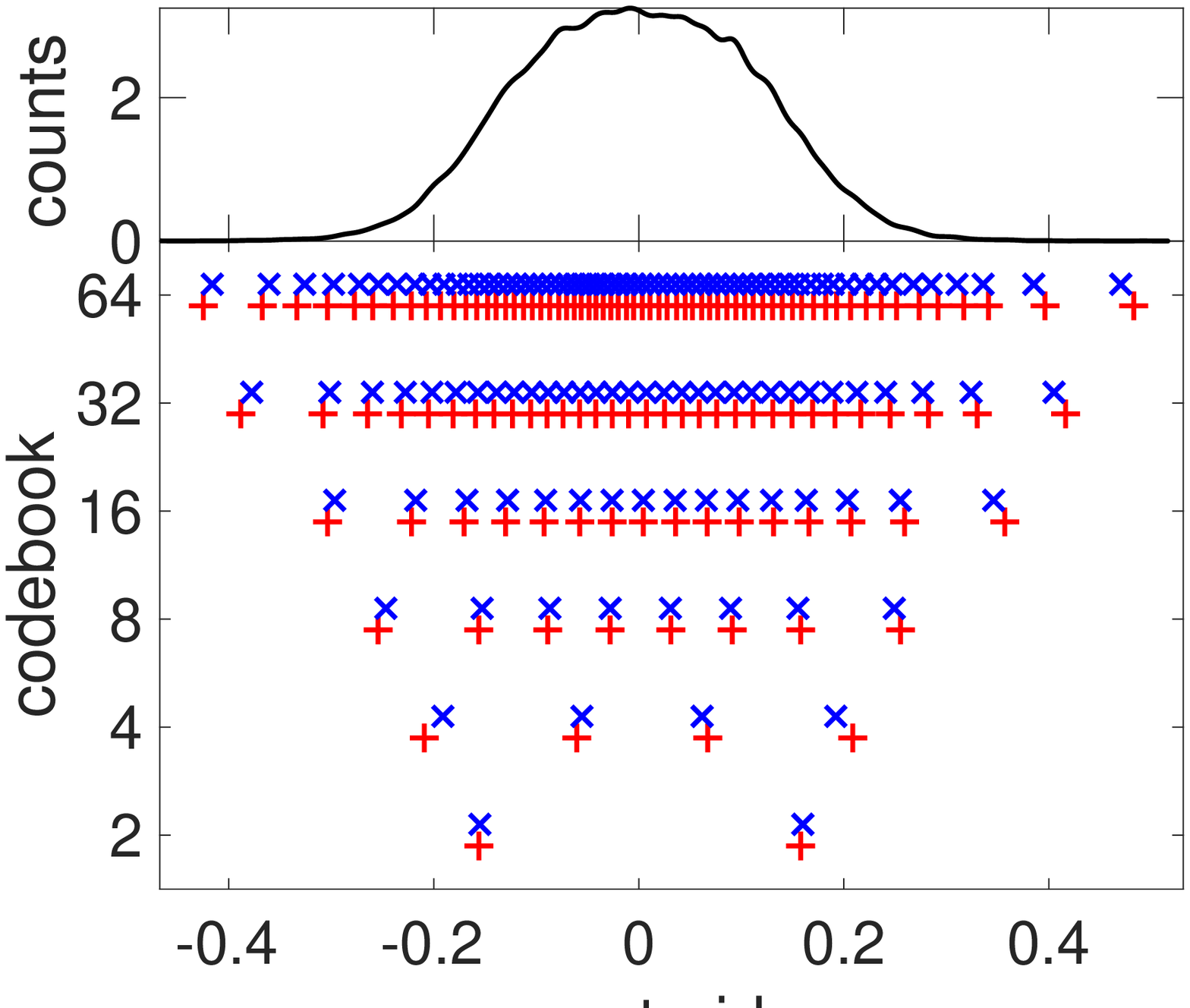} &
    \psfrag{codebook}{}
    \psfrag{counts}{}
    \includegraphics*[width=0.33\linewidth]{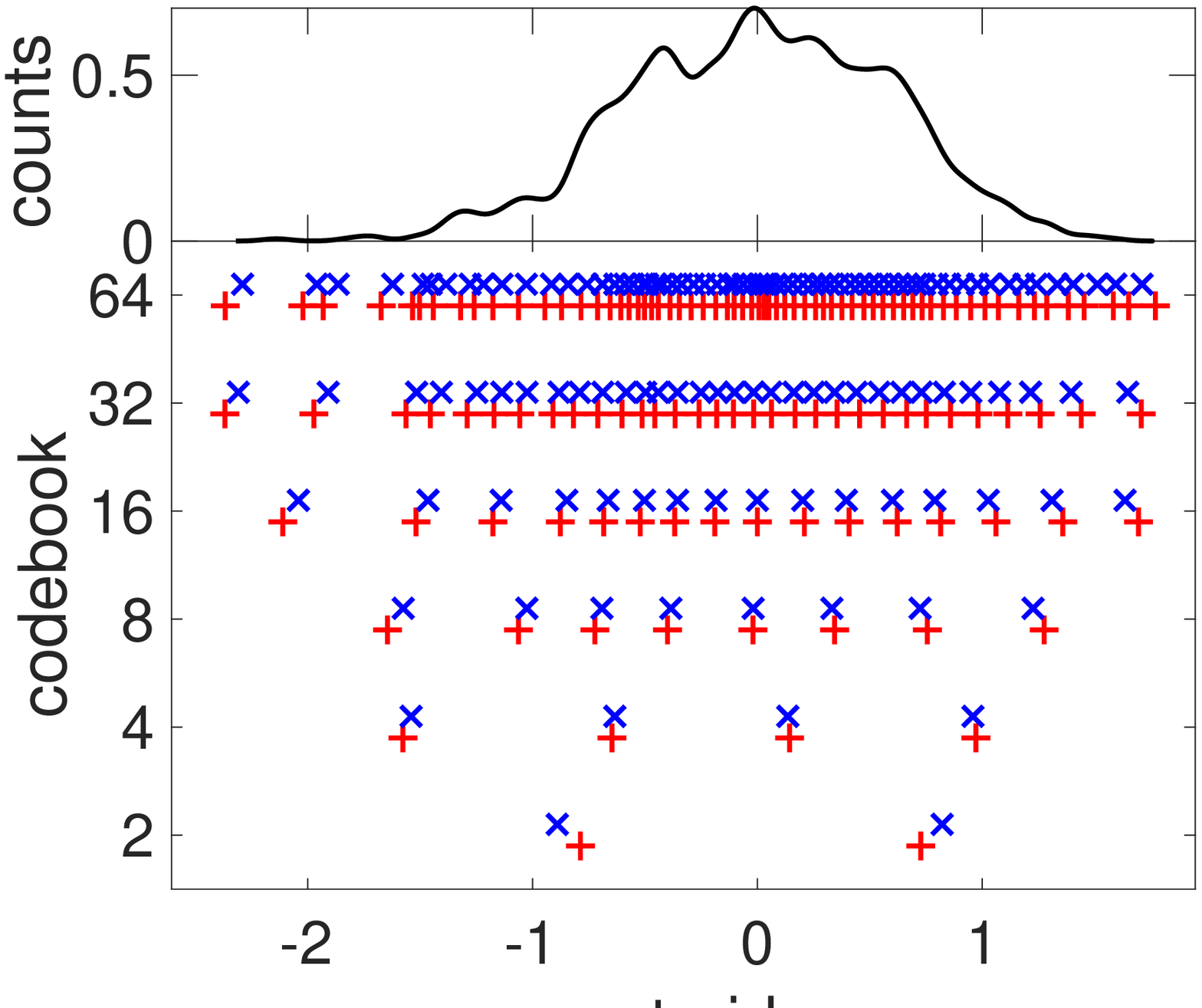} \\[2ex]
    \includegraphics*[width=0.33\linewidth]{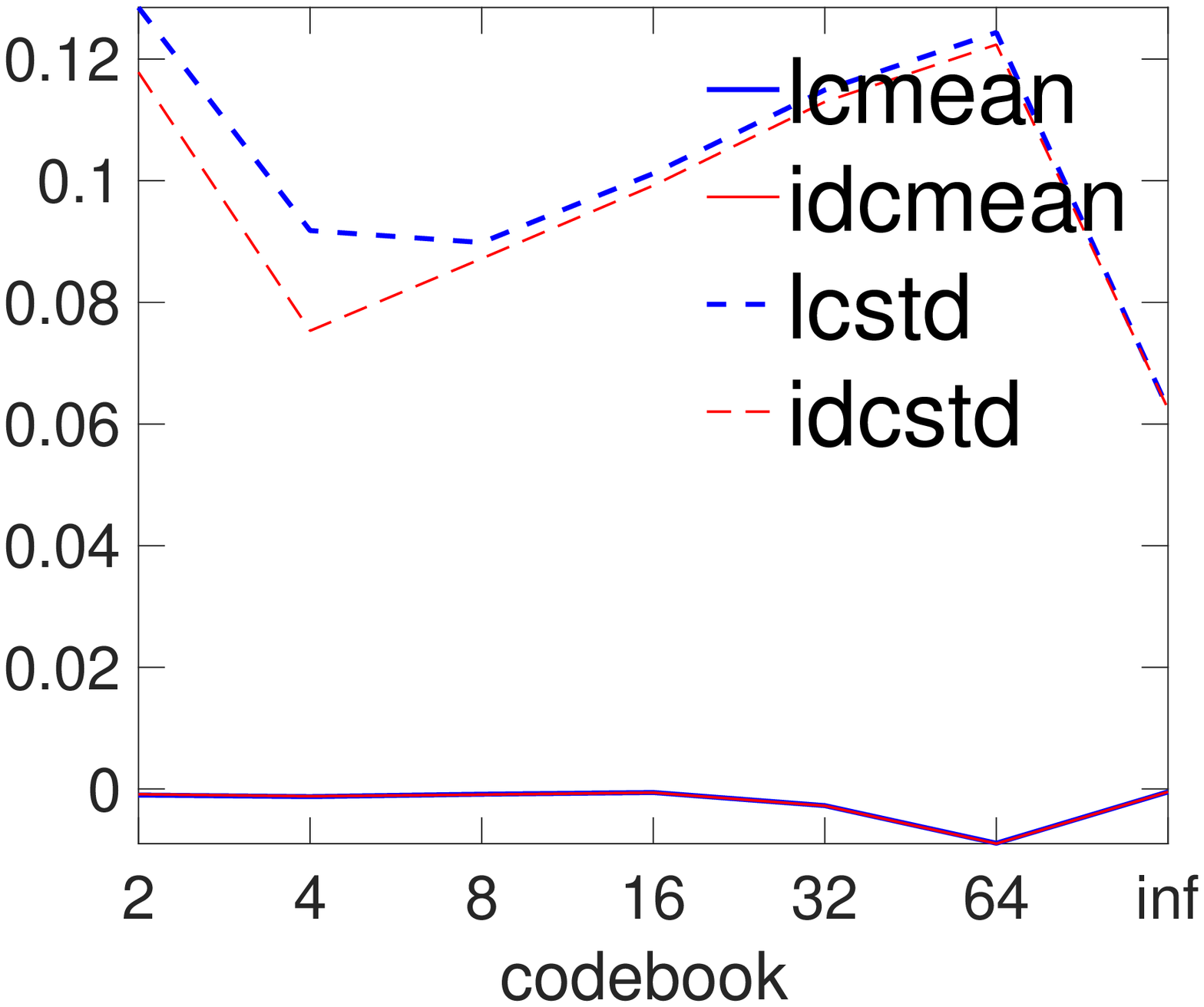} &
    \includegraphics*[width=0.33\linewidth]{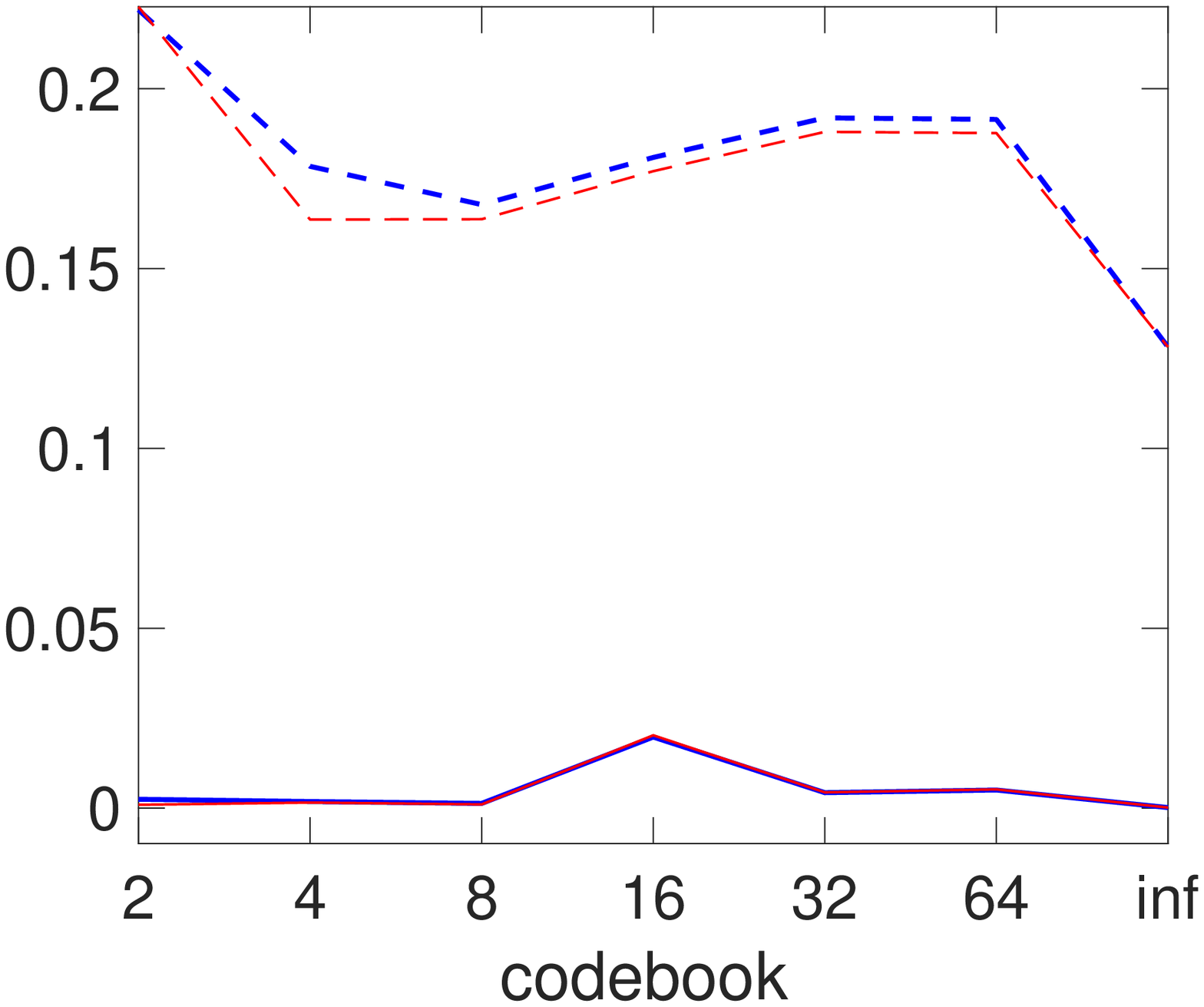} &
    \includegraphics*[width=0.32\linewidth]{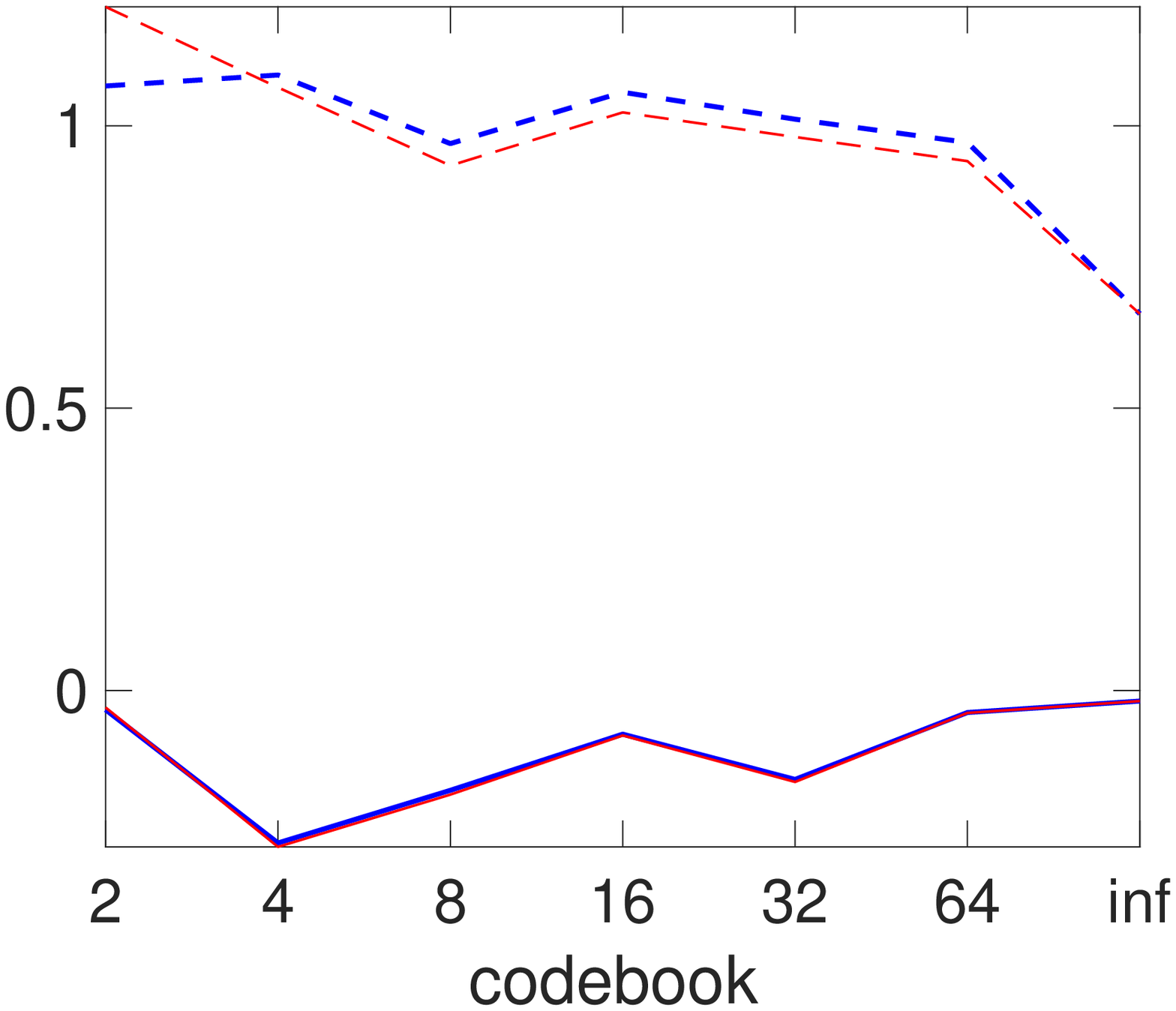}
  \end{tabular}
  \caption{Distribution of the centroids learnt by the LC and iDC algorithms, for layers 1--3 of LeNet300, for codebook sizes $K = 2$ to $64$. \emph{Top row}: actual centroid locations $c_k$, $k = 1,\dots,K$. The distribution of the weights of the reference net is shown at the top as a kernel density estimate. \emph{Bottom row}: mean and standard deviation of the centroid set $\calC = \{c_1,\dots,c_K\}$ (``$\infty$'' corresponds to no quantization, i.e., the mean and standard deviation of the reference net).}
  \label{f:centroids-K}
\end{figure}

Figs.~\ref{f:L1-neurons}--\ref{f:L2_L3-neurons} show the reference and final weights for LeNet300 compressed by the LC algorithm using a codebook of size $K = 2$ (a separate codebook for each of the 3 layers), which gives binary weights. For many of the weights, the sign of the quantized weight in LC equals the sign of the corresponding reference net weight. However, other weights change side during the optimization, specifically 5.04\%, 3.22\% and 1\% of the weights for layers 1, 2 and 3, respectively.

\begin{figure}[p]
  \centering
  \includegraphics*[width=0.83\linewidth]{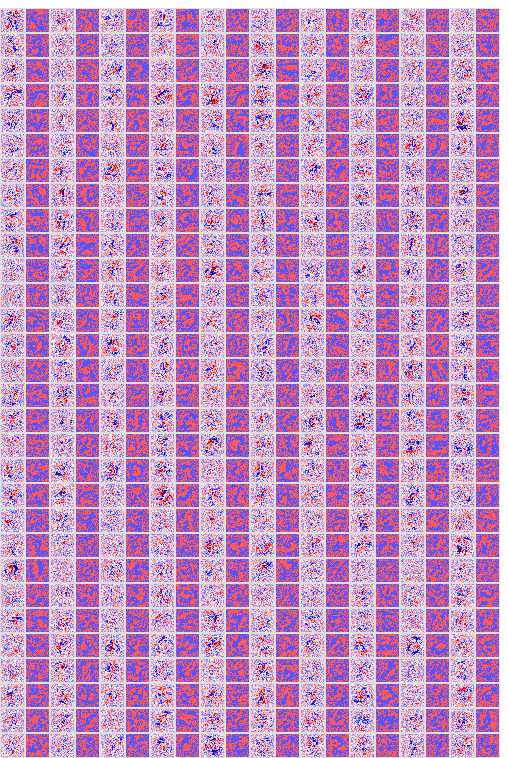} 
  \caption{Weight vector of the 300 neurons of layer 1 for LeNet300 for the reference net (left image of each horizontal pair, or odd-numbered columns) and for the net compressed with the LC algorithm using $K=2$ (right image of each pair, or even-numbered columns). We show each weight vector as a $28 \times 28$ image. All images have been globally normalized to the interval $[-3.5\sigma_1,3.5\sigma_1]$, where $\sigma_1$ is the standard deviation of the layer--1 reference weights (weights outside this interval are mapped to the respective end of the interval).}
  \label{f:L1-neurons}
\end{figure}

\begin{figure}[p]
  \centering
  \psfrag{codebook}[c][c]{$K$}
  \psfrag{layer2}[l][l]{Layer 2}
  \psfrag{layer3}[l][l]{Layer 3}
  \psfrag{idc}[l][l]{iDC}
  \psfrag{lc}[l][l]{LC}
  \psfrag{iterations}[][]{SGD iterations $\times$2k}
  \begin{tabular}{@{}c@{\hspace{1ex}}c@{\hspace{0.03\linewidth}}c@{\hspace{1ex}}c@{}}
    \multicolumn{2}{@{}c@{}}{\makebox[20ex]{\dotfill}Layer 2\makebox[20ex]{\dotfill}} & \multicolumn{2}{@{}c@{}}{\makebox[8ex]{\dotfill}Layer 3\makebox[8ex]{\dotfill}} \\
    reference & LC algorithm & reference & \makebox[0pt][c]{LC algorithm} \\
    \includegraphics*[width=1.10\linewidth,angle=90]{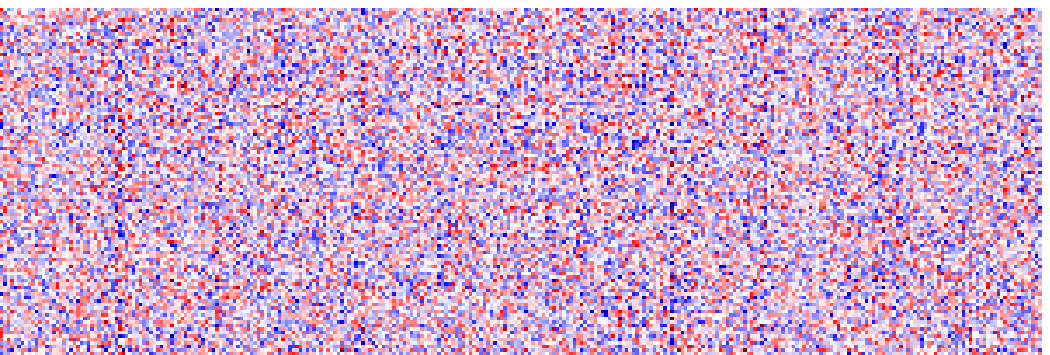} &
    \includegraphics*[width=1.10\linewidth,angle=90]{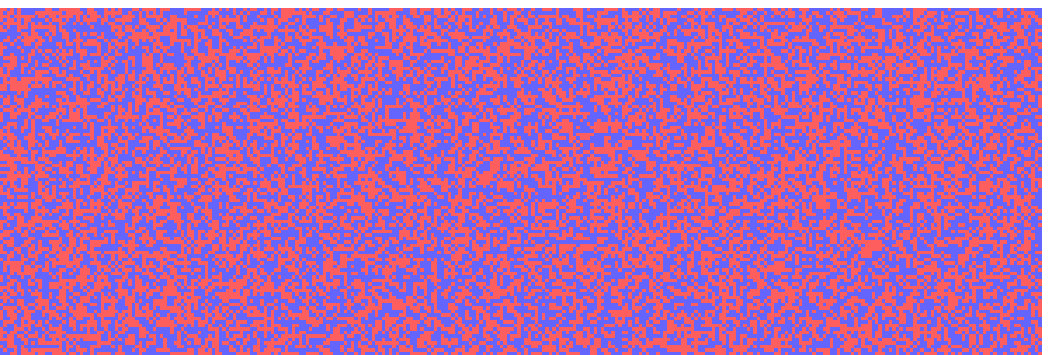} &
    \includegraphics*[width=1.10\linewidth,angle=90]{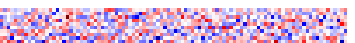} &
    \includegraphics*[width=1.10\linewidth,angle=90]{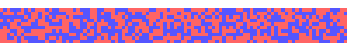} 
  \end{tabular}
  \caption{Weights of layers 2 and 3 of LeNet300 for the reference net and for the net compressed with the LC algorithm using $K=2$. We show layer 2 as a matrix of $300 \times 100$ and layer 3 as a matrix of $100 \times 10$. The normalization is as in fig.~\ref{f:L1-neurons}, i.e., to the interval $[-3.5\sigma_i,3.5\sigma_i]$, where $\sigma_i$ is the standard deviation of the $i$th layer reference weights, for $i = 2$ or $3$.}
  \label{f:L2_L3-neurons}
\end{figure}

\subsection{Quantizing a large deep net for classification on CIFAR10}

We randomly split the CIFAR10 training dataset (60k color images of $32 \times 32 \times 3$, 10 object classes) into training (90\%) and test (10\%) sets. We normalize the pixel colors to [0,1] and then subtract the mean. We train a 12-layer convolutional neural network inspired by the VGG net \citep{SimonyZisser15a} and described by \citet{Courbar_15a}, with the structure shown in table~\ref{t:CIFAR}. All convolutions are symmetrically padded with zero (padding size 1). The network has 14 million parameters ($P_1 =$ 14\,022\,016 weights and $P_0 =$ 3\,850 biases). Because of time considerations (each experiment takes 18 hours) we only report performance of LC with respect to the reference net. The reference net achieves 13.15\% error on the test set and a training loss of $1.1359 \cdot 10^{-7}$. Compressing with a $K=2$ codebook (compression ratio $\rho \approx \times$31.73), LC achieves a \emph{lower} test error of 13.03\% and a training loss of $1.6242 \cdot 10^{-5}$. 

\begin{table}[b!]
  \centering
  \begin{tabular}{@{}ll@{}}
    \toprule
    Layer & Connectivity \\
    \midrule
    Input & $3\times 32 \times 32$ image \\
    1 & \caja{t}{l}{convolutional, 128 (3$\times$3) filters (stride=1), \\ zero padding with size 1, total 131\,072 neurons, followed by ReLU} \\
    2 & \caja{t}{l}{convolutional, 128 (3$\times$3) filters (stride=1), \\ zero padding with size 1, total 131\,072 neurons, followed by ReLU} \\
    3 & \caja{t}{l}{max pool, $2 \times 2$ window (stride=2), \\ total 32\,768 neurons} \\
    4 & \caja{t}{l}{convolutional, 256 (3$\times$3) filters (stride=1), \\ zero padding with size 1, total 65\,536 neurons, followed by ReLU} \\
    5 & \caja{t}{l}{convolutional, 256 (3$\times$3) filters (stride=1), \\ zero padding with size 1, total 65\,536 neurons, followed by ReLU} \\
    6 & \caja{t}{l}{max pool, $2 \times 2$ window (stride=2), \\ total 16\,384 neurons} \\
    7 & \caja{t}{l}{convolutional, 512 (3$\times$3) filters (stride=1), \\ zero padding with size 1, total 32\,768 neurons, followed by ReLU} \\
    8 & \caja{t}{l}{convolutional, 512 (3$\times$3) filters (stride=1), \\ zero padding with size 1, total 32\,768 neurons, followed by ReLU} \\
    9 & \caja{t}{l}{max pool, $2 \times 2$ window (stride=2), \\ total 8\,192 neurons} \\
    10 & \caja{t}{l}{fully connected, 1024 neurons and dropout \\ with $p=0.5$, followed by ReLU} \\
    11 & \caja{t}{l}{fully connected, 1024 neurons and dropout \\ with $p=0.5$, followed by ReLU} \\
    \caja{t}{l}{12 \\ (output)} & \caja{t}{l}{fully connected, 10 neurons and dropout \\ with $p=0.5$, followed by softmax} \\
    \midrule
    \multicolumn{2}{c}{$P_1 =$ 14\,022\,016 weights, $P_0 =$ 3\,850 biases} \\
    \bottomrule
  \end{tabular}
  \caption{Structure of the 12-layer deep net trained on the CIFAR dataset.}
  \label{t:CIFAR}
\end{table}

\clearpage

\section{Conclusion}
\label{s:concl}

Neural net quantization involves minimizing the loss over weights taking discrete values, which makes the objective function nondifferentiable. We have reformulated this as optimizing the loss subject to quantization constraints, which is a mixed discrete-continuous problem, and given an iterative ``learning-compression'' (LC) algorithm to solve it. This alternates two steps: a learning step that optimizes the usual loss with a quadratic regularization term, which can be solved by SGD; and a compression step, independent of the loss and training set, which quantizes the current real-valued weights. The compression step takes the form of $k$-means if the codebook is adaptive, or of an optimal assignment and rescaling if the codebook is (partially) fixed, as for binarization. The algorithm is guaranteed to converge to a local optimum of the quantization problem, which is NP-complete. Experimentally, this LC algorithm beats previous approaches based on quantizing the reference net or on incorporating rounding into backpropagation. It often reaches the maximum possible compression (1 bit/weight) without significant loss degradation.

\subsubsection*{Acknowledgements}

Work supported by NSF award IIS--1423515, by a UC Merced Faculty Research Grant and by a Titan X Pascal GPU donated by the NVIDIA Corporation.

\appendix

\section{Theorems and proofs}

We prove several results concerning quantization with a fixed codebook (section~\ref{s:quant-fixed}) and give an alternative formulation for the case of binarization.

\subsection{Optimal quantization using a fixed codebook with or without scale}

We prove the optimal quantization results for binarization, ternarization and powers-of-two of section~\ref{s:quant-fixed}. The formulas for binarization and ternarization without scale follow from eq.~\eqref{e:quant-fixed-mapping-scalar}. The formulas for the powers-of-two and binarization and ternarization with scale are given in the theorems below. Define the sign function $\sgn{}$ as in eq.~\eqref{e:sgn}, and the floor function for $t \in \bbR$ as $\floor{t} = i$ if $i \le t < i+1$ and $i$ is integer.

\begin{thm}[powers of two]
  \label{th:pow2}
  Let $w \in \bbR$ and $C \ge 0$ integer. The solution $\theta^*$ of the problem
  \begin{equation}
    \label{e:pow2}
    \min_{\theta}{ E(\theta) = (w - \theta)^2 } \qquad \text{s.t.} \qquad \theta \in \{0,\pm 1,\pm 2^{-1},\dots,\pm 2^{-C}\}
  \end{equation}
  is $\theta^* = \alpha \, \sgn{w}$ where
  \begin{equation}
    \label{e:pow2-sol}
    \alpha =
    \begin{cases}
      0, & f > C+1 \\
      1, & f \le 0 \\
      2^{-C}, & f \in (C,C+1] \\
      2^{-\floor{f + \log_2{\frac{3}{2}}}}, & \text{otherwise}
    \end{cases}
  \end{equation}
  and $f = -\log_2{\abs{w}}$.
\end{thm}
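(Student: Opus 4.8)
The plan is to reduce the two-sided problem to a one-sided nearest-point problem and then read off the Voronoi cells in the logarithmic variable $f$. First I would exploit the symmetry of the codebook about the origin. Writing any nonzero feasible $\theta$ as $\theta = s\,\alpha$ with magnitude $\alpha \in \{0,2^{-C},2^{-C+1},\dots,2^{-1},1\}$ and sign $s \in \{-1,+1\}$, the expansion $(w - s\alpha)^2 = w^2 - 2s\alpha w + \alpha^2$ shows that for each fixed $\alpha > 0$ the objective is minimized by $s = \sgn{w}$, since this maximizes $s\alpha w = \alpha\abs{w}$. Hence $\theta^* = \alpha^* \sgn{w}$ (the case $w=0$ forces $\alpha^*=0$, with the sign irrelevant), and it remains to find $\alpha^* = \argmin_{\alpha}{(\abs{w} - \alpha)^2}$, i.e.\ the codebook magnitude closest to $t := \abs{w} \ge 0$.

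Second, I would identify the Voronoi cells of the sorted magnitude codebook $0 < 2^{-C} < \dots < 2^{-1} < 1$. Each cell is the interval delimited by the midpoints of adjacent entries: the midpoint between $0$ and $2^{-C}$ is $2^{-C-1}$; the midpoint between consecutive powers $2^{-j}$ and $2^{-(j-1)}$ is $\tfrac{3}{2}2^{-j} = 3\cdot 2^{-j-1}$; and since $1$ has no larger neighbor, every $t$ above the midpoint $\tfrac34$ (between $2^{-1}$ and $1$) rounds to $1$. Thus $\alpha^* = 0$ iff $t < 2^{-C-1}$, and $\alpha^* = 1$ iff $t \ge \tfrac34$, while for an interior power $2^{-j}$ with $1 \le j \le C$ one has $\alpha^* = 2^{-j}$ iff $t$ lies between the two flanking midpoints, namely $t \in (3\cdot 2^{-j-2},\, 3\cdot 2^{-j-1})$.

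Third, I would pass to $f = -\log_2{t}$, which is strictly decreasing, and convert each $t$-interval into an $f$-interval. The cell $t < 2^{-C-1}$ becomes $f > C+1$, giving $\alpha=0$; the cell $t \ge 1$ becomes $f \le 0$, giving $\alpha = 1$. For an interior index $j$, the bound $3\cdot 2^{-j-2} < t < 3\cdot 2^{-j-1}$ becomes $j+1-\log_2{3} < f < j+2-\log_2{3}$, on which $\floor{f + \log_2{\tfrac32}} = j$ (using $\log_2{\tfrac32} = \log_2{3} - 1$), so $2^{-\floor{f + \log_2{(3/2)}}} = 2^{-j}$ reproduces the ``otherwise'' formula; the sub-range $t \in (\tfrac34,1)$, i.e.\ $f \in (0,2-\log_2{3})$, is likewise covered by this formula since then the floor equals $0$. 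The one genuinely delicate point, which I expect to be the main obstacle, is the smallest nonzero entry $2^{-C}$: its lower Voronoi boundary is the midpoint with $0$ (namely $2^{-C-1}$) rather than with the nonexistent value $2^{-(C+1)}$, so its cell reaches down to $f = C+1$ instead of stopping where the generic floor formula would. On the sub-range $f \in (C,C+1]$ the generic formula returns the out-of-codebook value $2^{-(C+1)}$; the explicit case $\alpha = 2^{-C}$ for $f \in (C,C+1]$ in eq.~\eqref{e:pow2-sol} precisely patches this, and a short check of the half-open endpoints (ties broken toward the nonzero entry, consistent with the conventions for $\sgn{}$ and $\floor{\cdot}$) completes the verification that the four cases of~\eqref{e:pow2-sol} coincide with $\alpha^*$.
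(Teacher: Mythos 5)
Your proof is correct and follows essentially the same route as the paper's: reduce to the positive axis by sign symmetry, read off the Voronoi cells from midpoints of adjacent codebook entries, convert to the logarithmic variable $f$ where the floor formula emerges, and treat separately the cell of the smallest nonzero entry $2^{-C}$, whose lower boundary is the midpoint with $0$ rather than with a nonexistent smaller power. The only differences are cosmetic: you justify the sign reduction algebraically and pinpoint exactly why the separate $(C,C+1]$ case is needed, steps the paper's proof dismisses as obvious.
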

\begin{proof}
  The sign of $\theta^*$ is obviously equal to the sign of $w$, so consider $w > 0$ and call $f = -\log_2{w}$. The solution can be written as a partition of $\bbR^+$ in four intervals $[0,2^{-C-1})$, $[2^{-C-1},2^{-C})$, $[2^{-C},1)$ and $[1,\infty)$ for $w$, or equivalently $(C+1,\infty)$, $(C,C+1]$, $(0,C]$ and $(-\infty,0]$ for $f$. These intervals are optimally assigned to centroids $0$, $2^{-C}$, $2^{-\floor{f + \log_2{\frac{3}{2}}}}$ and $1$, respectively. The solutions for the first, second and fourth intervals are obvious. The solution for the third interval $w \in [2^{-C},1) \Leftrightarrow f \in (0,C]$ is as follows. The interval of $\bbR^+$ that is assigned to centroid $2^{-i}$ for $i \in \{0,1,\dots,C\}$ is $w \in (3 \cdot 2^{-i-2},3 \cdot 2^{-i-1}]$, given by the midpoints between centroids (and breaking ties as shown), or equivalently $f + \log_2{\frac{3}{2}} \in [i,i+1)$, and so $i = \floor{f + \log_2{\frac{3}{2}}}$.
\end{proof}

\begin{thm}[binarization with scale]
  \label{th:bin-scale}
  Let $w_1,\dots,w_P \in \bbR$. The solution $(a^*,\btheta^*)$ of the problem
  \begin{equation}
    \label{e:bin-scale}
    \min_{a,\btheta}{ E(a,\btheta) = \sum^P_{i=1}{ (w_i - a \, \theta_i)^2 } } \qquad \text{s.t.} \qquad a \in \bbR,\quad \theta_1,\dots,\theta_P \in \{-1,+1\}
  \end{equation}
  is
  \begin{equation}
    \label{e:bin-scale-sol}
    a^* = \frac{1}{P} \sum^P_{i=1}{ \abs{w_i} } \qquad \theta^*_i = \sgn{w_i} =
    \begin{cases}
      -1, & \text{if } w_i < 0 \\
      +1, & \text{if } w_i \ge 0
    \end{cases}
    \quad \text{for } i = 1,\dots,P.
  \end{equation}
\end{thm}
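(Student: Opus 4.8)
The plan is to decouple the joint minimization over the scale $a$ and the binary vector $\btheta$ into an inner optimization over $\btheta$ followed by a one-dimensional optimization over $a$. First I would observe that the objective depends on $a$ and $\btheta$ only through the products $a\theta_i$, each of which ranges over $\{-a,+a\}$; hence the substitution $(a,\btheta) \mapsto (-a,-\btheta)$ leaves $E$ unchanged, and without loss of generality I may restrict attention to $a \ge 0$.

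Next, for a fixed $a \ge 0$ I would minimize over each $\theta_i \in \{-1,+1\}$ independently. Since $\theta_i^2 = 1$, expanding gives $(w_i - a\theta_i)^2 = w_i^2 - 2a w_i \theta_i + a^2$, so minimizing this term amounts to maximizing $a w_i \theta_i$. For $a > 0$ this is achieved by $\theta_i = \sgn{w_i}$ (and for $a = 0$ the choice is irrelevant), crucially \emph{independently} of the magnitude of $a$. This independence is the heart of the argument: it lets me substitute the optimal assignment once and reduce the problem to a single scalar variable, rather than truly alternating between the two blocks.

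Substituting $\theta^*_i = \sgn{w_i}$ and using $w_i \sgn{w_i} = \abs{w_i}$ collapses the objective to
\begin{equation*}
  E(a) = \sum^P_{i=1}{ w_i^2 } - 2a \sum^P_{i=1}{ \abs{w_i} } + P a^2,
\end{equation*}
a convex quadratic in $a$ with positive leading coefficient $P$. Setting its derivative to zero yields the unique minimizer $a^* = \frac{1}{P} \sum^P_{i=1}{ \abs{w_i} }$, which is nonnegative and therefore consistent with the restriction $a \ge 0$. Combined with $\theta^*_i = \sgn{w_i}$, this gives exactly the claimed solution~\eqref{e:bin-scale-sol}.

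The only real obstacle is the bookkeeping around the sign of $a$: one must verify that restricting to $a \ge 0$ loses nothing, and that the optimal $\btheta$ does not depend on the positive magnitude of $a$, so that the nominally alternating structure collapses into a single closed-form substitution followed by elementary one-variable calculus. Once this decoupling is established, the remainder is routine.
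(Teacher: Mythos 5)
Your proof is correct and follows essentially the same route as the paper's: profile out $\btheta$ elementwise (each optimal $\theta_i$ being $\sgn{w_i}$), substitute into the objective, and minimize the resulting convex quadratic $\sum_i w_i^2 - 2a\sum_i \abs{w_i} + Pa^2$ over $a$. If anything, you are slightly more careful than the paper, which asserts the minimizer $\btheta^*(a) = \sgn{\w}$ ``for any $a \in \bbR$'' even though that holds only for $a \ge 0$; your symmetry reduction $(a,\btheta) \mapsto (-a,-\btheta)$ closes that small gap explicitly.
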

\begin{proof}
  For any $a \in \bbR$, the solution $\btheta^*(a)$ for \btheta\ results from minimizing $E(a,\btheta)$ over \btheta\ (breaking the ties as shown in~\eqref{e:bin-scale-sol}). Substituting this into the objective function:
  \begin{multline*}
    E(a,\btheta^*(a)) = \sum^P_{i=1}{ (w_i - a \, \sgn{w_i})^2 } = \sum^P_{i=1}{ \left( w^2_i + a^2 (\sgn{w_i})^2 - 2 a \, w_i \sgn{w_i} \right) } \\
    = \bigg( \sum^P_{i=1}{ w^2_i } \bigg) + P a^2 - 2 a \, \sum^P_{i=1}{\abs{w_i}}.
  \end{multline*}
  The result for $a^*$ follows from differentiating wrt $a$ and equating to zero.
\end{proof}

\begin{thm}[ternarization with scale]
  \label{th:ter-scale}
  Let $w_1,\dots,w_P \in \bbR$ and assume w.l.o.g.\ that $\abs{w_1} \ge \abs{w_2} \ge \dots \ge \abs{w_P}$. The solution $(a^*,\btheta^*)$ of the problem
  \begin{equation}
    \label{e:ter-scale}
    \min_{a,\btheta}{ E(a,\btheta) = \sum^P_{i=1}{ (w_i - a \, \theta_i)^2 } } \qquad \text{s.t.} \qquad a \in \bbR,\quad \theta_1,\dots,\theta_P \in \{-1,0,+1\}
  \end{equation}
  is
  \begin{equation}
    \label{e:ter-scale-sol}
    j^* = \argmax_{1 \le j \le P}{ \frac{1}{\sqrt{j}} \sum^j_{i=1}{ \abs{w_i} } } \qquad a^* = \frac{1}{j^*} \sum^{j^*}_{i=1}{ \abs{w_i} } \qquad \theta^*_i =
    \begin{cases}
      0, & \text{if } \abs{w_i} < a^*/2 \\
      \sgn{w_i}, & \text{if } \abs{w_i} \ge a^*/2
    \end{cases}
    \quad \text{for } i = 1,\dots,P.
  \end{equation}
\end{thm}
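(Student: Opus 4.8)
The plan is to reduce the joint discrete--continuous minimization to a one-dimensional search over a prefix length, exploiting the magnitude-sorting assumption. First I would note that the objective is invariant under $(a,\btheta) \mapsto (-a,-\btheta)$, so I may assume $a \ge 0$; the case $a=0$ collapses the codebook to $\{0\}$ and yields objective $\sum_i w_i^2$, which I will treat as the degenerate prefix length $j=0$. For $a>0$ the codebook is $\{-a,0,a\}$, and for each coordinate the optimal $\theta_i$ is the nearest-codebook assignment, which (breaking ties as stated) sets $\theta_i = 0$ when $|w_i| < a/2$ and $\theta_i = \sgn{w_i}$ when $|w_i| \ge a/2$. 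With the signs thus fixed, writing $(w_i - a\,\sgn{w_i})^2 = (|w_i|-a)^2$ lets me express the objective in terms of the active set $S = \{i : \theta_i \ne 0\}$ as $E = \sum_{i\in S}(|w_i|-a)^2 + \sum_{i\notin S} w_i^2$.

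The key structural step, and the one I expect to be the main obstacle, is showing that an optimal active set is a prefix $\{1,\dots,j\}$ of the magnitude-sorted weights. I would establish this by an exchange argument: if some active index $i$ has smaller magnitude than some inactive index $i'$, then swapping their memberships changes the objective by $2a(|w_i| - |w_{i'}|) < 0$, strictly decreasing it. Hence for each fixed $a>0$ the best active set of cardinality $j$ is exactly $\{1,\dots,j\}$, and the problem collapses to jointly choosing the prefix length $j \in \{0,1,\dots,P\}$ and the scale $a$.

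For each fixed prefix length $j$, minimizing $E(a) = \sum_{i=1}^j(|w_i|-a)^2 + \sum_{i>j} w_i^2$ over $a$ is an elementary least-squares computation: differentiating and equating to zero gives $a^* = \frac{1}{j}\sum_{i=1}^j |w_i|$, matching the claimed scale. Substituting this back and simplifying $\sum_{i=1}^j(|w_i|-a^*)^2 = \sum_{i=1}^j w_i^2 - j(a^*)^2$ yields $E = \sum_{i=1}^P w_i^2 - \big(\frac{1}{\sqrt j}\sum_{i=1}^j |w_i|\big)^2$. Since the first term is independent of $j$, minimizing $E$ over $j$ is equivalent to maximizing $\frac{1}{\sqrt j}\sum_{i=1}^j |w_i|$, which is precisely the definition of $j^*$ in~\eqref{e:ter-scale-sol}. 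Finally I would verify that the assignment rule stated in the theorem coincides with the nearest-codebook rule at the resulting $a^*$, closing the argument; the only care needed is with ties in the magnitude sorting and in the $\argmax$, which are broken arbitrarily and do not affect the optimal objective value.
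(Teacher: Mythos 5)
Up to its final sentence, your argument is sound and runs close to the paper's own proof: the optimal assignment for fixed $a$ is the nearest-codebook rule, the active set may be taken to be a prefix of the magnitude-sorted weights (you get this via an exchange argument, which checks out — the swap changes the objective by $2a(\abs{w_i}-\abs{w_{i'}})<0$ — while the paper gets it directly from the threshold structure), the optimal scale for prefix length $j$ is the mean $a_j=\frac{1}{j}\sum_{i=1}^j\abs{w_i}$, and back-substitution reduces the outer problem to maximizing $\frac{1}{\sqrt{j}}\sum_{i=1}^j\abs{w_i}$. All of that is correct, and your decomposition (enumerate all active sets, reduce to prefixes by exchange) even avoids a subtlety the paper has to live with, since the paper's candidate enumeration presupposes the active set is generated by thresholding at $a/2$.

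The genuine gap is your closing step, which you dismiss as tie-breaking. What you have proved optimal is the pair ($a^*=a_{j^*}$, \emph{prefix} assignment: $\theta_i=\sgn{w_i}$ for $i\le j^*$, $0$ for $i>j^*$). The theorem states the assignment in \emph{threshold} form: $\theta^*_i=\sgn{w_i}$ iff $\abs{w_i}\ge a^*/2$. Identifying the two requires showing $\abs{w_{j^*}}\ge a_{j^*}/2>\abs{w_{j^*+1}}$, i.e., that the optimal prefix is self-consistent with thresholding at half its own mean. This is not a tie issue, and it is false for a generic prefix (take weights $10,1$: for $j=2$, $a_2=5.5$ and thresholding at $2.75$ returns $\{1\}$, not $\{1,2\}$); it holds only because $j^*$ is the argmax, and proving it is precisely where the paper works hardest, deriving $\abs{w_{j^*+1}}\le a_{j^*}\bigl(\sqrt{j^*(j^*+1)}-j^*\bigr)$ and $\abs{w_{j^*}}\ge a_{j^*}\bigl(j^*-\sqrt{j^*(j^*-1)}\bigr)$ from the argmax property and then using $\sqrt{j(j+1)}-j<\frac{1}{2}<j-\sqrt{j(j-1)}$. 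Fortunately your framework admits a one-line repair that bypasses these inequalities: since your enumeration shows $a^*$ is an optimal scale, and your first step shows the threshold rule minimizes $E(a^*,\cdot)$ over $\btheta$, the theorem's pair satisfies $E(a^*,\btheta^*)=\min_{\btheta}E(a^*,\btheta)\le E(a^*,\btheta_{\mathrm{prefix}})=\min_{a,\btheta}E$, hence is optimal. Either state that observation explicitly or reproduce the paper's inequalities; as written, expecting only ties to matter would leave the proof incomplete.
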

\begin{proof}
  For any $a \in \bbR$, the solution $\btheta^*(a)$ for \btheta\ results from minimizing $E(a,\btheta)$ over \btheta\ (breaking the ties as shown in~\eqref{e:ter-scale-sol}). Substituting this into the objective function:
  \begin{equation*}
    E(a,\btheta^*(a)) = \sum_{i \in \overline{\calS}}{ w^2_i } + \sum_{i \in \calS}{ (w_i - a \, \sgn{w_i})^2 } = \bigg( \sum^P_{i=1}{ w^2_i } \bigg) + \abs{\calS} a^2 - 2 a \, \sum_{i \in \calS}{ \abs{w_i} }
  \end{equation*}
  where $\calS = \{i \in \{1,\dots,P\}\mathpunct{:}\ \abs{w_i} \ge a\}$, $\overline{\calS} = \{1,\dots,P\} \setminus \calS$ and $\abs{\calS}$ is the cardinality of \calS. Differentiating wrt $a$ keeping \calS\ fixed and equating to zero yields $a = \frac{1}{\abs{\calS}} \sum_{i \in \calS}{ \abs{w_i} }$. It only remains to find the set $\calS^*$ that is consistent with the previous two conditions on \btheta\ and $a$. Since the $w_i$ are sorted in decreasing magnitude, there are $P$ possible sets that \calS\ can be and they are of the form $\calS_i = \{w_1,\dots,w_i\} = \{w_j\mathpunct{:}\ \abs{w_j} \ge w_i\}$ for $i = 1,\dots,P$, and $\calS^* = \calS_{j^*}$ is such that the objective function is maximal. Hence, calling $a_j = \frac{1}{\abs{\calS_j}} \sum_{i \in \calS_j}{ \abs{w_i} }$ and noting that
  \begin{equation*}
    E(a_j,\btheta^*(a_j)) = \bigg( \sum^P_{i=1}{ w^2_i } \bigg) - \abs{\calS_j} a^2_j = \bigg( \sum^P_{i=1}{ w^2_i } \bigg) - j \, a^2_j
  \end{equation*}
  we have
  \begin{equation*}
    j^* = \argmin_{1 \le j \le P}{ E(a_j,\btheta^*(a_j)) } = \argmax_{1 \le j \le P}{ j \, a^2_j } = \argmax_{1 \le j \le P}{ \sqrt{j} \, a_j } = \argmax_{1 \le j \le P}{ \frac{1}{\sqrt{j}} \sum^j_{i=1}{ \abs{w_i} } }.
  \end{equation*}
  Finally, let us prove that the set $\calS_{j^*}$ is consistent with $a_{j^*}$ and $\btheta(a_{j^*})$, i.e., that $\calS_{j^*} = \{i \in \{1,\dots,P\}\mathpunct{:}\ \abs{w_i} \ge \frac{1}{2} a_{j^*}\}$. Since the $w_i$ are sorted in decreasing magnitude, it suffices to prove that $\abs{w_{j^*}} > \frac{1}{2} a_{j^*} > \abs{w_{j^*+1}}$. Since $j^* = \argmax_{1 \le j \le P}{ \sqrt{j} \, a_j }$, we have (in the rest of the proof we write $j$ instead of $j^*$ to avoid clutter):
  \begin{equation*}
    \sqrt{j} \, a_j \ge \sqrt{j+1} \, a_{j+1} = \sqrt{j+1} \, \bigg( \frac{1}{j+1} \sum^{j+1}_{i=1}{ \abs{w_i} } \bigg) = \frac{1}{\sqrt{j+1}} \, ( \abs{w_{j+1}} + j \, a_j ) \Leftrightarrow \abs{w_{j+1}} \le a_j \left( \sqrt{j(j+1)} - j \right).
  \end{equation*}
  Now $\sqrt{j(j+1)} - j < \frac{1}{2}$ (since $j(j+1) < (j + \frac{1}{2})^2 = j^2 + \frac{1}{4} + j$ $\forall j$), hence $\abs{w_{j+1}} < \frac{1}{2} a_j$. Likewise:
  \begin{equation*}
    \sqrt{j} \, a_j \ge \sqrt{j-1} \, a_{j-1} = \frac{1}{\sqrt{j+1}} \, ( j \, a_j - \abs{w_j} ) \Leftrightarrow \abs{w_j} \ge a_j \left( j - \sqrt{j(j-1)} \right).
  \end{equation*}
  Now $j - \sqrt{j(j-1)} > \frac{1}{2}$ (since $j(j-1) < (j - \frac{1}{2})^2 = j^2 + \frac{1}{4} - j$ $\forall j$), hence $\abs{w_j} > \frac{1}{2} a_j$.
\end{proof}

\subsection{An equivalent formulation for binarization}

We show that, in the binarization case (with or without scale), our constrained optimization formulation ``$\min_{\w,\bTheta}{ L(\w) } \text{ s.t.\ } \w = \bDelta(\bTheta)$'' of eq.~\eqref{e:compression-problem} can be written equivalently without using assignment variables \Z, as follows:
\begin{align*}
  \text{Binarization: } & \min_{\w,\b}{ L(\w) } \quad \text{s.t.} \quad \w = \b,\ \b \in \{-1,+1\}^P \\
  \text{Binarization with scale: } & \min_{\w,\b,a}{ L(\w) } \quad \text{s.t.} \quad \w = a \, \b,\ \b \in \{-1,+1\}^P,\ a > 0.
\end{align*}
We can write an augmented-Lagrangian function as (for the quadratic-penalty function, set $\blambda = \0$):
\begin{align*}
  \calL_A(\w,\b,\blambda;\mu) &= L(\w) - \blambda^T (\w - \b) + \frac{\mu}{2} \norm{\w - \b}^2 \quad \text{s.t.} \quad \b \in \{-1,+1\}^P \\
  \calL_A(\w,\b,a,\blambda;\mu) &= L(\w) - \blambda^T (\w - a \, \b) + \frac{\mu}{2} \norm{\w - a \, \b}^2 \quad \text{s.t.} \quad \b \in \{-1,+1\}^P,\ a > 0.
\end{align*}
And applying alternating optimization gives the steps:
\begin{itemize}
\item L step: $\min_{\w}{ L(\w) + \frac{\mu}{2} \norm{\w - \b - \smash{\frac{1}{\mu}} \blambda}^2 }$ or $\min_{\w}{ L(\w) + \frac{\mu}{2} \norm{\w - a \, \b - \smash{\frac{1}{\mu}} \blambda}^2 }$, respectively.
\item C step: $\b = \sgn{\w}$ or $\b = a \, \sgn{\w}$ and $a = \frac{1}{P} \sum^P_{i=1}{\abs{w_i}}$, respectively, an elementwise binarization and global rescaling.
\end{itemize}
This is identical to our LC algorithm.


\end{document}